\documentclass{article} 
\PassOptionsToPackage{semicolon}{natbib}
\usepackage{iclr2023_conference,times}
\iclrfinalcopy

\usepackage[utf8]{inputenc} 
\usepackage[T1]{fontenc}    

\usepackage{titletoc}
\usepackage[toc, page, header]{appendix} 

\usepackage[colorlinks=true, linkcolor=blue, citecolor=blue,urlcolor=black]{hyperref}
\usepackage{url}            
\usepackage{booktabs}       
\usepackage{amsfonts}       
\usepackage{nicefrac}       
\usepackage{microtype}      
\usepackage{xcolor}         
\usepackage{footnote}
\usepackage{multirow}
\usepackage{pifont}
\newcommand{\cmark}{\ding{51}}%
\newcommand{\xmark}{\ding{55}}%

\title{Variance-Aware Sparse Linear Bandits
}

%

\author{%
  Yan Dai \\
  IIIS, Tsinghua University\\
  \texttt{yan-dai20@mails.tsinghua.edu.cn} \\
  \AND
  Ruosong Wang \\
  University of Washington\\
  \texttt{ruosongw@cs.washington.edu}\\
   \And
   Simon S. Du \\
   University of Washington \\
   \texttt{ssdu@cs.washington.edu} \\
}

\usepackage{amsmath}
\usepackage{amsthm}
\usepackage{amssymb}
\usepackage{mathtools}

\usepackage{algorithm}
\usepackage[noend]{algpseudocode}
\usepackage{cleveref}
\usepackage{comment}
\usepackage{bbm}
\usepackage{enumitem}

\algrenewcommand\algorithmicrequire{\textbf{Input:}}
\algrenewcommand\algorithmicensure{\textbf{Output:}}
\makeatletter
\algnewcommand{\LineComment}[1]{\Statex \hskip\ALG@thistlm #1}
\makeatother

\newtheorem{theorem}{Theorem}
\newtheorem{lemma}[theorem]{Lemma}
\newtheorem{proposition}[theorem]{Proposition}

\newtheorem{corollary}[theorem]{Corollary}
\newtheorem{definition}[theorem]{Definition}

\newcommand{\daiyan}[1]{{\color{violet} [Yan: #1]}}

\renewcommand{\daiyan}[1]{}

\newcommand{\E}{\operatornamewithlimits{\mathbb{E}}}
\newcommand{\trans}{{\mathsf{T}}}

\renewcommand{\O}{\operatorname{\mathcal{O}}}
\newcommand{\Otil}{\operatorname{\tilde{\mathcal O}}}

\Crefname{assumption}{Assumption}{Assumptions}
\Crefformat{equation}{Eq. #2(#1)#3}
\Crefrangeformat{equation}{Eqs. #3(#1)#4 to #5(#2)#6}
\Crefmultiformat{equation}{Eqs. #2(#1)#3}{ and #2(#1)#3}{, #2(#1)#3}{ and #2(#1)#3}
\Crefrangemultiformat{equation}{Eqs. #3(#1)#4 to #5(#2)#6}{ and #3(#1)#4 to #5(#2)#6}{, #3(#1)#4 to #5(#2)#6}{ and #3(#1)#4 to #5(#2)#6}

\renewcommand{\bar}{\overline}
\renewcommand{\hat}{\widehat}
\renewcommand{\tilde}{\widetilde}
\newcommand{\polylog}{\operatorname{\mathrm{polylog}}}

\allowdisplaybreaks


\begin{document}
\normalsize

\maketitle

\begin{abstract}
It is well-known that for sparse linear bandits, when ignoring the dependency on sparsity which is much smaller than the ambient dimension, the worst-case minimax regret is $\widetilde{\Theta}\left(\sqrt{dT}\right)$ where $d$ is the ambient dimension and $T$ is the number of rounds.
On the other hand, in the benign setting where there is no noise and the action set is the unit sphere, one can use divide-and-conquer to achieve $\widetilde{\mathcal O}(1)$ regret, which is (nearly) independent of $d$ and $T$.
In this paper, we present the first variance-aware regret guarantee for sparse linear bandits:  $\widetilde{\mathcal O}\left(\sqrt{d\sum_{t=1}^T \sigma_t^2} + 1\right)$, where $\sigma_t^2$ is the variance of the noise at the $t$-th round.
This bound naturally interpolates the regret bounds for the worst-case constant-variance regime (i.e., $\sigma_t \equiv \Omega(1)$) and the benign deterministic regimes (i.e., $\sigma_t \equiv 0$).
To achieve this variance-aware regret guarantee, we develop a general framework that converts any variance-aware linear bandit algorithm to a variance-aware algorithm for sparse linear bandits in a ``black-box'' manner. Specifically, we take two recent algorithms as black boxes to illustrate that the claimed bounds indeed hold, where the first algorithm can handle unknown-variance cases and the second one is more efficient.

  
\end{abstract}

\section{Introduction}\label{sec:introduction}

This paper studies the sparse linear stochastic bandit problem, which is a special case of linear stochastic bandits. 
In linear bandits~\citep{dani2008stochastic}, the agent is facing a sequential decision-making problem lasting for $T$ rounds.
For the $t$-th round,  the agent chooses an action $ x_t\in \mathcal X \subseteq \mathbb{R}^d$, where $\mathcal X$ is an action set, and receives a noisy reward $r_t=\langle {\theta^\ast}, x_t\rangle+\eta_t$ where ${\theta^\ast}\in \mathcal X$ is the (hidden) parameter of the game and $\eta_t$ is random zero-mean noise.
The goal of the agent is to minimize her \textit{regret} $\mathcal R_T$, that is, the difference between her cumulative reward $\sum_{t=1}^T \langle \theta^\ast, x_t\rangle$ and $\max_{ x \in \mathcal{X}}\sum_{t=1}^T\langle \theta^\ast, x\rangle$ (check \Cref{eq:regret} for a definition).
\citet{dani2008stochastic} proved that the minimax optimal regret for linear bandits is $\tilde{\Theta}(d\sqrt T)$ when the noises are independent Gaussian random variables with means $0$ and variances $1$ and both $\theta^*$ and the actions $x_t$ lie in the unit sphere in $\mathbb{R}^d$.%
\footnote{Throughout the paper, we will use the notations $\tilde{\mathcal O}(\cdot)$ and $\tilde{\Theta}(\cdot)$  to hide $\log T,\log d,\log s$ (where $s$ is the sparsity parameter, which will be introduced later) and $\log \log \frac 1\delta$ factors (where $\delta$ is the failure probability).\label{footnote:Otilde}}

In real-world applications such as recommendation systems, only a few features may be relevant despite a large candidate feature space. In other words, the high-dimensional linear regime may actually allow a low-dimensional structure. As a result, if we still use the linear bandit model, we will always suffer $\Omega(d\sqrt T)$ regret no matter how many features are useful. Motivated by this, the sparse linear stochastic bandit problem was introduced \citep{abbasi2012online,carpentier2012bandit}. 
This problem has an additional constraint that the hidden parameter, ${\theta^\ast}$, is sparse, i.e., $\lVert {\theta^\ast}\rVert_0\le s$ for some $s \ll d$.
However, the agent has \textit{no} prior knowledge about $s$ and thus the interaction protocol is exactly the same as that of linear bandits. The minimax optimal regret for sparse linear bandits is $\tilde{\Theta}(\sqrt{sdT})$ \citep{abbasi2012online,antos09sparse}.\footnote{
\citet{carpentier2012bandit} and \citet{lattimore2015linear} obtained an $\O(s\sqrt T)$ regret bound under different models. The former one assumed a component-wise noise model, while the latter one assumed a $\lVert \theta^\ast\rVert_1\le 1$ ground-truth as well as a $\lVert x_t\rVert_\infty\le 1$ action space.
See \Cref{sec:appendix related work} for more discussions on this.}
This bound bypasses the $\Omega(d\sqrt{T})$ lower bound for linear bandits as we always have $s=\lVert {\theta^\ast}\rVert_0\le d$ and the agent does not have access to $s$ either (though a few previous works assumed a known $s$).

However, both the $\Otil(d\sqrt{T})$ and the $\Otil(\sqrt{sdT})$ bounds are the \textit{worst-case} regret bounds and sometime are too pessimistic especially when $d$ is large. 
On the other hand, many problems with delicate structures permit a regret bound much smaller than the worst-case bound. 
The structure this paper focuses on is the magnitude of the noise.
Consider the following motivating example.

\textbf{Motivating Example (Deterministic Sparse Linear Bandits).} Consider the case where the action set is the unit sphere $\mathcal{X} = \mathbb S^{d-1}$, and  there is no noise, i.e., the feedback is $r_t=\langle {\theta^\ast}, x_t\rangle$ for each round $t\in [T]$.
In this case, one can identify all non-zero entries of $ \theta^*$ coordinates in ${\mathcal O}(s\log d)$ steps with high probability via a divide-and-conquer algorithm, and thus yield a \emph{dimension-free} regret $\Otil(s)$ (see \Cref{sec:appendix divide and conquer} for more details about this).%
\footnote{\label{footnote:action set}We also remark that some assumptions on the action is needed. For example, if every action can only query one coordinate (each action corresponds to one vector of the standard basis) then an $\Omega\left(d\right)$ regret lower bound is unavoidable. Hence, in this paper, we only consider the benign case that action set is the unit sphere.}
However, this divide-and-conquer algorithm is specific for deterministic sparse linear bandit problems and does not work for noisy models.
Henceforth, we study the following natural question:\begin{center}
\textbf{\textit{Can we design an algorithm whose regret adapts to the noise level such that the regret interpolates the  $\sqrt{dT}$-type bound in the worst case and the dimension-free bound in the deterministic case?
}}
\end{center}

Before introducing our results, we would like to mention that there are recent works that studied the noise-adaptivity in linear bandits~\citep{zhou2021nearly,zhang2021improved,kim2021improved}.
They gave \emph{variance-aware} regret bounds of the form $\widetilde{\mathcal O}\left (\text{poly}(d)\sqrt{\sum_{t=1}^T \sigma_t^2} + \text{poly}(d)\right )$ where $\sigma_t^2$ is the (conditional) variance of the noise $\eta_t$. This bound reduces to the standard $\widetilde{\mathcal{O}}(\text{poly}(d)\sqrt{T})$ bound in the worst-case when $\sigma_t=\Omega(1)$,  and to a constant-type regret $\Otil(\text{poly}(d))$ that is independent of $T$.
However, compared with the linear bandits setting, the variance-aware bound for sparse linear bandits is more significant because it reduces to a \textit{dimension-free} bound in the noiseless setting. Despite this, to our knowledge, no variance-aware regret bounds exist for sparse linear bandits.

\subsection{Our Contributions}
This paper gives the first set of variance-aware regret bounds for sparse linear bandits.
We design a general framework, \texttt{VASLB}, to reduce variance-aware sparse linear bandits to variance-aware linear bandits with little overhead in regret. 
For ease of presentation, we define the following notation to characterize the \textit{variance-awareness} of a sparse linear bandit algorithm:
\begin{definition}
A variance-aware sparse linear bandit algorithm $\mathcal F$ is \textit{$(f(s,d),g(s,d))$-variance-aware}, if for any given failure probability $\delta > 0$, with probability $1-\delta$, $\mathcal F$ ensures
\begin{equation*}
    \mathcal R_T^{\mathcal F}\le \tilde{\mathcal O}\left (f(s,d)\sqrt{\sum_{t=1}^T \sigma_t^2}\polylog \frac 1\delta+g(s,d)\polylog \frac 1\delta\right ),
\end{equation*}
where $\mathcal R_T^{\mathcal F}$ is the regret of $\mathcal F$ in $T$ rounds, $d$ is the ambient dimension and $s$ is the maximum number of non-zero coordinates. Specifically, for linear bandits, $f,g$ are functions only of $d$.
\end{definition}


Hence, an $(f,g)$-variance-aware algorithm will achieve $\tilde{\mathcal O}(f(s,d)\sqrt T\polylog \frac 1\delta)$ worst-case regret and $\tilde{\mathcal O}(g(s,d)\polylog \frac 1\delta)$ deterministic-case regret. Ideally, we would like $g(s,d)$ being independent of $d$, making the bound \textit{dimension-free} in deterministic cases, as the divide-and-conquer approach.

In this paper, we provide a general framework that can convert \textit{any} linear bandit algorithm $\mathcal F$ to a corresponding sparse linear bandit algorithm $\mathcal G$ in a black-box manner. 
Moreover, it is \textit{variance-aware-preserving}, in the sense that, if $\mathcal F$ enjoys the variance-aware property, so does $\mathcal G$.
Generally speaking, if the plug-in linear bandit algorithm $\mathcal F$ is $(f(d),g(d))$-variance-aware, then our framework directly gives an $(s(f(s)+\sqrt d),s(g(s)+1))$-variance-aware algorithm $\mathcal G$ for sparse linear bandits.

Besides presenting our framework, we also illustrate its usefulness by plugging in two existing variance-aware linear bandit algorithms, where the first one is variance-aware (i.e., works in unknown-variance cases) but computationally inefficient. In contrast, the second one is efficient but requires the variance $\sigma_t^2$ to be delivered together with feedback $r_t$.
Their regret guarantees are stated as follows.
\setlist{nolistsep}
\begin{enumerate}[leftmargin=*,noitemsep]
    \item The first variance-aware linear bandit algorithm we plug in is \texttt{VOFUL}, which was proposed by \citet{zhang2021improved} and improved by \citet{kim2021improved}. This algorithm is computationally inefficient but deals with \textit{unknown variances}.
    Using this \texttt{VOFUL}, our framework generates a $(s^{2.5}+s\sqrt d,s^3)$-variance-aware algorithm for sparse linear bandits.
    Compared to the $\Omega(\sqrt{sdT})$ regret lower-bound for sparse linear bandits \citep[\S 24.3]{lattimore2020bandit}, our worst-case regret bound is near-optimal up to a factor $\sqrt s$. Moreover, our bound is independent of $d$ and $T$ in the deterministic case, nearly matching the bound of divide-and-conquer algorithm dedicated to the deterministic setting up to $\text{poly}(s)$ factors.
    \item The second algorithm we plug in is \texttt{Weighted OFUL} \citep{zhou2021nearly}, which requires known variances but is computationally efficient. We obtain an $(s^2+s\sqrt d,s^{1.5}\sqrt{T})$-variance-aware \textit{efficient} algorithm. In the deterministic case, this algorithm can only achieve a $\sqrt{T}$-type regret bound (albeit still independent of $d$). We note that this is not due to our framework but due to \texttt{Weighted OFUL} which itself cannot gives constant regret bound in the deterministic setting.

\end{enumerate}

Moreover, we would like to remark that our deterministic regret can be further improved if a better variance-aware linear bandit algorithm is deployed: The current ones either have $\Otil(d^2)$ \citep{kim2021improved} or $\Otil(\sqrt{dT})$ \citep{zhou2021nearly} regret in the deterministic case, which are both sub-optimal compared with the $\Omega(d)$ lower bound.








\subsection{Related Work}

\textbf{Linear Bandits.} This problem was first introduced by \citet{dani2008stochastic}, where an algorithm with regret $\O(d\sqrt T(\log T)^{\nicefrac 32})$ and a near-matching regret lower-bound $\Omega(d\sqrt T)$ were given. After that, an improved upper bound $\O(d\sqrt{T}\log T)$ \citep{abbasi2011improved} together with an improved lower bound $\Omega(d\sqrt{T\log T})$ \citep{li2019nearly} were derived. An extension of it, namely linear contextual bandits, where the action set allowed for each step can vary with time \citep{chu2011contextual,kannan2018smoothed,li2019nearly,li2021tight}, is receiving more and more attention. The best-arm identification problem where the goal of the agent is to approximate $\theta^\ast$ with as few samples as possible \citep{soare2014best,degenne2019non,jedra2020optimal,alieva2021robust} is also of great interest.

\textbf{Sparse Linear Bandits.} \citet{abbasi2011improved} and \citet{carpentier2012bandit} concurrently considered the sparse linear bandit problem, where the former work assumed a noise model of $r_t=\langle  x_t,{\theta^\ast}\rangle+\eta_t$ such that $\eta_t$ is $R$-sub-Gaussian and achieved $\Otil(R\sqrt{sdT})$ regret, while the latter one considered the noise model of $r_t=\langle  x_t+ \eta_t,{\theta^\ast}\rangle$ such that $\lVert \eta_t\rVert_2\le \sigma$ and $\lVert \theta^\ast\rVert_2\le \theta$, achieving $\Otil((\sigma+\theta)^2s\sqrt T)$ regret. 
\citet{lattimore2015linear} assumed an  hypercube (i.e., $\mathcal X=[-1,1]^d$) action set and a $\lVert {\theta^\ast}\rVert_1\le 1$ ground-truth, yielding $\Otil(s\sqrt T)$ regret. 
\citet{antos09sparse} proved a $\Omega(\sqrt{dT})$ lower-bound when $s=1$ with the unit sphere as $\mathcal X$.
Some recent works considered data-poor regimes where $d\gg T$ \citep{hao2020high,hao2021information,hao2021online,wang2020nearly}, which is beyond the scope of this paper.
Another work worth mentioning is the recent work by \citet{dong2021provable}, which studies bandits or MDPs with deterministic rewards. Their result implies an $\Otil(T^{\nicefrac{15}{16}}s^{\nicefrac{1}{16}})$ bound for deterministic sparse linear bandits, which is independent of $d$. They also provided an ad-hoc divide-and-conquer algorithm, which achieves $\O(s\log d)$ regret only for deterministic cases.

\textbf{Variance-Aware Online Learning.}
For tabular MDPs, the variance information is widely used in both discounted settings \citep{lattimore2012pac} and episodic settings \citep{azar2017minimax,jin2018q}, where \citet{zanette2019tighter} used variance information to derive problem-dependent regret bounds for tabular MDPs.
For bandits, \citet{audibert2009exploration} made use of variance information in multi-armed bandits, giving an algorithm outperforming existing ones when the variances for suboptimal arms are relatively small.
For bandits with high-dimensional structures, \citet{faury2020improved} studied variance adaptation for logistic bandits,
\citet{zhou2021nearly} considered linear bandits and linear mixture MDPs where the variance information is revealed to the agent, giving an $\Otil(d\sqrt{\sum_{t=1}^T \sigma_t^2}+\sqrt{dT})$ guarantee for linear bandits,
and \citet{zhang2021improved} proposed another algorithm for linear bandits and linear mixture MDPs, which does not require any variance information, whose regret can be improved to be $\Otil(d^{1.5}\sqrt{\sum_{t=1}^T \sigma_t^2}+d^2)$ as shown by \citet{kim2021improved}.
The recent work by \citet{hou2022almost} considered variance-constrained best arm identification, where the feedback noise only depends on the action by the agent (whereas ours can depend on time, which is more general than theirs).
Another recent work \citep{zhao2022bandit} studied variance-aware regret bounds for bandits with general function approximation in the known variance case.

\textbf{Stochastic Contextual Sparse Linear Bandits.} In the setting, the action set for each round $t$ is i.i.d. sampled (called the ``context''). It is known that $\Otil(\sqrt{sT})$ regret is achievable in this setting \citep{kim2019doubly,ren2020dynamic,oh2021sparsity,ariu2022thresholded}. However, in our setting where both action set $\mathcal X$ and ground-truth $\theta^\ast$ are fixed, a polynomial dependency on $d$ is in general unavoidable because it is impossible to learn more than one parameter per arm \citep{bastani2020online}, agreeing with the $\Omega(\sqrt{dT})$ lower bound when $s=1$ \citep{antos09sparse,abbasi2012online}.

\begin{table}[t]
\begin{minipage}{\textwidth}
\vspace{-0.5cm}
    \caption{An overview of the proposed algorithms/results and comparisons with related works.}
    \label{tab:related work}
    \begin{savenotes}
    \renewcommand{\arraystretch}{1.5}
    \resizebox{\textwidth}{!}{%
    \begin{tabular}{|c|c|c|c|c|c|}\hline
    Algorithm & Setting & Worst-case Regret \footnote{``Worst-case'' means the variances $\sigma_t^2$ are all $1$. Here, $d$ is the ambient dimension, $T$ is the number of rounds, and $s$ is the sparsity parameter (only applicable to sparse linear bandits).} & Deterministic-case Regret \footnote{``Deterministic-case'' means the variances $\sigma_t^2$ are all $0$. Only applicable to variance-aware algorithms.} & Efficiency & Variances \\\hline
    $\texttt{ConfidenceBall}_2$ \citep{dani2008stochastic} & LinBandit & $\Otil(d\sqrt T)$  & \multirow{4}{*}{N/A} & \cmark & \multirow{4}{*}{N/A}\\\cline{1-3} \cline{5-5}
    \texttt{OFUL} \citep{abbasi2012online} & Sparse LinBandit & $\Otil(\sqrt{sdT})$ & & \cmark & \\\cline{1-3} \cline{5-5}
    \texttt{SL-UCB} \citep{carpentier2012bandit} & Sparse LinBandit & $\Otil(s\sqrt{T})$ \footnote{With a different feedback model; see \Cref{sec:appendix related work} for more comparison.} & & \cmark & \\\cline{1-3} \cline{5-5}
    \citet[Algorithm 4]{lattimore2015linear} & Sparse LinBandit & $\Otil(s\sqrt{T})$ \footnote{With a different action set and an different assumption on $\theta^\ast$; see \Cref{sec:appendix related work} for more comparison.} & & \cmark & \\\hline
    \texttt{Weighted OFUL} \citep{zhou2021nearly} & LinBandit & $\Otil(d\sqrt{T})$ & $\Otil(\sqrt{dT})$ & \cmark & Known \\\hline
    \texttt{VOFUL2} \citep{kim2021improved} & LinBandit & $\Otil(d^{1.5}\sqrt{T})$ & $\Otil(d^2)$ & \xmark & Unknown \\\hline
    \multirow{2}{*}{\texttt{VASLB} (\textbf{This work})}
    & Sparse LinBandit & $\Otil(s^2\sqrt{T}+s\sqrt{dT})$ & $\Otil(s^{1.5}\sqrt T)$ & \cmark & Known \\\cline{2-6}
    & Sparse LinBandit & $\Otil(s^{2.5}\sqrt{T}+s\sqrt{dT})$ & $\Otil(s^3)$  & \xmark & Unknown\\\hline
    Lower Bound \citep{antos09sparse} & Sparse LinBandit & $\Omega(\sqrt{dT})$ \footnote{This bound holds even if $s=1$ and the action set is fixed to be the unit sphere.} & N/A & N/A & N/A\\\hline
    \end{tabular}}
    \end{savenotes}
\end{minipage}
\end{table}

\section{Problem Setup}

\textbf{Notations.} We use $[N]$ to denote the set $\{1,2,\ldots,N\}$ where $N\in \mathbb N$. For a vector $ x\in \mathbb R^d$, we use $\lVert x\rVert_p$ to its $L_p$-norm, namely $\lVert  x\rVert_p\triangleq (\sum_{i=1}^d x_i^p)^{\nicefrac 1p}$.
We use $\mathbb S^{d-1}$ to denote the $(d-1)$-dimensional unit sphere, i.e., $\mathbb S^{d-1}\triangleq \{ x\in \mathbb R^d\mid \lVert  x\rVert_2=1\}$.
We use $\tilde{\mathcal O}(\cdot)$ and $\tilde{\Theta}(\cdot)$ to hide all logarithmic factors in $T,s,d$ and $\log \frac 1\delta$ (see \Cref{footnote:Otilde}).
For a random event $\mathcal E$, we denote its indicator by $\mathbbm 1[\mathcal E]$.

We assume the action space and the ground-truth space are both the $(d-1)$-dimensional unit sphere, denoted by $\mathcal X\triangleq \mathbb S^{d-1}$. Denote the ground-truth by $ \theta^\ast\in \mathcal X$. There will be $T\ge 1$ rounds for the agent to make decisions sequentially. At the beginning of round $t\in [T]$, the agent has to choose an action $ x_t\in \mathcal X$. At the end of step $t$, the agent receives a \textit{noisy} feedback $r_t=\langle  x_t, \theta^\ast\rangle+\eta_t$, $\forall t\in [T]$, where $\eta_t$ is an independent zero-mean Gaussian random variable. Denote by $\sigma_t^2=\text{Var}(\eta_t)$ the variance of $\eta_t$. For a fair comparison with non-variance-aware algorithms, we assume that $\sigma_t^2\le 1$.
The agent then receives a (deterministic and unrevealed) reward of magnitude $\langle x_t, \theta^\ast\rangle$ for this round.

The agent is allowed to make the decision $x_t$ based on all historical actions $x_1, \ldots, x_{t-1}$, all historical feedback $r_1,\ldots,r_{t-1}$, and any amount of private randomness. The agent's goal is to minimize the regret, defined as follows.
\begin{definition}[Regret]
The following random variable is the \textbf{regret} of a linear bandit algorithm:
\begin{equation}\label{eq:regret}
\mathcal R_T=\max_{ x\in \mathcal X}\sum_{t=1}^T  \langle  x, \theta^\ast\rangle-\sum_{t=1}^T \langle  x_t, \theta^\ast\rangle=\sum_{t=1}^T \langle  \theta^\ast- x_t, \theta^\ast\rangle,
\end{equation}
where the second equality is due to our assumption that $\mathcal X=\mathbb S^{d-1}$.
\end{definition}

For the sparse linear bandit problem, we have an additional restriction that $\lVert {\theta^\ast}\rVert_0\le s$, i.e., there are at most $s$ coordinates of ${\theta^\ast}$ is non-zero. However, as mentioned in the introduction, the agent does \textit{not} know anything about $s$ -- she only knows that she is facing a (probably sparse) linear environment.

\section{Framework and Analysis}

\begin{algorithm}[!t]
\caption{Variance-Aware Sparse Linear Bandits (\texttt{VASLB}) Framework}
\label{alg:framework}
\begin{algorithmic}[1]
\Require{Number of dimensions $d$, linear bandit algorithm $\mathcal F$ and its regret estimator $\bar {\mathcal R_n^{\mathcal F}}$}
\State {Initialize gap threshold $\Delta\gets \frac 14$, estimated ``good'' coordinates $S\gets \varnothing$, current round $t\gets 0$.}
\While{$t<T$} \Comment{The algorithm automatically increases $t$ by $1$ per query.}
\If{$S\ne \varnothing$}\Comment{Have some coordinates to ``commit''.}
\State{Initialize a new linear bandit instance $\mathcal F$ on coordinates $S$.}\Comment{{\color{blue} ``Commit'' phase.}}
\State{Execute $\mathcal F$ for ${n_\Delta^a}\ge 1$ steps \& maintain pessimistic estimation $\bar{\mathcal R_{n_\Delta^a}^{\mathcal F}}$, until $\frac 1{n_\Delta^a}\bar{\mathcal R_{n_\Delta^a}^{\mathcal F}}<\Delta^2$. \label{line:phase a terminiate}}
\State{Suppose that $\mathcal F$ plays $ x_1, x_2,\ldots, x_{n_\Delta^a}$. Set $\hat \theta=\frac 1{n_\Delta^a}\sum_{i=1}^{n_\Delta^a}  x_i$ as the estimate for $\{\theta_i^\ast\}_{i\in S}$.\label{line:estimate theta by commit}}
\EndIf
\If{$\sum_{i\in S}\hat \theta_i^2\le 1-\Delta^2$\label{line:skipping}}\Comment{Still have undiscovered coordinates with $\theta_i^\ast>\frac \Delta 2$}
\State{Let $R\gets \sqrt{1-\sum_{i\in S}\hat \theta_i^2}$, $K=d-\lvert S\rvert$.}  \Comment{{\color{blue} ``Explore'' phase.}}
\State Perform ${n_\Delta^b}\ge 1$ calls to \Call{RandomProjection}{$K,R,S,\hat \theta$} in \Cref{alg:random projection}, until
\begin{align}
    &2\sqrt{2\sum_{k=1}^{n_\Delta^b} (r_{k,i}-\bar r_i)^2\ln \frac 4\delta}<{n_\Delta^b}\cdot \frac{\Delta}{4},\quad \forall 1\le i\le K,\label{eq:designing N in Option II}
\end{align}

\LineComment{where $r_k$ is the $k$-th return vector of \Call{RandomProjection}{} and $\bar r\triangleq \frac 1{n_\Delta^b}\sum_{k=1}^{n_\Delta^b} r_k$.}
\For{$i=1,2,\ldots,K$}
\State \textbf{if} $\lvert \bar r_i\rvert>\Delta$ where $\bar r=\frac 1{n_\Delta^b}\sum_{k=1}^{n_\Delta^b}  r_k$ \textbf{then} add the $i$-th element that is not in $S$ to $S$.
\EndFor
\EndIf
\EndWhile
\end{algorithmic}
\end{algorithm}

\begin{algorithm}[!t]
\caption{The \textsc{RandomProjection} Subroutine}\label{alg:random projection}
\begin{algorithmic}[1]
\Function{RandomProjection}{$K,R,S,\hat \theta$}
\State Generate $K$ i.i.d. samples $y_1,y_2,\ldots,y_K$, each with equal probability being $\pm \frac{R}{\sqrt K}$.
\State Play $x\in \mathcal X$ constructed as $x_{i}=\begin{cases}\hat \theta_i,&\quad i\in S\\y_j,&\quad i\text{ is the $j$-th element that is not in $S$}\end{cases}$
\State \Return{$\frac{K}{R^2}((r-\sum_{i\in S}\hat \theta_i^2)~ y)$ where $r=\langle x, \theta^\ast\rangle+\eta$ is the (noisy) feedback.}
\EndFunction
\end{algorithmic}
\end{algorithm}

Our framework \texttt{VASLB} is presented in \Cref{alg:framework}. We explain its design in \Cref{sec:technical overview} and sketch its analysis in \Cref{sec:analysis}. Then we give two applications using \texttt{VOFUL2} \citep{kim2021improved} and \texttt{Weighted VOFUL} \citep{zhou2021nearly} as $\mathcal F$, whose analyses are sketched in \Cref{sec:zhang et al,sec:zhou et al}.




\subsection{Main Difficulties and Technical Overview}\label{sec:technical overview}


At a high level, our framework follows the spirit of the classic ``explore-then-commit'' approach (which is directly adopted by \citet{carpentier2012bandit}), where the agent first identifies those ``huge'' entries of $\theta^\ast$ and then performs a linear bandit algorithm on them. 
However, it is hard to incorporate variances into this vanilla idea to make it variance-aware -- the desired regret depends on variances and is thus unknown to the agent. Thus it is difficult to determine a ``gap threshold'' $\Delta$ (that is, the agent stops to ``commit'' after identifying all $\theta_i^\ast\ge \Delta$) within a few rounds.
For example, in the deterministic case, the agent must identify all non-zero entries to make the regret independent of $T$; on the other hand, in the worst case where $\sigma_t\equiv 1$, the agent only needs to identify all entries with magnitude at least $T^{-1/4}$ to yield $\sqrt{T}$-style regret bounds.
At the same time, the actual setting might be mixture of them (e.g., $\sigma_t\equiv 0$ for $t\le t_0$ and $\sigma_t\equiv 1$ for $t>t_0$ where $t_0\in [T]$). As a result, such an idea cannot always succeed in determining the correct threshold $\Delta$ and getting the desired regret.

In our proposed framework, we tackle this issue by ``explore-then-commit'' multiple times. We reduce the uncertainty gently and alternate between ``explore'' and ``commit'' modes.
We decrease a ``gap threshold'' $\Delta$ in a halving manner and, at the same time, maintain a set $S$ of coordinates that we believe to have a magnitude larger than $\Delta$. For each $\Delta$, we ``explore'' (estimating $\theta_i^\ast$ and adding those greater than $\Delta$ into $S$) and ``commit'' (performing linear bandit algorithms on coordinates in $S$).

However, as we ``explore'' again after ``committing'', we face a unique challenge: Suppose {that some entry $i\in [d]$ is identified to be at least $2\Delta$ by previous ``explore'' phases}. During the next ``explore'' phase, we cannot directly do pure exploration {over the remaining unidentified coordinates} -- otherwise, coordinate $i$ will incur $4\Delta^2$ regret for each round. Fortunately, we can get an estimation $\hat \theta_i$ of $\theta_i^\ast$ during the previous ``commit'' phase thanks to the regret-to-sample-complexity conversion (\Cref{eq:regret-to-sample-complexity}). Guarded with this estimation, we can reserve $\hat \theta_i$ mass for arm $i$ and subtract $\hat \theta_i^2$ from the feedback in subsequent ``explore'' phases. More preciously, we do the following.
\setlist{nolistsep}
\begin{enumerate}[leftmargin=*,noitemsep]
    \item In the ``commit'' phase where we apply the black-box $\mathcal F$, we estimate $\{\theta_i^\ast\}_{i\in S}$ by the regret-to-sample-complexity conversion:
    Suppose $\mathcal F$ plays $ x_1, x_2,\ldots, x_n$ and achieves regret $\mathcal R_n^{\mathcal F}$, then
    \begin{equation}
        \langle  \theta^\ast-\hat \theta, \theta^\ast\rangle\le \frac{\mathcal R_n^{\mathcal F}}{n},\text{ where }\hat \theta\triangleq \frac 1n\sum_{i=1}^n x_i.\label{eq:regret-to-sample-complexity}
    \end{equation}
    Hence, if we take $\{\hat \theta_i\}_{i\in S}$ as an estimate of $\{\theta_i^\ast\}_{i\in S}$, the estimation error shrinks as $\mathcal R_n^{\mathcal F}$ is sub-linear and the LHS of \Cref{eq:regret-to-sample-complexity} is non-negative.
    Moreover, as we can show that $\hat \theta$ is not away from $\mathcal X$ by a lot (\Cref{lem:appendix bias term of random projection}), we can safely use $\{\hat \theta_i\}_{i\in S}$ to estimate $\{\theta_i^\ast\}_{i\in S}$ in subsequent phases.
    
    More importantly, if we are granted access to $\mathcal R_n^{\mathcal F}$, we know how close the estimate is; we can proceed to the next stage once it becomes satisfactory.
    But it is unrevealed. Fortunately, we know the regret guarantee of $\mathcal F$, namely $\bar{\mathcal R_n^{\mathcal F}}$, which can serve as a \textit{pessimistic} estimation of ${\mathcal R_n^{\mathcal F}}$. Hence, terminating when $\frac 1n \bar{\mathcal R_n^{\mathcal F}}<\Delta^2$ can ensure $\langle  \theta^\ast-\hat \theta, \theta^\ast\rangle<\Delta^2$ to hold with high probability.
    \item In the ``exploration'' phase, as mentioned before, we can keep the regret incurred by the coordinates identified in $S$ small by putting mass $\hat \theta_i$ for each $i\in S$. For the remaining ones, we use random projection, an idea borrowed from compressed sensing literature \citep{blumensath2009iterative,carpentier2012bandit}, to find those with large magnitudes to add them to $S$.
    
    One may notice that putting mass $\hat \theta_i$ for all $i\in S$ will induce bias to our estimation as $\sum_{i\in S} \hat \theta_i^2\ne \sum_{i\in S} \hat \theta_i\theta_i^\ast$.
    However, as $\hat \theta_i$ is close to $\theta_i^\ast$, this bias will be bounded by $\mathcal O(\Delta^2)$ and become dominated by $\frac \Delta 4$ as $\Delta$ decreases.
    Hence, if we omit this bias, we can overestimate the estimation error due to standard concentration inequalities like Empirical Bernstein \citep{maurer2009empirical,zhang2021improved}.
    Once it becomes small enough, we alternate to the ``commit'' phase again.
    
\end{enumerate}

Therefore, with high probability, we can ensure all coordinates not in $S$ have magnitudes no more than $\mathcal O(\Delta)$ and all coordinates in $S$ will together contribute regret bounded by $\O(\Delta^2)$. Hence, the regret in each step is (roughly) bounded by $\O(s\Delta^2)$. Upper bounding the number of steps needed for each stage and exploiting the regret guarantees of the chosen $\mathcal F$ then gives well-bounded regret.

\subsection{Analysis of the Framework}\label{sec:analysis}

\textbf{Notations.} For each $\Delta$, let $\mathcal T_\Delta$ be the set of rounds associated with $\Delta$. By our algorithm, each $\mathcal T_\Delta$ should be an interval.
Moreover, $\{\mathcal{T}_{\Delta}\}_{\Delta}$ forms a partition of $[T]$. Define $\mathcal T_\Delta^a$ as all the rounds in the ``commit'' phase when the gap threshold is $\Delta$ (where $\mathcal F$ is executed), and $\mathcal T_\Delta^b$ as the ``explore'' phase (i.e., those executing \textsc{RandomProjection}). Let $\tilde{\mathcal T_\Delta^a}$ and $\tilde{\mathcal T_\Delta^b}$ be the steps that the agent decided not to proceed in $\mathcal T_\Delta^a$ and $\mathcal T_\Delta^b$, respectively, which are formally defined as $\tilde{\mathcal T_\Delta^i}=\{t\in \mathcal T_\Delta^i\mid t\ne \max_{t'\in \mathcal T_\Delta^i} t'\}$, $i=a,b$.
Define the final value of $\Delta$ as $\Delta_f$. Denote $n_\Delta^a=\lvert \mathcal T_\Delta^a\rvert$ and $n_\Delta^b=\lvert \mathcal T_\Delta^b\rvert$ (both are stopping times). We have $\sum_{\Delta=2^{-2},\ldots,\Delta_f} (n_\Delta^a+n_\Delta^b)=T$.
We can then decompose $\mathcal R_T$ into $\mathcal R_T^a$ and $\mathcal R_T^b$:
\begin{equation*}
    \mathcal R_T^a=\sum_{\Delta=2^{-2},\ldots,\Delta_f} \sum_{t\in \mathcal T_\Delta^a} \langle {\theta^\ast}-{x_t},{\theta^\ast}\rangle,\quad \mathcal R_T^b=\sum_{\Delta=2^{-2},\ldots,\Delta_f} \sum_{t\in \mathcal T_\Delta^b} \langle {\theta^\ast}-{x_t},{\theta^\ast}\rangle,
\end{equation*}

where $\mathcal R_T^a$ may depend on the choice of $\mathcal F$ and $\mathcal R_T^b$ only depends on the framework (\Cref{alg:framework}) itself. We now show that, as long as the regret estimation $\bar{\mathcal R_n^{\mathcal F}}$ is indeed an overestimation of $\mathcal R_n^{\mathcal F}$ with high probability, we can get a good upper bound of $\mathcal R_T^b$, which is formally stated as \Cref{thm:total regret of part b}. The full proof of \Cref{thm:total regret of part b} will be presented in \Cref{sec:appendix analysis} and is only sketched here.
\begin{theorem}\label{thm:total regret of part b}
Suppose that for any execution of $\mathcal F$ that last for $n$ steps, $\bar{\mathcal R_n^{\mathcal F}}\ge \mathcal R_n^{\mathcal F}$ holds with probability $1-\delta$, i.e., $\bar{\mathcal R_n^{\mathcal F}}$ is pessimistic. Then the total regret incurred by the second phase satisfies
\begin{align*}
    \mathcal R_T^b&= \Otil\left (s\sqrt d\sqrt{\sum_{t=1}^T \sigma_t^2}\log \frac 1\delta+s\log \frac 1\delta\right )\quad \text{with probability }1-\delta.
\end{align*}
\end{theorem}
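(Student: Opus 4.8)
The plan is to control the regret $\mathcal R_T^b$ by a two-level decomposition: first, within each fixed gap threshold $\Delta$, bound the per-step regret; second, bound the total number of steps $n_\Delta^b = |\mathcal T_\Delta^b|$ in each phase B, and finally sum over the $O(\log(1/\Delta_f))$ values of $\Delta$. Throughout, I condition on a global good event (union bound over all the empirical Bernstein concentration statements in \Cref{eq:designing N in Option II}, over all $\Delta$ and all coordinates $i \in [K]$, plus the event that each $\bar{\mathcal R_n^{\mathcal F}}$ is indeed a valid overestimate), which costs only a $\polylog$ blow-up in the failure probability and hence is absorbed into $\Otil(\cdot)$.

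\textbf{Step 1 (per-step regret during phase B).} Fix $\Delta$ and a step $t \in \mathcal T_\Delta^b$. The action played is the \textsc{RandomProjection} query $x_t$, which puts mass $\hat\theta_i$ on $i \in S$ and mass $\pm R/\sqrt K$ on the complement. I want to show $\langle \theta^\ast - x_t, \theta^\ast\rangle = O(s\Delta^2 + R^2/K \cdot \text{(stuff)})$, or more precisely that the instantaneous regret is $O(\Delta^2)$ times the number of coordinates involved. The key facts are: (i) from the phase A termination rule $\frac1n\bar{\mathcal R_n^{\mathcal F}} < \Delta^2$ together with the regret-to-sample-complexity conversion \Cref{eq:regret-to-sample-complexity}, we get $\langle \theta^\ast - \hat\theta, \theta^\ast\rangle < \Delta^2$ on the good event, and also (via \Cref{lem:appendix bias term of random projection}) that $\hat\theta$ is close to the true restriction $\{\theta_i^\ast\}_{i\in S}$ up to $O(\Delta)$ in the relevant sense; (ii) on the good event, every coordinate \emph{not} yet in $S$ has magnitude $|\theta_i^\ast| = O(\Delta)$ — this is exactly what the \textsc{Continue} test on Line~\ref{line:skipping} ($\sum_{i\in S}\hat\theta_i^2 > 1-\Delta^2$) and the addition rule $|\bar r_i| > \Delta$ guarantee, since a missed large coordinate would have been detected. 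Combining (i) and (ii), the energy $\theta^\ast$ places outside $S$ is $O(s\Delta^2)$ and the mismatch inside $S$ is $O(s\Delta^2)$, so $\langle \theta^\ast - x_t,\theta^\ast\rangle = O(s\Delta^2)$ uniformly over $t \in \mathcal T_\Delta^b$.

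\textbf{Step 2 (length of phase B).} I need an upper bound on $n_\Delta^b$. The phase B loop runs until \Cref{eq:designing N in Option II} holds for all $i$, i.e., until $8\sum_{k=1}^{n}(r_{k,i}-\bar r_i)^2\ln\frac4\delta < n^2\Delta^2/16$ for every $i \in [K]$. The empirical second moment $\frac1n\sum_k (r_{k,i}-\bar r_i)^2$ concentrates around the true per-coordinate variance of the projected feedback, which (by the construction in \Cref{alg:random projection}, scaling by $K/R^2$) is of order $1 + \frac{K}{R^2}\sigma_k^2$ after accounting for the $\pm R/\sqrt K$ design. Hence the stopping condition is met once $n \cdot \Delta^2 \gtrsim \ln\frac1\delta \cdot \big(1 + \frac{K}{R^2}\cdot\frac1n\sum_{k}\sigma_k^2\big)$, which rearranges (using $R^2 \ge \Delta^2$ on the good event, from the skip test not triggering, and $K \le d$) to $n_\Delta^b = \Otil\big(\frac{1}{\Delta^2}\log\frac1\delta + \frac{d}{R^2\Delta^2}\sum_{t \in \mathcal T_\Delta^b}\sigma_t^2 \cdot \frac{\log(1/\delta)}{n_\Delta^b}\big)$ — a self-bounding inequality that I solve to get $n_\Delta^b = \Otil\big(\frac{1}{\Delta^2}\log\frac1\delta + \frac{\sqrt d}{R\Delta^2}\sqrt{\sum_{t\in\mathcal T_\Delta^b}\sigma_t^2}\,\log\frac1\delta\big)$; here \Cref{eq:lower bound of Phase B main text} is the observation that lets me trade the $K/R^2$ factor for $\sqrt d/R$ cleanly.

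\textbf{Step 3 (summation).} Multiplying Step 1 by Step 2, the regret from gap threshold $\Delta$ is
$n_\Delta^b \cdot O(s\Delta^2) = \Otil\big(s\log\frac1\delta + s\sqrt d\,\sqrt{\sum_{t\in\mathcal T_\Delta^b}\sigma_t^2}\,\log\frac1\delta\big)$,
where I also used $R \ge \Delta$ to cancel the $\Delta^2/R$ against $\Delta^2$. Summing over the $O(\log\frac1{\Delta_f}) = \Otil(1)$ distinct values of $\Delta$: the first term contributes $\Otil(s\log\frac1\delta)$, and for the second term I apply Cauchy–Schwarz over the (at most logarithmically many) blocks $\mathcal T_\Delta^b$, which are disjoint subsets of $[T]$, to get $\sum_\Delta \sqrt{\sum_{t\in\mathcal T_\Delta^b}\sigma_t^2} \le \Otil(1)\cdot\sqrt{\sum_{t=1}^T\sigma_t^2}$. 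This yields $\mathcal R_T^b = \Otil\big(s\sqrt d\,\sqrt{\sum_{t=1}^T\sigma_t^2}\,\log\frac1\delta + s\log\frac1\delta\big)$, as claimed.

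\textbf{Main obstacle.} The delicate part is Step 2: turning the empirical-Bernstein stopping rule into a clean, self-bounding upper bound on $n_\Delta^b$ that produces $\sqrt d$ rather than $d$. This requires (a) a careful concentration argument that the empirical variance proxy $\sum_k(r_{k,i}-\bar r_i)^2$ is simultaneously not-too-large for all $K$ coordinates — handled by a union bound and the empirical Bernstein inequality of \citet{maurer2009empirical,zhang2021improved} — and (b) correctly tracking how the projection scaling $K/R^2$ interacts with $\sum_t\sigma_t^2$ and with the lower bound $R^2 \ge \Delta^2$, so that the per-phase step count depends on $\sqrt d/R$ and the final $\Delta^2$ from Step 1 exactly cancels the $1/(R\Delta^2)$, leaving $s\sqrt d$. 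A secondary subtlety is ensuring the good event in Step 1 — in particular that no large coordinate is ever missed — which relies on the bias bound $O(\Delta^2) \ll \Delta/4$ for the random-projection estimator (\Cref{lem:appendix bias term of random projection}) holding at every scale.
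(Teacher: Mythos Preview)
Your overall skeleton (good events, per-step regret $O(s\Delta^2)$, bound $n_\Delta^b$ via the stopping rule, then sum) matches the paper, but Step~3 contains a genuine gap that breaks the argument.

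\medskip
\textbf{The gap.} You assert that the number of distinct gap thresholds is $O(\log(1/\Delta_f))=\Otil(1)$ and then (i) sum the constant per-phase term $s\log\frac1\delta$ over phases and (ii) apply Cauchy--Schwarz to $\sum_\Delta\sqrt{\sum_{t\in\mathcal T_\Delta^b}\sigma_t^2}$. But Theorem~\ref{thm:total regret of part b} assumes only that $\bar{\mathcal R_n^{\mathcal F}}$ is pessimistic; nothing forces $\log(1/\Delta_f)$ to be polylogarithmic in $T$. If $\bar{\mathcal R_n^{\mathcal F}}$ is tiny, Phase~A terminates in one step, and in the deterministic regime (or once all nonzero coordinates are in $S$) the empirical variances in \Cref{eq:designing N in Option II} can vanish so that Phase~B also terminates in $O(1)$ steps. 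Then the number of phases can be $\Theta(T)$, your constant term blows up to $\Theta(Ts\log\frac1\delta)$, and your Cauchy--Schwarz picks up a $\sqrt T$ factor. The paper in fact flags exactly this issue: ``To avoid undesired $\mathcal O(\sqrt T)$ factors, we cannot directly apply Cauchy--Schwartz inequality to the sum of square-roots.''

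\medskip
\textbf{What the paper does instead.} The paper never bounds the number of phases. It keeps the empirical quantity $S_\Delta=\Delta^2\sum_{t\in\mathcal T_\Delta^b}(r_{t,i^\ast}-\mathbb E[r_{t,i^\ast}])^2$ intact and uses \Cref{eq:lower bound of Phase B main text} --- the \emph{lower} bound $n_\Delta^b\ge C\Delta^{-2}\sqrt{S_\Delta}$ for $\Delta\ne\Delta_f$ --- together with $\sum_\Delta n_\Delta^b\le T$ in a thresholding argument (\Cref{eq:summation technique}) to get $\sum_{\Delta}\sqrt{S_\Delta}\le \Otil(1)\cdot\sqrt{\sum_\Delta S_\Delta}$ regardless of how many phases there are. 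Only then is the variance bound $\mathbb E[(r_{t,i^\ast}-\mathbb E)^2]\le 1+\tfrac{K}{R^2}\sigma_t^2$ inserted, producing $\sum_\Delta S_\Delta\lesssim \sum_\Delta n_\Delta^b\Delta^2+d\sum_t\sigma_t^2$. The residual $\sum_\Delta n_\Delta^b\Delta^2$ is exactly the regret quantity being bounded, so a single \emph{global} self-bounding step (\Cref{lem:self-bounding property from x<a+bsqrt(x)}) closes the loop and yields $s\log\frac1\delta$ rather than $Ls\log\frac1\delta$. Your reference to \Cref{eq:lower bound of Phase B main text} in Step~2 (``lets me trade the $K/R^2$ factor for $\sqrt d/R$ cleanly'') misidentifies its role: it is not a device for per-phase self-bounding but the engine of the cross-phase summation.
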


\textbf{Remark.}
This theorem indicates that our framework \textit{itself} will only induce an $(s\sqrt d,s)$-variance-awareness to the resulting algorithm. As noticed by \citet{abbasi2011improved}, when $\sigma_t\equiv 1$, $\Omega(\sqrt{sdT})$ regret is unavoidable, which means that it is only sub-optimal by a factor no more than $\sqrt s$. Moreover, for deterministic cases, the $\Otil(s)$ regret also matches the aforementioned divide-and-conquer algorithm, which is specially designed and can only work for deterministic cases.

\begin{proof}[Proof Sketch of \Cref{thm:total regret of part b}]
We define two good events with high probability for a given gap threshold $\Delta$: $\mathcal G_\Delta$ and $\mathcal H_\Delta$. Informally, $\mathcal G_\Delta$ means $\sum_{i\in S}\theta_i^\ast(\theta_i^\ast-\hat \theta_i)<\Delta^2$ (i.e., $\hat \theta$ is close to $\theta^\ast$ after ``commit'') and $\mathcal H_\Delta$ stands for $\lvert \theta_i^\ast\rvert\ge \Omega(\Delta)$ if and only if $i\in S$ (i.e., we ``explore'' correctly). Check \Cref{eq:good events} in the appendix for formal definitions. For $\mathcal G_\Delta$, from \Cref{eq:regret-to-sample-complexity}, we know that it happens as long as $\bar{\mathcal R_n^{\mathcal F}}\ge \mathcal R_n^{\mathcal F}$. It remains to argue that $\Pr\{\mathcal H_\Delta\mid \mathcal G_\Delta,\mathcal H_{2\Delta}\}\ge 1-s\delta$.



By \Cref{alg:random projection}, the $i$-th coordinate of each $r_k$ ($1\le k\le n_\Delta^b$) is an independent sample of
\begin{equation}\label{eq:random projection Z_n^k}
    \frac{K}{R^2}\left (y_i\right )^2\theta_i^\ast+\sum_{j\in S}\hat \theta_j(\theta_j^\ast-\hat \theta_j)+\sum_{j\notin S,j\ne i}\left (\frac{K}{R^2}y_iy_j\right )\theta_j^\ast+\left (\frac{K}{R^2}y_i\right )\eta_n,
\end{equation}

where $\frac{\sqrt K}{R}y_i$ is an independent Rademacher random variable. After conditioning on $\mathcal G_\Delta$ and $\mathcal H_{2\Delta}$, $\sum_{i\in S}\hat \theta_i^2$ and $\sum_{i\in S} \hat \theta_i \theta_i^\ast$ will be close. Therefore, the first term is exactly $\theta_i^\ast$ (the magnitude we want to estimate), the second term is a small bias bounded by $\O(\Delta^2)$ and the last two terms are zero-mean noises, which are bounded by $\frac \Delta 4$ according to Empirical Bernstein Inequality (\Cref{thm:Bernstein with common mean}) and our choice of $n_\Delta^b$ (\Cref{eq:designing N in Option II}). Hence, $\Pr\{\mathcal H_\Delta\mid \mathcal G_\Delta,\mathcal H_{2\Delta}\}\ge 1-s\delta$.

Let us focus on an arm $i^\ast$ never identified into $S$ in \Cref{alg:framework}. By definition of $n_\Delta^b$ (\Cref{eq:designing N in Option II}),
\begin{equation*}
    (n_\Delta^b-1)\frac \Delta 4<2\sqrt{2\sum_{t\in \tilde{\mathcal T_\Delta^b}} (r_{t,i^\ast}-\bar r_{i^\ast})^2\ln \frac 4\delta}\le 2\sqrt{2\sum_{t\in \tilde{\mathcal T_\Delta^b}} (r_{t,i^\ast}-\mathbb E[r_{t,i^\ast}])^2\ln \frac 4\delta},
\end{equation*}

where the second inequality is due to properties of sample variances.
By $\mathcal G_\Delta$, those coordinates in $S$ will incur regret of $\sum_{i\in S}(\theta_i^\ast-x_{t,i})\theta_i^\ast=\sum_{i\in S}(\theta_i^\ast-\hat \theta_i)\theta_i^\ast<\Delta^2$ for all $t\in \mathcal T_\Delta^b$. Moreover, by $\mathcal H_{2\Delta}$, each arm outside $S$ will roughly incur $n_\Delta^b(\theta_i^\ast)^2=\O(n_\Delta^b\Delta^2)$ regret, as $y_i$'s are independent and zero-mean. As there are at most $s$ non-zero coordinates, the total regret for $\mathcal T_\Delta^b$ will be roughly bounded by $\O(n_\Delta^b\cdot s\Delta^2)$ (there exists another term due to randomized $y_i$'s, which is dominated and omitted here; check \Cref{lem:appendix single step regret bound} for more details). Hence, the total regret is bounded by
\begin{equation*}
    \mathcal R_T^b\lesssim \sum_{\Delta} \O(sn_\Delta^b\Delta^2)
    =s\cdot \Otil\left (\sum_{\Delta}\Delta \sqrt{\sum_{t\in \mathcal T_\Delta^b} (r_{t,i^\ast}-\E[r_{t,i^\ast}])^2\ln \frac 4\delta}\right )+\O(s).
\end{equation*}

To avoid undesired $\operatorname{poly}(T)$ factors, we cannot directly apply Cauchy-Schwartz inequality to the sum of square roots (as there are a lot of $\Delta$'s). Instead, again by definition of $n_\Delta^b$ (\Cref{eq:designing N in Option II}), we observe the following lower bound of $n_\Delta^b$, which holds for all $\Delta$'s except for $\Delta_f$: $n_\Delta^b \ge \O\big (\frac{1}{\Delta}\sqrt{\sum_{t\in \mathcal T_\Delta^b}(r_{t,i^\ast}-\E[r_{t,i^\ast}])^2\ln \frac 1\delta}\big )$. As $\sum_{\Delta}n_\Delta^b\le T$, some arithmetic calculation gives (intuitively, by thresholding, we 
manage to ``move'' the summation over $\Delta$ into the square root, though suffering an extra logarithmic factor; see \Cref{eq:summation technique} in the appendix for more details)
\begin{equation*}
    \sum_{\Delta\ne \Delta_f}\Delta\sqrt{\sum_{t\in \mathcal T_\Delta^b} (r_{t,i^\ast}-\E[r_{t,i^\ast}])^2}=\Otil\left (\sqrt{\sum_{\Delta\ne \Delta_f}\Delta^2\sum_{t\in \mathcal T_\Delta^b} (r_{t,i^\ast}-\E[r_{t,i^\ast}])^2}\right ).
\end{equation*}

For a given $\Delta$ and any $1\le k\le n_\Delta^b$, the expectation of $(r_{k,i^\ast}-\E[r_{k,i^\ast}])^2$ is bounded by $\left (1+\frac{K}{R^2}\sigma_k^2\right )$ (\Cref{eq:variance of feedback} in the appendix), which is no more than $\left (1+\frac{4d}{\Delta^2}\sigma_k^2\right )$. By concentration properties in the sample variances (\Cref{thm:variance concentration final} in the appendix), the empirical $(r_{k,i^\ast}-\E[r_{k,i^\ast}])^2$ should also be close to $(1+\frac{4}{d^2}\sigma_k^2)$; hence, one can write (omitting all $\log \frac 1\delta$ terms)
\begin{equation*}
    \mathcal R_T^b=\O\left (\sum_{\Delta} sn_\Delta^b\Delta^2\right )=\Otil\left (s\sqrt{\sum_{\Delta} n_\Delta^b\Delta^2}+s\sqrt{d\sum_{t=1}^T \sigma_t^2}\right ).
\end{equation*}
As $\sum_\Delta n_\Delta^b\Delta^2$ appears on both sides, we can apply the ``self-bounding'' property \citep[Lemma 38]{efroni2020exploration} to conclude $\mathcal R_T^b = \O(\sum_{\Delta} sn_\Delta^b\Delta^2) = \Otil\big (s\sqrt{d\sum_{t=1}^T \sigma_t^2}+s\big )$, as claimed.
\end{proof}

\section{Applications of the Proposed Framework}
After showing \Cref{thm:appendix total regret of part b}, it only remains to bound $\mathcal R_T^a$, which depends on the choice of the plug-in algorithm $\mathcal F$. In this section, we give two specific choices of $\mathcal F$, \texttt{VOFUL2} \citep{kim2021improved} and \texttt{Weighted OFUL} \citep{zhou2021nearly}. The former algorithm does not require the information of $\sigma_t$'s (i.e., it works in unknown-variance cases), albeit computationally inefficient. In contrast, the latter is computationally efficient but requires $\sigma_t^2$ to be revealed with the feedback $r_t$ at round $t$.

\subsection{Computationally Inefficient Algorithm for Unknown Variances}\label{sec:zhang et al}

We first use the \texttt{VOFUL2} algorithm from \citet{kim2021improved} as the plug-in algorithm $\mathcal F$, which has the following regret guarantee.
Note that this is slightly stronger than the original bound: We derive a strengthened ``self-bounding'' version of it (the first inequality), which is critical to our analysis.
\begin{proposition}[{\citet[Variant of Theorem 2]{kim2021improved}}]\label{prop:VOFUL2 regret}
\texttt{VOFUL2} executed for $n$ rounds on $d$ dimensions guarantees, w.p. at least $1-\delta$, there exists a constant $C=\Otil(1)$ such that $\mathcal R_n^{\mathcal F}\le C\big (d^{1.5}\sqrt{\sum_{k=1}^n\eta_k^2\ln \frac 1\delta}+d^2\ln \frac 4\delta\big )=\Otil\big (d^{1.5}\sqrt{\sum_{k=1}^n\sigma_k^2}\log \frac 1\delta+d^2\log \frac 4\delta\big )$,
where $n$ is a stopping time finite a.s. and $\sigma_1^2,\sigma_2^2,\ldots,\sigma_n^2$ are the variances of the independent Gaussians $\eta_1,\eta_2,\ldots,\eta_n$.
\end{proposition}

We now construct the regret over-estimation $\bar{\mathcal R_n^{\mathcal F}}$. Due to unknown variances, it is not straightforward. Our rescue is to use ridge linear regression $\hat \beta\triangleq \operatornamewithlimits{argmin}_{ \beta\in \mathbb R^d}\left (\sum_{k=1}^n (r_k-\langle x_k, \beta\rangle)^2+\lambda \lVert  \beta\rVert_2\right )$ for samples $\{(x_k,r_k)\}_{k=1}^n$, which
ensures that the empirical variance estimation $\sum_{k=1}^n (r_k-\langle x_k, \hat \beta\rangle)^2$ differs from the true sample variance $\sum_{k=1}^n \eta_k^2=\sum_{k=1}^n (r_k-\langle  x_k, \beta^\ast\rangle)^2$ by no more than $\Otil(s\log \frac 1\delta)$ (check \Cref{sec:ridge error bound} for a formal version). Accordingly, from \Cref{prop:VOFUL2 regret}, we can see that
{\small\begin{equation}\label{eq:regret estimation with unknown variance main text}
    \mathcal R_n^{\mathcal F}\le \bar{\mathcal R_n^{\mathcal F}}\triangleq C\left (s^{1.5}\sqrt{\sum_{k=1}^n (r_k-\langle  x_k, \hat \beta\rangle)^2\ln \frac 1\delta}+s^2\sqrt{2\ln \frac{n}{s\delta^2}\ln \frac 1\delta}+s^{1.5}\sqrt{2\ln \frac 1\delta}+s^2\ln \frac 1\delta\right ).
\end{equation}}

Moreover, one can observe that the total sample variance $\sum_{k=1}^n \eta_k^2$ is bounded by (a constant multiple of) the total variance $\sum_{k=1}^n \sigma_k^2$ (which is formally stated as \Cref{thm:variance concentration raw} in the appendix). Therefore, with \Cref{eq:regret estimation with unknown variance main text} as our pessimistic regret estimation $\bar{\mathcal R_n^{\mathcal F}}$, we have the following regret guarantee.
\begin{theorem}[Regret of \Cref{alg:framework} with \texttt{VOFUL2}]\label{thm:regret with VOFUL2 and unknown variances}
\Cref{alg:framework} with \texttt{VOFUL2} as $\mathcal F$ and $\bar{\mathcal R_n^{\mathcal F}}$ defined in \Cref{eq:regret estimation with unknown variance main text} ensures that $\mathcal R_T= \tilde{\mathcal O}\big ((s^{2.5}+s\sqrt d)\sqrt{\sum_{t=1}^T \sigma_t^2} \log \frac 1\delta+s^3\log \frac 1\delta\big )$ with probability $1-\delta$.
\end{theorem}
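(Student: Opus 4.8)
The plan is to combine the framework-level bound from \Cref{thm:total regret of part b} with a bound on $\mathcal R_T^a$, the regret accumulated while running the plug-in algorithm $\mathcal F = \texttt{VOFUL2}$, and then add the two pieces. The decomposition $\mathcal R_T = \mathcal R_T^a + \mathcal R_T^b$ is already in place, and \Cref{thm:total regret of part b} gives $\mathcal R_T^b = \Otil(s\sqrt{d\sum_t \sigma_t^2}\log\frac1\delta + s\log\frac1\delta)$ provided the regret estimator $\bar{\mathcal R_n^{\mathcal F}}$ defined in \Cref{eq:regret estimation with unknown variance main text} is a valid pessimistic over-estimate of the true $\mathcal R_n^{\mathcal F}$ with high probability. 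So the first step is to verify exactly that pessimism claim: by \Cref{lem:informal ridge error bound} (with $\lambda=1$), $\widehat{\text{Var}} = \sum_{k}(r_k-\langle x_k,\hat\beta\rangle)^2$ exceeds $\sum_k \eta_k^2$ by at most $\Otil(s\log\frac1\delta)$ subtracted, i.e.\ the additive $s^2\sqrt{2\ln\frac{n}{s\delta^2}\ln\frac1\delta} + s^{1.5}\sqrt{2\ln\frac1\delta}$ correction terms in \Cref{eq:regret estimation with unknown variance main text} are precisely chosen to absorb the $\sqrt{\cdot}$-inflation coming from that gap, so that $\bar{\mathcal R_n^{\mathcal F}} \ge C(s^{1.5}\sqrt{\sum_k\eta_k^2\ln\frac1\delta} + s^2\ln\frac4\delta) \ge \mathcal R_n^{\mathcal F}$ with probability $1-O(\delta)$ via \Cref{prop:VOFUL2 regret} applied on the $|S|\le s$ active coordinates. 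A union bound over the $O(\log T)$ phases keeps this valid throughout.

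Next I would bound $\mathcal R_T^a = \sum_\Delta \sum_{t\in \mathcal T_\Delta^a}\langle\theta^\ast-x_t,\theta^\ast\rangle$. For each gap threshold $\Delta$, Phase A runs $\mathcal F$ for $n_\Delta^a$ steps and stops the first time $\frac1{n_\Delta^a}\bar{\mathcal R_{n_\Delta^a}^{\mathcal F}} < \Delta^2$. The regret contributed during $\mathcal T_\Delta^a$ is exactly $\mathcal R_{n_\Delta^a}^{\mathcal F} \le \bar{\mathcal R_{n_\Delta^a}^{\mathcal F}}$; for the steps up to the second-to-last, the stopping condition fails, which together with \Cref{eq:regret estimation with unknown variance main text} gives a lower bound $n_\Delta^a \ge \Omega\big(\Delta^{-2}(s^{1.5}\sqrt{\sum_{t\in\mathcal T_\Delta^a}\sigma_t^2\ln\frac1\delta} + s^2\ln\frac1\delta)\big)$ after using the variance-concentration fact that $\sum_k\eta_k^2 \lesssim \sum_k\sigma_k^2$ (\Cref{thm:variance concentration raw}). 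The regret on $\mathcal T_\Delta^a$ is at most $\bar{\mathcal R_{n_\Delta^a}^{\mathcal F}} \lesssim n_\Delta^a\Delta^2$ plus the contribution of the single last step, i.e.\ roughly $\Otil(s^{1.5}\sqrt{\sum_{t\in\mathcal T_\Delta^a}\sigma_t^2}\log\frac1\delta + s^2\log\frac1\delta)$ per phase. Summing over the $\Delta$'s, the $s^2$-term yields $\Otil(s^3\log\frac1\delta)$ (one per phase, $O(\log T)$ phases, and one may also need to absorb the tail via the halving structure — cf.\ \Cref{eq:lower bound of Phase B main text}), while for the $\sqrt{\cdot}$-terms I apply the same summation-threshold trick used in the proof of \Cref{thm:total regret of part b}: $\sum_\Delta \Delta\sqrt{\sum_{t\in\mathcal T_\Delta^a}\sigma_t^2}$ — after using the lower bound on $n_\Delta^a$ and $\sum_\Delta n_\Delta^a \le T$ — collapses to $\Otil(\sqrt{\sum_{t=1}^T\sigma_t^2})$, giving $\mathcal R_T^a = \Otil(s^{2.5}\sqrt{\sum_t\sigma_t^2}\log\frac1\delta + s^3\log\frac1\delta)$.

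Finally I add $\mathcal R_T^a + \mathcal R_T^b$: the $\sqrt{\cdot}$ coefficients combine to $s^{2.5} + s\sqrt d$, the constant terms to $s^3 + s = \Otil(s^3)$, and adjusting the per-event failure probabilities by a $\polylog$ factor (there are $O((s+T)\log T)$ good events) rescales $\delta$ without changing the stated form, yielding exactly the bound in \Cref{thm:regret with VOFUL2 and unknown variances}. The main obstacle I anticipate is the pessimism verification in the first step: one must be careful that the ridge-estimator variance proxy $\widehat{\text{Var}}$ is used in a $\sqrt{\cdot}$, so the $\Otil(s\log\frac1\delta)$ additive error on the variance must be pulled out as $\sqrt{s\log\frac1\delta}\cdot\sqrt{\ln\frac1\delta}$ times the $s^{1.5}$ prefactor — this is exactly why the correction term in \Cref{eq:regret estimation with unknown variance main text} scales as $s^{1.5}\cdot\sqrt{s}\cdot\sqrt{\text{logs}} = s^2\sqrt{\text{logs}}$ rather than $s^{1.5}$, and getting these algebraic bookkeeping steps to line up (together with the two-sided nature of \Cref{lem:informal ridge error bound}, which also needs $\widehat{\text{Var}}$ not to be a gross \emph{under}-estimate when we use it to conclude we can stop) is the delicate part; everything else is the halving-phase summation machinery already established for $\mathcal R_T^b$.
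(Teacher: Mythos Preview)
Your overall structure---verify pessimism of $\bar{\mathcal R_n^{\mathcal F}}$, invoke \Cref{thm:total regret of part b} for $\mathcal R_T^b$, then bound $\mathcal R_T^a$ phase by phase and sum with the halving trick---matches the paper. But there is a genuine gap in your treatment of $\mathcal R_T^a$.

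You write that ``the regret contributed during $\mathcal T_\Delta^a$ is exactly $\mathcal R_{n_\Delta^a}^{\mathcal F}$''. This is false: $\mathcal F$ runs only on the coordinates in $S$, so $\mathcal R_{n_\Delta^a}^{\mathcal F}=\sum_{t\in\mathcal T_\Delta^a}\sum_{i\in S}\theta_i^\ast(\theta_i^\ast-x_{t,i})$ is the \emph{in-$S$} regret. During Phase~A the algorithm places zero mass on every coordinate $i\notin S$, so the full per-step regret has an additional term $\sum_{i\notin S}(\theta_i^\ast)^2$. Conditioning on the good event $\mathcal H_{2\Delta}$, each such nonzero coordinate has $|\theta_i^\ast|\le O(\Delta)$ and there are at most $s$ of them, so the out-of-$S$ contribution over the phase is $O(s\,n_\Delta^a\Delta^2)$. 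This is precisely the piece that, combined with the stopping bound $n_\Delta^a\Delta^2\lesssim \bar{\mathcal R_{n_\Delta^a}^{\mathcal F}}=\Otil(s^{1.5}\sqrt{\cdot}+s^2)$, produces the extra factor of $s$ and turns the per-phase cost into $\Otil(s^{2.5}\sqrt{\cdot}+s^3)$.

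Your own arithmetic exposes the gap: you state a per-phase bound of $\Otil(s^{1.5}\sqrt{\sum_{t\in\mathcal T_\Delta^a}\sigma_t^2}\log\tfrac1\delta+s^2\log\tfrac1\delta)$, then claim that summing over the $O(\log T)$ phases gives $s^{2.5}$ and $s^3$. But summing $O(\log T)$ copies of $s^2$ gives $\Otil(s^2)$, and the summation-threshold trick does not multiply by $s$ either---it only moves the sum inside the square root. So your argument as written yields $\mathcal R_T^a=\Otil(s^{1.5}\sqrt{\sum_t\sigma_t^2}\log\tfrac1\delta+s^2\log\tfrac1\delta)$, which contradicts the theorem's $s^{2.5}$ and $s^3$. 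The missing $s$ is exactly the out-of-$S$ regret you omitted; the paper handles it by explicitly splitting $\mathcal R_T^a$ into in-$S$ and out-of-$S$ parts and showing the latter dominates.
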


Due to space limitations, we defer the full proof to \Cref{sec:appendix zhang et al unknown} and only sketch it here.

\begin{proof}[Proof Sketch of \Cref{thm:regret with VOFUL2 and unknown variances}]
To bound $\mathcal R_T^a$, we consider the regret from the coordinates in and outside $S$ separately. For the former, the total regret in a single phase with gap threshold $\Delta$ is simply controlled by $\Otil \big (s^{1.5}\sqrt{\sum_{t\in \mathcal T_\Delta^a}\eta_t^2\log \frac 1\delta}+s^2\log \frac 1\delta \big )$ (thanks to \Cref{prop:VOFUL2 regret}). For the latter, each non-zero coordinate outside $S$ can at most incur $\O(\Delta^2)$ regret for each $t\in \mathcal T_\Delta^a$. By definition of $n_\Delta^a$ (\Cref{line:phase a terminiate}), we have $n_\Delta^a=\lvert \mathcal T_\Delta^a\rvert=\O\big (\frac{s^{1.5}}{\Delta^2} \sqrt{\sum_{t\in \tilde{\mathcal T_\Delta^a}} \eta_t^2\ln \frac 1\delta}+\frac{s^2}{\Delta^2}\ln \frac 1\delta\big )$
, just like the proof of \Cref{thm:total regret of part b}.
As the regret from the second part is bounded by $\O(s\Delta^2\cdot n_\Delta^a)$, these two parts together sum to
\begin{equation*}
    \mathcal R_T^a\le \sum_{\Delta} \O \left (s^{2.5}\sqrt{\sum_{t\in {\mathcal T_\Delta^a}} \eta_t^2\log \frac 1\delta}+s^3\log \frac 1\delta+s\Delta^2\right ).
\end{equation*}

As in \Cref{thm:total regret of part b}, we notice that $n_\Delta^a=\Omega\big (\frac{s^{1.5}}{\Delta^2} \sqrt{\sum_{t\in \tilde{\mathcal T_\Delta^a}} \eta_t^2\ln \frac 1\delta}+\frac{s^2}{\Delta^2}\ln \frac 1\delta\big )$ for all $\Delta\ne \Delta_f$ again by definition of $n_\Delta^a$. This will move the summation over $\Delta$ into the square root. Moreover, by the fact that $\eta_t^2=\O(\sigma_t^2\log \frac 1\delta)$ (\Cref{thm:variance concentration final} in the appendix), we have $\mathcal R_T^a=\Otil\big (s^{2.5}\sqrt{\sum_{t=1}^T \sigma_t^2}\log \frac 1\delta+s^3\log \frac 1\delta\big )$.
Combining this with the bound of $\mathcal R_T^b$ provided by \Cref{thm:total regret of part b} concludes the proof.
\end{proof}

\subsection{Computationally Efficient Algorithm for Known Variances}\label{sec:zhou et al}

In this section, we consider a computational efficient algorithm \texttt{Weighted OFUL} \citep{zhou2021nearly}, which itself requires $\sigma_t^2$ to be presented at the end of round $t$. Their algorithm guarantees:
\begin{proposition}[{\citet[Corollary 4.3]{zhou2021nearly}}]\label{prop:Weighted OFUL regret}
With probability at least $1-\delta$, \texttt{Weighted OFUL} executed for $n$ steps on $d$ dimensions guarantees $\mathcal R_T^{\mathcal F}\le C(\sqrt{dn\log \frac 1\delta}+d\sqrt{\sum_{k=1}^n \sigma_k^2\log \frac 1\delta})$, where $C=\Otil(1)$, $n$ is a stopping time finite a.s., and $\sigma_1^2,\sigma_2^2,\ldots,\sigma_n^2$ are the variances of $\eta_1,\eta_2,\ldots,\eta_n$.
\end{proposition}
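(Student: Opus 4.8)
The statement is simply Corollary~4.3 of \citet{zhou2021nearly} transported to our noise model. Their result was established for noise $\eta_t$ that is conditionally mean-zero, has conditional variance at most $\sigma_t^2$, \emph{and} is bounded in absolute value by a known constant $R$; our model (Gaussian, or more generally the conditionally symmetric sub-Gaussian noise of \Cref{assump:cond symmetric,assump:sub-Gaussian}) drops boundedness. The plan is therefore to recover an $R=\Otil(1)$ boundedness up to logarithmic factors via truncation, and to transfer the guarantee by coupling; the point that makes this work is that conditional symmetry is exactly the property preserved under symmetric truncation, so the truncated noise is still conditionally mean-zero.

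Concretely, I would proceed as follows. (i) Since $\eta_t$ is $\sigma_t^2$-sub-Gaussian with $\sigma_t\le 1$, a union bound shows the envelope event $\mathcal E\triangleq\{\,|\eta_t|\le M_t\ \forall t\in[T]\,\}$ with $M_t\triangleq\sigma_t\sqrt{2\ln(4T/\delta)}$ holds with probability at least $1-\delta/2$, and on $\mathcal E$ every $|\eta_t|$ is at most $M\triangleq\sqrt{2\ln(4T/\delta)}=\Otil(1)$. (ii) Define $\tilde\eta_t\triangleq\eta_t\,\mathbbm{1}[|\eta_t|\le M_t]$. By conditional symmetry, $\tilde\eta_t$ is again conditionally symmetric about $0$, hence conditionally mean-zero; truncation only shrinks the second moment, so $\E[\tilde\eta_t^2\mid\mathcal F_{t-1}]\le\sigma_t^2$ and the $\sigma_t^2$ used by the weighting scheme of \texttt{Weighted OFUL} remain valid upper bounds; and $|\tilde\eta_t|\le M$ always. (iii) Couple: run \texttt{Weighted OFUL} on the ``virtual'' instance whose feedback at step $t$ is $\langle x_t,\theta^\ast\rangle+\tilde\eta_t$. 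A pathwise induction shows that on $\mathcal E$ the virtual and real runs produce the same actions and the same realized rewards at every step (the first action is fixed, and each later action depends only on previously observed rewards, which agree because $\tilde\eta_t=\eta_t$ on $\mathcal E$); in particular $\mathcal R_T^{\mathcal F}$ is the same on $\mathcal E$. (iv) Apply the original Corollary~4.3 to the virtual instance, whose noise satisfies its hypotheses with $R=M$, at failure level $\delta/2$, obtaining $\mathcal R_T^{\mathcal F}\le C'(\sqrt{dT\log\frac1\delta}+d\sqrt{\sum_{t=1}^T\sigma_t^2\log\frac1\delta})$ on an event of probability $\ge 1-\delta/2$; here $R=\Otil(1)$ enters the analysis only through logarithmic terms (the radius of the self-normalized Bernstein confidence ellipsoid), so it is absorbed into $\Otil(\cdot)$. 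A union bound over this event and $\mathcal E$, together with rescaling $C$, gives the claim.

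The two things that actually require care are: that conditional symmetry survives truncation at the data-dependent threshold $M_t$ (it does, and this is the sole reason $\tilde\eta_t$ stays mean-zero, which \texttt{Weighted OFUL}'s weighted ridge analysis needs), and that in \citet{zhou2021nearly} the noise bound $R$ truly appears only inside logarithms rather than polynomially, which one confirms by inspecting their Bernstein-type self-normalized concentration lemma and the definition of their confidence radius. The coupling, the sub-Gaussian tail bound, the union bounds, and the bookkeeping of $\sigma_t\le 1$ are all routine.
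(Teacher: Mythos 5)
Your route is genuinely different from the paper's. The paper does not invoke the original Corollary 4.3 as a black box: it re-proves the ingredients of \citet{zhou2021nearly} under \Cref{assump:cond symmetric,assump:sub-Gaussian}, replacing the bounded-noise Freedman/Bernstein steps by a Freedman-type inequality for sub-Gaussian martingales and a sub-exponential bound for the squared noise (the analogs of their Theorem 4.1 and Lemmas B.3--B.4), which yields a confidence radius $\hat\beta_t=8\sqrt{d\ln(1+\tfrac{t}{d\lambda\bar\sigma_{\min,t}^2})\ln\tfrac{4t^2}{\delta}}$ containing \emph{no} additive noise-magnitude term, and then reruns the regret calculation with $\bar\sigma_t=\max\{1/\sqrt d,\sigma_t\}$. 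Your truncation-plus-coupling argument is sound in its mechanics: $M_t=\sigma_t\sqrt{2\ln(4T/\delta)}$ is $\mathcal F_{t-1}$-measurable, symmetric truncation at such a threshold preserves conditional symmetry (hence the conditional mean stays zero) and can only shrink the conditional second moment, and on the envelope event the real and virtual runs coincide pathwise, so the black-box transfer is legitimate and is arguably shorter than the paper's white-box re-derivation.

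The one substantive flaw is your claim that in \citet{zhou2021nearly} the bound $R$ "appears only inside logarithms." It does not: their Bernstein-type self-normalized radius contains an additive term of the form $\tfrac{R}{\bar\sigma_{\min}}\log(t^2/\delta)$, and consequently Corollary 4.3's regret carries $R$ \emph{linearly}, roughly as $R\sqrt{dT}$ (their weighting $\bar\sigma_t=\max\{\sigma_t,R/\sqrt d\}$ is chosen precisely to balance this term). Your argument survives this only because your truncation level $M=\Theta(\sqrt{\log(T/\delta)})$ is itself polylogarithmic, so $R\sqrt{dT}$ becomes $\sqrt{dT}\cdot\tilde O(\sqrt{\log(T/\delta)})$; but the resulting bound then carries strictly larger powers of $\log\tfrac1\delta$ (and $\log T$ multiplying $\sqrt{\log\tfrac1\delta}$) than the proposition states, and recall that this paper's $\Otil$ hides $\log\log\tfrac1\delta$ but not $\log\tfrac1\delta$. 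Since \Cref{thm:regret with Weighted OUFL} builds the pessimistic estimator $\bar{\mathcal R_n^{\mathcal F}}$ from exactly the stated form $C(\sqrt{sn\ln\tfrac1\delta}+s\sqrt{\sum_k\sigma_k^2\ln\tfrac1\delta})$, you would need to redo this bookkeeping (and inflate $\bar{\mathcal R_n^{\mathcal F}}$ accordingly) rather than rely on the "$R$ only in logs" check you propose, which, as stated, would fail upon inspection of their confidence radius. The paper's approach buys the clean $\sqrt{\log\tfrac1\delta}$ dependence precisely by eliminating the $R$-term from the radius altogether.
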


Taking $\mathcal F$ as \texttt{Weighted OFUL}, we will have the following regret guarantee for sparse linear bandits:
\begin{theorem}[Regret of \Cref{alg:framework} with \texttt{Weighted OFUL}]\label{thm:regret with Weighted OUFL}
\Cref{alg:framework} with \texttt{Weighted OFUL} as $\mathcal F$ and $\bar{\mathcal R_n^{\mathcal F}}$ defined as $C\big (\sqrt{sn\ln \frac 1\delta}+s\sqrt{\sum_{k=1}^n \sigma_k^2\ln \frac 1\delta}\big )$ guarantees $\mathcal R_T= \Otil\big ((s^2+s\sqrt d )\sqrt{\sum_{t=1}^T \sigma_t^2}\log \frac 1\delta+s^{1.5}\sqrt{T}\log \frac 1\delta\big )$ with probability $1-\delta$.
\end{theorem}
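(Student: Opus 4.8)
The plan is to reuse the framework analysis almost verbatim, so that the proof is parallel to that of \Cref{thm:regret with VOFUL2 and unknown variances}, the only essential difference being the extra $\sqrt{sn}$ summand in \texttt{Weighted OFUL}'s regret. First I would verify that the proposed estimator $\bar{\mathcal R_n^{\mathcal F}}=C(\sqrt{sn\ln\frac1\delta}+s\sqrt{\sum_{k=1}^n\sigma_k^2\ln\frac1\delta})$ (with $C=\tilde{\mathcal O}(1)$ the constant of \Cref{prop:Weighted OFUL regret}) is a legitimate pessimistic regret estimate: on the good events every coordinate that ever enters $S$ has $\lvert\theta_i^\ast\rvert=\Omega(\Delta)>0$, hence $\lvert S\rvert\le\lVert\theta^\ast\rVert_0\le s$, so applying \Cref{prop:Weighted OFUL regret} with ambient dimension $\lvert S\rvert\le s$ and monotonicity in the dimension gives $\mathcal R_n^{\mathcal F}\le\bar{\mathcal R_n^{\mathcal F}}$ with probability $1-\delta$; moreover $\bar{\mathcal R_n^{\mathcal F}}$ is computable since the $\sigma_k^2$ are revealed together with $r_k$. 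Then \Cref{thm:total regret of part b} directly gives $\mathcal R_T^b=\tilde{\mathcal O}(s\sqrt{d\sum_{t=1}^T\sigma_t^2}\log\frac1\delta+s\log\frac1\delta)$, which supplies the $s\sqrt d$ contribution; it remains only to bound $\mathcal R_T^a$.

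For $\mathcal R_T^a$, for each gap threshold $\Delta$ I would split $\langle\theta^\ast-x_t,\theta^\ast\rangle=\sum_{i\in S}(\theta_i^\ast-x_{t,i})\theta_i^\ast+\sum_{i\notin S}(\theta_i^\ast)^2$, using that $x_{t,i}=0$ for $i\notin S$ throughout Phase A. Summing the $S$-part over $t\in\mathcal T_\Delta^a$ is at most the regret of $\mathcal F$ run on the subspace $S$ (since $\lVert\theta^\ast_S\rVert_2^2\le\lVert\theta^\ast_S\rVert_2=\max_{\lVert x\rVert\le1}\langle x,\theta^\ast_S\rangle$), hence at most $C(\sqrt{sn_\Delta^a\ln\frac1\delta}+s\sqrt{\sum_{t\in\mathcal T_\Delta^a}\sigma_t^2\ln\frac1\delta})$. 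For the $i\notin S$ part, conditioning on the good event $\mathcal H_{2\Delta}$ of \Cref{thm:total regret of part b} forces $\lvert\theta_i^\ast\rvert=\mathcal O(\Delta)$ for all $i\notin S$, and at most $s$ of them are nonzero, so this part contributes $\mathcal O(sn_\Delta^a\Delta^2)$. Next I bound $n_\Delta^a$: the stopping rule $\frac1n\bar{\mathcal R_n^{\mathcal F}}<\Delta^2$ (which fails at step $n_\Delta^a-1$) plus the self-bounding inequality $x\le a+b\sqrt x\Rightarrow x\le\mathcal O(a+b^2)$ yields $n_\Delta^a\le\mathcal O(1+\frac{s\log\frac1\delta}{\Delta^4}+\frac{s}{\Delta^2}\sqrt{(\sum_{t\in\mathcal T_\Delta^a}\sigma_t^2)\log\frac1\delta})$, hence $sn_\Delta^a\Delta^2\le\mathcal O(s\Delta^2+\frac{s^2\log\frac1\delta}{\Delta^2}+s^2\sqrt{(\sum_{t\in\mathcal T_\Delta^a}\sigma_t^2)\log\frac1\delta})$.

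Finally I sum over the $\mathcal O(\log T)$ values of $\Delta$. The $\sqrt{sn_\Delta^a\log\frac1\delta}$ terms sum, by Cauchy--Schwarz and $\sum_\Delta n_\Delta^a\le T$, to $\tilde{\mathcal O}(\sqrt{sT\log\frac1\delta})$, absorbed into $s^{1.5}\sqrt T\log\frac1\delta$; the variance terms of the form $s\sqrt{\sum\sigma^2}$ and $s^2\sqrt{\sum\sigma^2}$ sum (by Cauchy--Schwarz across the $\mathcal O(\log T)$ phases, or by the ``summation threshold'' argument of the framework proof, \Cref{eq:summation technique}) to $\tilde{\mathcal O}(s^2\sqrt{\sum_{t=1}^T\sigma_t^2}\log\frac1\delta)$; $\sum_\Delta s\Delta^2=\mathcal O(s)$; and the decisive $\sum_\Delta\frac{s^2\log\frac1\delta}{\Delta^2}$ is geometric, dominated by its last term $\frac{s^2\log\frac1\delta}{\Delta_f^2}$. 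For the last one I use that any \emph{completed} phase obeys $n_\Delta^a\gtrsim\frac{s\log\frac1\delta}{\Delta^4}$ (because $\bar{\mathcal R_n^{\mathcal F}}\ge C\sqrt{sn\log\frac1\delta}$ forces the stopping time to exceed $C^2s\log\frac1\delta/\Delta^4$), so $T\ge n_{2\Delta_f}^a\gtrsim\frac{s\log\frac1\delta}{\Delta_f^4}$ and therefore $\frac{s^2\log\frac1\delta}{\Delta_f^2}\lesssim s^{1.5}\sqrt{T\log\frac1\delta}$. Combining the pieces, adding $\mathcal R_T^b$, and taking a union bound over the $\tilde{\mathcal O}(1)$-many good events (rescaling $\delta$) gives the stated bound. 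The main obstacle --- and the reason the deterministic case degrades to $\tilde{\mathcal O}(s^{1.5}\sqrt T)$ rather than a $d$- and $T$-free bound --- is exactly this $\sqrt{sn}$ term: it is not ``self-bounding'' (it does not shrink relative to $n$ fast enough), so it lower-bounds $n_\Delta^a$ even when all $\sigma_t=0$, prevents $\Delta_f$ from being pushed below $(s/T)^{1/4}$, and must be propagated carefully through the stopping rule instead of cancelling as the $\sum_t\eta_t^2$ term does in the \texttt{VOFUL2} analysis.
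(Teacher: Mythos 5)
Your proposal is correct and follows essentially the same route as the paper's proof in \Cref{sec:appendix zhou et al}: the same split into $\mathcal R_T^a+\mathcal R_T^b$ with \Cref{thm:total regret of part b} handling Phase B, the same per-phase decomposition into the $S$-part (bounded by \Cref{prop:Weighted OFUL regret}) and the outside-$S$ part ($\O(sn_\Delta^a\Delta^2)$ via $\mathcal H_{2\Delta}$), the same self-bounding upper bound on $n_\Delta^a$ from \Cref{line:phase a terminiate}, and the same use of the lower bound $n_\Delta^a\gtrsim s\log\frac1\delta\,\Delta^{-4}$ together with $\sum_\Delta n_\Delta^a\le T$ to control $\sum_\Delta\Delta^{-2}$ and obtain the $s^{1.5}\sqrt T$ term. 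Your optional replacement of the summation-threshold argument by Cauchy--Schwarz over the phases is a harmless variant here, since the $\sqrt{sn}$ term in $\bar{\mathcal R_n^{\mathcal F}}$ forces the number of completed phases to be $\O(\log T)$.
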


The proof is similar to that of \Cref{thm:regret with VOFUL2 and unknown variances}, i.e., bounding $n_\Delta^a$ by \Cref{line:phase a terminiate} of \Cref{alg:framework} and then using summation techniques to move the summation over $\Delta$ into the square root. The only difference is that we will need to bound $\O (\sum_{\Delta} \Delta^{-2})$, which seems to be as large as $T$ if we follow the analysis of \Cref{thm:regret with VOFUL2 and unknown variances}. However, as we included an additive factor $\sqrt{sn\ln \frac 1\delta}$ in the regret over-estimation $\bar{\mathcal R_n^{\mathcal F}}$, we have $n_\Delta^a\ge \Delta^{-2}\sqrt{sn_\Delta^a \ln \frac 1\delta}$, which means $n_\Delta^a=\Omega(s\Delta^{-4})$. From $\sum_\Delta n_\Delta^a\le T$, we can consequently bound $\sum_\Delta \Delta^{-2}$ as $\O(\sqrt{\frac Ts})$. The remaining part is just an analog of \Cref{thm:regret with VOFUL2 and unknown variances}. Therefore, the proof is omitted in the main text and postponed to \Cref{sec:appendix zhou et al}.

\section{Conclusion}

We considered the sparse linear bandit problem with heteroscedastic noises and provided a general framework to reduce \textit{any} variance-aware linear bandit algorithm $\mathcal F$ to an algorithm $\mathcal G$ for sparse linear bandits that is also variance-aware.
We first applied the computationally inefficient algorithm \texttt{VOFUL} from \citet{zhang2021improved} and \citet{kim2021improved}. The resulting algorithm works for the unknown-variance case and gets $\Otil((s^{2.5}+s\sqrt d)\sqrt{\sum_{t=1}^T\sigma_t^2}\log \frac 1\delta+s^3\log \frac 1\delta)$ regret, which, when regarding the sparsity factor $s\ll d$ as a constant, not only is worst-case optimal but also enjoys constant regret in deterministic cases. We also applied the efficient algorithm \texttt{Weighted OFUL} by \citet{zhou2021nearly} that requires known variance; we got $\Otil((s^{2}+s\sqrt d)\sqrt{\sum_{t=1}^T \sigma_t^2}\log \frac 1\delta+(\sqrt{sT}+s)\log \frac 1\delta)$ regret, still independent of $d$ in deterministic cases. See \Cref{sec:future directions} for several future directions.

\ificlrfinal
\subsubsection*{Acknowledgments}
We greatly acknowledge Csaba Szepesv\'ari for sharing the manuscript \citep{antos09sparse} with us, which shows the $\Omega(\sqrt{dT})$ lower bound for sparse linear bandits when $s=1$ and the action sets are unit balls.
The authors thank Xiequan Fan from Tianjin University for the constructive discussion about the application of \Cref{thm:de la pena raw,thm:generalized Freedman}.
At last, we thank the anonymous reviewers for their detailed reviews, from which we benefit greatly.
This work was supported in part by NSF CCF 2212261, NSF IIS 2143493, NSF DMS-2134106, NSF CCF
2019844 and NSF IIS 2110170.
\fi

\bibliography{references}
\bibliographystyle{iclr2023_conference}


\newpage
\appendix
\renewcommand{\appendixpagename}{\centering \LARGE Supplementary Materials}
\appendixpage

\startcontents[section]
\printcontents[section]{l}{1}{\setcounter{tocdepth}{2}}

\section{More on Related Works}\label{sec:appendix related work}

In this section, we briefly compare to several related works on sparse linear bandits in terms of regret guarantees, noise assumptions and query models.

\begin{itemize}[leftmargin=*]
\item The regret of \citet{abbasi2012online} is $\Otil(Rs\sqrt{dT})$ when assuming conditionally $R$-sub-Gaussian noises (i.e., $\eta_t\mid \mathcal F_{t-1}\sim \text{subG}(R^2)$, which will be formally defined in \Cref{assump:sub-Gaussian}). At the same time, they allow an arbitrary varying action set $D_1,D_2,\ldots,D_T\subseteq \mathbb B^d$ (though they in fact allows arbitrary decision sets $D_1,D_2,\ldots,D_T\subseteq \mathbb R^d$, their regret bound scales with $\max_{x\in D_t}\lVert x\rVert_2$, so we assume $D_t\subseteq \mathbb B^d$ without loss of generality). 
This model is less strictive than ours, as we only allow $D_1=D_2=\cdots=D_T=\mathbb B^d$ (as explained in \Cref{footnote:action set} in the main text.
When the noises are Gaussian with variance $1$ and the ground-truth $\theta^\ast$ is one-hot (i.e., $s=1$), their regret bound reduces to $\Otil(\sqrt{dT})$, which matches the $\Omega(\sqrt{dT})$ bound in \citet{antos09sparse} when the actions sets are allowed to be the entire unit ball (which means the agent will be more powerful than that of \citet{abbasi2012online}).

\item The regret of \citet{carpentier2012bandit} is $\Otil((\lVert \theta\rVert_2+\lVert \sigma\rVert_2)s\sqrt T)$, assuming a unit-ball action set, a $\lVert \theta^\ast\rVert_2\le \lVert \theta\rVert_2$ ground-truth and $\lVert \eta_t\rVert_2\le \lVert \sigma\rVert_2$ noises, where $\eta_t$ will be defined later.

This bound seems to bypass the $\Omega(\sqrt{dT})$ lower bound when $s=1$. However, this is due to a different noise model: They assumed the noise is component-wise, i.e., $r_t=\langle {\theta^\ast}+{\eta_t},{x_t}\rangle$ where $\eta_t\in \mathbb R^d$. In contrast, our model assumed a $\langle \theta^\ast,x_t\rangle+\eta_t$ noise model where $\eta_t\in \mathbb R$. Therefore, the $\max_t \lVert \eta_t\rVert_2$ dependency can be of order $\O(\sqrt d)$ to ensure a similar noise model as ours.

\item The regret of \citet[Appendix G]{lattimore2015linear} is also of order $\Otil(s\sqrt T)$, assuming a $[-1,1]$-bounded noises, a hypercube $\mathcal X=[-1,1]^d$ action set and a $\lVert \theta^\ast\rVert_1\le 1$ ground-truth. We will then explain why this does not violate the $\Omega(\sqrt{sdT})$ regret lower bound as well.

Consider an extreme query $(1,1,\ldots,1)\in \mathcal X$, which is valid in their query model (in fact, their algorithm is a random projection procedure with some carefully designed regularity conditions, so this type of queries appears all the time). However, in our query model where the action set is the unit ball $\mathbb B^d$, we have to scale it by $\frac{1}{\sqrt d}$. As the noise will never be scaled, this will amplify the noises by $\sqrt d$, so we will need $\text{poly}(d)$ more times of queries to get the same confidence set, making the regret bound have a polynomial dependency on $d$.

Moreover, their ground-truth $\theta^\ast$ needs to satisfy $\lVert \theta^\ast\rVert_1\le 1$. However, an $s$-sparse ground-truth $\theta^\ast$ with $2$-norm $1$ can have $1$-norm as much as $\sqrt s$. Therefore, another $\sqrt s$ should also be multiplied for a fair comparison with our algorithm.
\end{itemize}

In conclusion, the second and the third work assumed different noise or query models to amplify the signal-to-noise ratio and thus avoid a polynomial dependency on $d$, compared to the regret bounds of \citet{abbasi2012online} and ours.

However, we have to admit that \citet{abbasi2011improved} allows a drifting action set, whereas ours only allow a unit-sphere action set, just like \citet{carpentier2012bandit}. The reason is discussed in \Cref{footnote:action set} in the main text.

\section{Future Directions}\label{sec:future directions}
First of all, there is still a gap in the worst-case regret in terms of $s$, as the lower bound for sparse linear bandits is $\Omega(\sqrt{sdT})$ instead of our $\Otil( s\sqrt{dT})$ when $\sigma_t\equiv 1$. Closing this gap in $s$ is an interesting future work. Our current algorithm, unfortunately, is incapable of a $\O(\sqrt{sdT})$-style worst-case regret guarantee: Suppose that $T=ds^2$, $\theta_i^\ast=s^{-1/2}$ for $i=1,2,\ldots,s$ (so $\lVert \theta^\ast\rVert_2\le 1$), and $\sigma_t\equiv 1$. Then we have $n_\Delta^b\approx \Delta^{-1} \sqrt{n_\Delta^b(1+d\Delta^{-2})}$, which gives $n_\Delta^b\approx d\Delta^{-4}$. Hence, the total regret will be $\sum_{i=1}^s \sum_{\Delta\ge \theta_i^\ast}n_\Delta^b (\theta_i^\ast)^2\approx d\sum_{i=1}^s (\theta_i^\ast)^2=ds^2=\O(s\sqrt{dT})$. Thus, algorithmic improvements must be made to better dependency on $s$. We leave this for future research.

Moreover, the current work relies on the random projection procedure \citep{carpentier2012bandit}, which only works when the action set is the \textit{unit sphere}. Such an assumption is unrealistic in practice. We wonder whether there is an alternative that only requires a looser condition.

At last, deriving a variance-aware lower bound rather than a minimax one is also important, as it can better illustrate the inherent hardness of the problem with different noise levels. We remark that extending the proof of current minimax lower bounds (see, e.g., \citep{antos09sparse}) to variance-aware ones is not straightforward.

\section{Divide-and-Conquer Algorithm for Deterministic Settings}\label{sec:appendix divide and conquer}
In this section, we discuss how to solve the deterministic sparse linear bandit problem in $\Otil(s)$ steps using a divide-and-conquer algorithm, as we briefly mentioned in the main text.

We mainly adopt the idea mentioned by \citet[Footnote 6]{dong2021provable}. For each divide-and-conquer subroutine working on several coordinates $i_1,i_2,\ldots,i_k\in [d]$, we query half of them (e.g., $i_1,i_2,\ldots,i_{k/2}$ when assuming $2\mid k$) with $\sqrt{\nicefrac 2k}$ mass on each coordinate. This will reveal whether there is a non-zero coordinate among them. If the feedback is non-zero, we then conclude that there exists a non-zero coordinate in this half. Hence, we dive into this half and conquer this sub-problem (i.e., divide-and-conquer). Otherwise, we simply discard this half and consider the other half.

However, this vanilla algorithm proposed by \citet{dong2021provable} fails to consider the possibility that two coordinates cancel each other (e.g., two coordinates with magnitude $\pm \sqrt {\nicefrac 12}$ will make the feedback equal to zero). Fortunately, this problem can be resolved via randomly putting magnitude $\pm \sqrt{\nicefrac 2k}$ on each coordinate, which is similar to the idea illustrated in \Cref{alg:random projection}. As the environment is deterministic, each step will give the correct feedback with probability $1$. Therefore, a constant number of trials is enough to tell whether there exists a non-zero coordinate.

At last, we analyze the number of interactions needed for this approach. As it is guaranteed that each divide-and-conquer subrountine will be working on a set of coordinates where at least one of them is non-zero, we can bound the number of interactions as $\O\left (\sum_{\theta_i^\ast\ne 0} \#\text{subrountines containing }i\right )$.
As for each time we will divide the coordinates into half, there can be at most $\log_2 d$ subrountines containing $i$ for each individual $i$. Therefore, the number of interactions will be $\O(s\log d)$.

After that, we will be sure to find out all coordinates with non-zero magnitudes. Asking each of them once then reveals their actual magnitude. Therefore, we can recover $\theta^\ast$ in $\O(s\log d+s)=\O(s\log d)$ rounds and will not suffer any regret after that. So the regret of this algorithm will indeed be $\Otil(s)$, which is (nearly) independent of $d$ and $T$.

\section{Concentration Inequalities}




\subsection{Sample Mean Upper Bound}
We shall make use of the following self-normalizing result.
\begin{proposition}[{\citet[Remark 2.9]{fan2015exponential}}]\label{thm:de la pena raw}
Suppose that $\{\xi_i\}_{i=1}^n$ are independent and symmetric. Then for all $x>0$,
\begin{equation*}
    \Pr\left \{\max_{1\le k\le n}\frac{\sum_{i=1}^k \xi_i}{\sqrt{\sum_{i=1}^n \xi_i^2}}\ge x\right \}\le \exp\left (-\frac{x^2}{2}\right ).
\end{equation*}
\end{proposition}
\begin{corollary}\label{thm:de la pena}
Let $X_1,X_2,\ldots,X_n$ be a sequence of independent and symmetric random variables where $n$ is a stopping time that is finite a.s. Then for any $\delta>0$, with probability $1-\delta$, we have
\begin{equation*}
    \left \lvert \sum_{i=1}^n (X_i-\mu_i)\right \rvert\le \sqrt{\sum_{i=1}^n (X_i-\mu_i)^2\ln \frac 2\delta}.
\end{equation*}
\end{corollary}
\begin{proof}
This immediately follows by picking $x=2\sqrt{\ln \frac 2\delta}$ and then applying Fatou's lemma.
\end{proof}

Therefore, we can present our Empirical Bernstein Inequality for conditional symmetric stochastic processes with a common mean, as follows:
\begin{theorem}[Empirical Bernstein Inequality]\label{thm:Bernstein with common mean}
For a sequence of independent and symmetric random variables $X_1,X_2,\ldots,X_n$ that shares a common mean (i.e., $\E[X_i]=\mu$ for some $\mu$ for all $i$), we have the following inequality where $n$ is a stopping time finite a.s.
\begin{equation*}
    \Pr\left \{\left \lvert \sum_{i=1}^n (X_i-\mu)\right \rvert\le \sqrt{2\sum_{i=1}^n (X_i-\bar X)^2\ln \frac 4\delta}\right \}\ge 1-\delta,\quad \forall \delta\in (0,1),
\end{equation*}

where $\bar X=\frac 1n\sum_{i=1}^n X_i$ is the sample mean.
\end{theorem}
\begin{proof}
By direct calculation, we have
\begin{equation*}
    \sum_{i=1}^n (X_i-\bar X)^2=\sum_{i=1}^n X_i^2-2n\bar X^2+n\bar X^2=\sum_{i=1}^n X_i^2-n\bar X^2=\sum_{i=1}^n (X_i-\mu)^2-n(\bar X-\mu)^2.
\end{equation*}

Applying \Cref{thm:de la pena} to $\{X_i\}_{i=1}^n$ gives
\begin{equation*}
    \Pr\left \{\left \lvert \sum_{i=1}^n (X_i-\mu)\right \rvert\ge \sqrt{\sum_{i=1}^n (X_i-\mu)^2 \ln \frac 4\delta}\right \}\le \frac \delta 2.
\end{equation*}

Therefore, with probability $1-\frac \delta 2$, we have
\begin{equation}
    \sum_{i=1}^n (X_i-\bar X)^2\le \sum_{i=1}^n (X_i-\mu)^2\le \sum_{i=1}^n (X_i-\bar X)^2+\frac 4n \sum_{i=1}^n (X_i-\mu)^2 \ln \frac 4\delta.\label{eq:sample variance and empirical variance 1}
\end{equation}

Hence, with probability $1-\frac \delta 2$, we have
\begin{equation}
    \sum_{i=1}^n (X_i-\mu)^2\le 2\sum_{i=1}^n (X_i-\bar X)^2.\label{eq:sample variance and empirical variance 2}
\end{equation}

By the union bound, with probability $1-\delta$, we thus have
\begin{equation*}
    \left \lvert \sum_{i=1}^n (X_i-\mu)\right \rvert\le \sqrt{2\sum_{i=1}^n (X_i-\bar X)^2 \ln \frac 4\delta},
\end{equation*}

as claimed.
\end{proof}

\subsection{Sample Variance Upper Bound}
Recall that a random variable $X$ is sub-Gaussian with variance proxy $\sigma^2$ if and only if $\E[\exp(\lambda(X-\E[X]))]\le \exp(\frac 12 \lambda^2 \sigma^2)$ for all $\lambda>0$. We shall denote such a random variable by $X\sim \text{subG}(\sigma^2)$. We first state the following generalized Freedman's inequality for sub-Gaussian random variables.
\begin{proposition}[{\citet[Theorem 2.6]{fan2015exponential}}]\label{thm:generalized Freedman}
Suppose that $\{\xi_i\}_{i=1}^n$ is a sequence of zero-mean random variables, i.e., $\E[\xi_i]=0$. Suppose that $\E[\exp(\lambda \xi_i)]\le \exp(f(\lambda)V_i)$ for some deterministic function $f(\lambda)$ and some fixed $\{V_i\}_{i=1}^n$ for all $\lambda\in (0,\infty)$, then, for all $x,v>0$ and $\lambda>0$, we have
\begin{equation*}
    \Pr\left \{\exists 1\le k\le n:\sum_{i=1}^k\xi_i\ge x\wedge \sum_{i=1}^k V_i\le v^2\right \}\le \exp\left (-\lambda x+f(\lambda) v^2\right ).
\end{equation*}
\end{proposition}

To derive a bound related to $\sum(X_i-\bar X)^2$ and $\sum \sigma_i^2$, we will need to characterize the concentration of the square of a sub-Gaussian random variable, which is a ``sub-exponential'' random variable:
\begin{proposition}[{\citet[Appendix B]{honorio2014tight}}]\label{thm:sub-exponential}
For a sub-Gaussian random variable $X$ with variance proxy $\sigma^2$ and mean $\mu$, we have
\begin{equation*}
    \E\left [\exp\left (\lambda (X^2-\E[X^2])\right )\right ]\le \exp\left (16\lambda^2\sigma^4\right ),\quad \forall \lvert \lambda\rvert\le \frac{1}{4\sigma^2}.
\end{equation*}
\end{proposition}
\begin{theorem}\label{thm:variance concentration raw}
For a sequence of sub-Gaussian random variables $\{X_i\}_{i=1}^n$ such that $\E[X_i]=\mu_i$, $X_i\sim \text{subG}(\sigma_i^2)$, and $n$ is a stopping time finite a.s.,
\begin{equation*}
    \Pr\left \{\left \lvert \sum_{i=1}^n \left ((X_i-\mu_i)^2-\E[(X_i-\mu_i)^2]\right )\right \rvert> 4\sqrt 2\sum_{i=1}^n \sigma_i^2 \ln \frac 2\delta\right \}\le \delta,\quad \forall \delta\in (0,1).
\end{equation*}
\end{theorem}
\begin{proof}
We first consider a non-stopping time $n$. Apply \Cref{thm:generalized Freedman} to the sequence $\{(X_i-\mu_i)^2-\E[(X_i-\mu_i)^2]\}$ with $V_i=\sigma_i^4$, $f(\lambda)=16\lambda^2$ for $\lambda < \frac{1}{4\sigma_{\max}^2}$ and $f(\lambda)=\infty$ otherwise, where $\sigma_{\max}$ is defined as $\max\{\sigma_1,\sigma_2,\ldots,\sigma_n\}$. Then for all $x,v>0$ and $\lambda\in (0,\frac{1}{\sigma_{\max}^2})$, we have
\begin{equation*}
    \Pr\left \{\sum_{i=1}^n \left ((X_i-\mu_i)^2-\E[(X_i-\mu_i)^2]\right )>x\wedge \sqrt{\sum_{i=1}^n \sigma_i^4}\le v\right \}\le \exp \left (-\lambda x+16\lambda^2 v^2\right ).
\end{equation*}

Picking $v^2=\sum_{i=1}^n \sigma_i^4\sqrt{\ln \frac 2\delta}$ and $x=4\sqrt 2 v\ln \frac 2\delta$ gives
\begin{align*}
    &\quad \Pr\left \{\sum_{i=1}^n \left ((X_i-\mu_i)^2-\E[(X_i-\mu_i)^2]\right )>4\sqrt{2\sum_{i=1}^n \sigma_i^4}\ln \frac 2\delta\right \}\le \exp \left (-\frac{x^2}{32v^2}\right )\\
    &=\exp\left (-\frac{32v^2\ln^2\frac 2\delta}{32v^2}\right )=\frac \delta 2,
\end{align*}

where $\lambda$ is set to $\frac{x}{32v^2}=\frac{1}{4\sqrt 2 v}<\frac{1}{\sigma_{\max}^2}$ (as $v^2>\sum_{i=1}^n \sigma_i^4\ge \sigma_{\max}^4$). A union bound by applying \Cref{thm:generalized Freedman} to the sequence $\{\E[(X_i-\mu_i)^2]-(X_i-\mu_i)^2\}$ with the same parameters and noticing the fact that $\sqrt{\sum_{i=1}^n \sigma_i^4}\le \sum_{i=1}^n \sigma_i^2$ then shows that our conclusion hold for any fixed $n$. By Fatou's lemma, we conclude that it also holds for a stopping time $n$ that is finite a.s.
\end{proof}

\begin{theorem}[Variance Concentration]\label{thm:variance concentration final}
Let $\{X_i\}_{i=1}^n$ be a sequence of random variables with a common mean $\mu$ such that $X_i\sim \text{subG}(\sigma_i^2)$, $(X_i-\mu)$ is symmetric, and $n$ is a stopping time finite a.s. Then, $\forall \delta\in (0,1)$, with probability $1-\delta$, we have the following three inequalities:
\begin{align*}
    \sum_{i=1}^n (X_i-\bar X)^2&\le \sum_{i=1}^n (X_i-\mu)^2\\
    \sum_{i=1}^n (X_i-\bar X)^2&\ge \frac 12\sum_{i=1}^n (X_i-\mu)^2\\   \sum_{i=1}^n (X_i-\mu)^2&\le 8\sum_{i=1}^n \sigma_i^2 \ln \frac 2\delta.
\end{align*}

where $\bar X=\frac 1n\sum_{i=1}^n X_i$ is the sample mean.
\end{theorem}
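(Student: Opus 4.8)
The three inequalities decouple. The first is a deterministic algebraic fact, while the second and third are concentration statements that I would establish separately and then merge by a union bound, rescaling $\delta$ at the end. The common starting point for the first two is the decomposition already used in the proof of \Cref{thm:Bernstein with common mean},
\begin{equation*}
    \sum_{i=1}^n (X_i-\bar X)^2=\sum_{i=1}^n (X_i-\mu)^2-n(\bar X-\mu)^2 .
\end{equation*}
Since $n(\bar X-\mu)^2\ge 0$ and $\bar X$ minimizes $c\mapsto\sum_i (X_i-c)^2$, this yields the first inequality with probability $1$, requiring neither \Cref{assump:cond symmetric} nor \Cref{assump:sub-Gaussian}.

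For the second inequality I would control the correction term $n(\bar X-\mu)^2=\frac1n\bigl(\sum_{i=1}^n (X_i-\mu)\bigr)^2$. Because $(X_i-\mu)\mid\mathcal F_{i-1}$ is conditionally symmetric, \Cref{thm:de la pena} applies to the martingale difference sequence $\{X_i-\mu\}_{i=1}^n$ and gives, with probability at least $1-\delta/2$,
\begin{equation*}
    \Bigl\lvert\sum_{i=1}^n (X_i-\mu)\Bigr\rvert\le 2\sqrt{\sum_{i=1}^n (X_i-\mu)^2\,\ln\tfrac{4}{\delta}},
\end{equation*}
whence $n(\bar X-\mu)^2\le \tfrac{4}{n}\ln\tfrac4\delta\sum_{i=1}^n (X_i-\mu)^2\le\tfrac12\sum_{i=1}^n (X_i-\mu)^2$, the last step using $n$ large enough; substituting into the decomposition gives $\sum_i (X_i-\bar X)^2\ge\tfrac12\sum_i (X_i-\mu)^2$. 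This is a self-bounding step — the random quantity $\sum_i (X_i-\mu)^2$ appears on both sides — so it is genuinely concentration, not mere Cauchy--Schwarz (which only gives factor $1$).

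For the third inequality I would invoke \Cref{thm:variance concentration raw} on the same martingale difference sequence: with probability at least $1-\delta/2$,
\begin{equation*}
    \sum_{i=1}^n (X_i-\mu)^2\le\sum_{i=1}^n \E\bigl[(X_i-\mu)^2\mid\mathcal F_{i-1}\bigr]+4\sqrt{2}\,\sum_{i=1}^n \sigma_i^2\,\ln\tfrac{2}{\delta}.
\end{equation*}
Since $X_i\mid\mathcal F_{i-1}$ is sub-Gaussian with variance proxy $\sigma_i^2$ (\Cref{assump:sub-Gaussian}), the conditional variance is dominated, $\E[(X_i-\mu)^2\mid\mathcal F_{i-1}]\le\sigma_i^2$ (a standard consequence of the sub-Gaussian moment generating function bound, via the second-order expansion at $\lambda=0$); hence the right-hand side is at most $(1+4\sqrt2\,\ln\tfrac2\delta)\sum_i\sigma_i^2\le 8\sum_i\sigma_i^2\ln\tfrac2\delta$ for every $\delta\in(0,1)$. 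A union bound over the two bad events (probability $\le\delta/2$ each), followed by a constant rescaling of $\delta$, then delivers all three inequalities simultaneously with probability $1-\delta$.

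I expect the only delicate point to be the self-referential second inequality: the de la Pe\~na bound must dominate $\sum_i(X_i-\mu)^2$ by a strict constant factor, which forces the same ``$n\gtrsim\log\tfrac1\delta$''-type bookkeeping that already sits (somewhat implicitly) behind the ``$n\ge 8$'' hypothesis of \Cref{thm:Bernstein with common mean}; without it the statement fails for tiny $n$ (e.g.\ $n=1$, where the left side is $0$). Everything else is a direct application of the concentration tools developed earlier in this section, so the remaining work is purely constant-tracking.
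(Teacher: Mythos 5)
Your proposal is correct and follows essentially the same route as the paper: the deterministic decomposition $\sum_i(X_i-\bar X)^2=\sum_i(X_i-\mu)^2-n(\bar X-\mu)^2$, de la Pe\~na's inequality (\Cref{eq:sample variance and empirical variance 1,eq:sample variance and empirical variance 2}) for the factor-$\tfrac12$ lower bound, and \Cref{thm:variance concentration raw} combined with $\E[(X_i-\mu)^2\mid\mathcal F_{i-1}]\le\sigma_i^2$ for the third inequality. Your remark about the second inequality requiring $n$ large enough (of order $\log\tfrac1\delta$) is well taken and corresponds to the ``$n\ge 8$'' hypothesis that the paper inherits implicitly from \Cref{thm:Bernstein with common mean}.
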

\begin{proof}
The first two inequalities follow from \Cref{eq:sample variance and empirical variance 1,eq:sample variance and empirical variance 2}. The last one follows from \Cref{thm:variance concentration raw} together with the fact that $\E[(X_i-\mu_i)^2]\le \sigma_i^2$ by definition of sub-Gaussian random variables.
\end{proof}

\section{Ridge Linear Regression}\label{sec:ridge error bound}

\begin{lemma}\label{lem:ridge error bound}
Suppose that we are given $n$ samples $y_i=\langle  x_i,\beta^\ast\rangle+\epsilon_i$, $i=1,2,\ldots,n$, where $\beta^\ast\in \mathbb B^{d}$ and $\{x_i\}_{i=1}^n,\{\epsilon_i\}_{i=1}^n$ are stochastic processes adapted to the filtration $\{\mathcal F_i\}_{i=0}^n$ such that $\epsilon_i$ is conditionally $\bar \sigma$-Gaussian, i.e., $\epsilon_i\mid \mathcal F_{i-1}\sim \text{subG}(\bar \sigma^2)$. Define the following quantity as the estimate for $\beta^\ast$:
\begin{equation*}
    \hat \beta=\operatornamewithlimits{argmin}_{\beta}\left (\sum_{i=1}^n(y_i- x_i^\trans \beta)^2+\lambda \lVert \beta\rVert_2\right ).
\end{equation*}

Then with probability $1-\delta$, the following inequality holds:
\begin{equation*}
    \left \lvert \sum_{i=1}^n (y_i- x_i^\trans \beta^\ast)^2-\sum_{i=1}^n (y_i- x_i^\trans \hat \beta)^2\right \rvert \le 2d\bar \sigma^2\ln \frac{n}{\lambda d \delta^2}+\frac 1\lambda+\lambda.
\end{equation*}
\end{lemma}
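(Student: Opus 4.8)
The plan is to compare the two sums of squared residuals by exploiting that $\hat\beta$ is the minimizer of the regularized objective. Write $L(\beta) \triangleq \sum_{i=1}^n (y_i - x_i^\trans \beta)^2$ and $L_\lambda(\beta) \triangleq L(\beta) + \lambda\lVert\beta\rVert_2$. Since $\hat\beta$ minimizes $L_\lambda$ and $\beta^\ast \in \mathbb B^d$ so $\lVert\beta^\ast\rVert_2 \le 1$, we immediately get the upper side: $L(\hat\beta) \le L_\lambda(\hat\beta) \le L_\lambda(\beta^\ast) = L(\beta^\ast) + \lambda\lVert\beta^\ast\rVert_2 \le L(\beta^\ast) + \lambda$. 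So one direction, $\sum_i (y_i - x_i^\trans\hat\beta)^2 - \sum_i (y_i - x_i^\trans\beta^\ast)^2 \le \lambda$, is essentially free and contributes only the $\lambda$ term. The work is in the other direction: showing $L(\beta^\ast) - L(\hat\beta)$ is not too large, i.e. that $\hat\beta$ cannot fit the data dramatically better than the truth.

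For the reverse direction I would expand the residuals around $\beta^\ast$. Writing $y_i - x_i^\trans\beta = \epsilon_i + x_i^\trans(\beta^\ast - \beta)$ and letting $\Delta = \hat\beta - \beta^\ast$, we get
\begin{equation*}
    L(\beta^\ast) - L(\hat\beta) = 2\sum_{i=1}^n \epsilon_i\, x_i^\trans \Delta - \Delta^\trans\Bigl(\sum_{i=1}^n x_i x_i^\trans\Bigr)\Delta .
\end{equation*}
Let $V = \sum_i x_i x_i^\trans + \lambda I$ (or just $\sum_i x_i x_i^\trans$, handling the $\lambda I$ shift separately). The cross term is bounded via Cauchy–Schwarz in the $V$-norm: $2\sum_i \epsilon_i x_i^\trans\Delta \le 2\lVert\sum_i \epsilon_i x_i\rVert_{V^{-1}} \lVert\Delta\rVert_V$, and the quadratic term $\Delta^\trans(\sum_i x_i x_i^\trans)\Delta \ge \lVert\Delta\rVert_V^2 - \lambda\lVert\Delta\rVert_2^2 \ge \lVert\Delta\rVert_V^2 - \lambda$ (using $\lVert\Delta\rVert_2 \le 2$ after noting $\hat\beta$ lies in a bounded region, or more carefully $\lVert\Delta\rVert_2^2 \le 1/\lambda$ from a first-order optimality argument — this is where the $1/\lambda$ term enters). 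Combining, $L(\beta^\ast) - L(\hat\beta) \le 2 M \lVert\Delta\rVert_V - \lVert\Delta\rVert_V^2 + \lambda + (\text{small})$ where $M \triangleq \lVert\sum_i \epsilon_i x_i\rVert_{V^{-1}}$; maximizing the right side over $\lVert\Delta\rVert_V$ gives the bound $M^2 + O(\lambda + 1/\lambda)$.

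It then remains to show $M^2 = \lVert\sum_i \epsilon_i x_i\rVert_{V^{-1}}^2 \le 2d\bar\sigma^2 \ln\frac{n}{\lambda d\delta^2}$ with probability $1-\delta$. This is the self-normalized concentration bound for vector-valued martingales (the standard Abbasi-Yadkori–Pál–Szepesvári ``self-normalized bound,'' applicable because $\{\epsilon_i\}$ is conditionally $\bar\sigma$-sub-Gaussian and $\{x_i\}$ is predictable): with probability $1-\delta$,
\begin{equation*}
    \Bigl\lVert \sum_{i=1}^n \epsilon_i x_i \Bigr\rVert_{V^{-1}}^2 \le 2\bar\sigma^2 \ln\!\left(\frac{\det(V)^{1/2}}{\lambda^{d/2}\delta}\right),
\end{equation*}
and since $\lVert x_i\rVert_2 \le 1$ we have $\det(V) \le (\lambda + n/d)^d$, so the log term is $O(d\ln\frac{n}{\lambda d})$ — matching the claimed $2d\bar\sigma^2\ln\frac{n}{\lambda d\delta^2}$ after absorbing constants and the $\delta$-dependence (the $\delta^2$ rather than $\delta$ comes from also needing the complementary-direction/optimality event, handled by a union bound at level $\delta/2$ each, or simply by a loose constant). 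Collecting the upper direction ($\le \lambda$) and the lower direction ($\le M^2 + 1/\lambda + \lambda$) yields $|L(\beta^\ast) - L(\hat\beta)| \le 2d\bar\sigma^2\ln\frac{n}{\lambda d\delta^2} + \frac1\lambda + \lambda$.

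The main obstacle I anticipate is the bookkeeping around $\lVert\Delta\rVert_2$: controlling $\Delta^\trans(\sum_i x_i x_i^\trans)\Delta$ in terms of $\lVert\Delta\rVert_V$ requires bounding the discarded $\lambda\lVert\Delta\rVert_2^2$ term, which forces a crude a priori bound on $\lVert\hat\beta\rVert_2$ (obtainable since $L_\lambda(\hat\beta)\le L_\lambda(0) = \sum y_i^2$, giving $\lambda\lVert\hat\beta\rVert_2 \le \sum y_i^2$, hence $\lVert\Delta\rVert_2 = O(\sqrt{\text{stuff}}/\lambda)$) — and this is precisely the source of the additive $1/\lambda$. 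Keeping this term at exactly $1/\lambda$ rather than something worse is the delicate part; everything else is routine application of the self-normalized inequality and the determinant–trace estimate. Setting $\lambda = 1$ at the end collapses the $\frac1\lambda + \lambda$ into an $O(1)$ and $d \mapsto s$ (since $\mathcal F$ runs on at most $s$ coordinates) recovers the informal statement used in Section~\ref{sec:zhang et al}.
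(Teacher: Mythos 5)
Your probabilistic ingredient is the same as the paper's (the Abbasi-Yadkori--P\'al--Szepesv\'ari self-normalized bound plus the determinant--trace estimate, giving $\lVert\sum_i \epsilon_i x_i\rVert_{V^{-1}}^2 \le O(d\bar\sigma^2\ln\frac{n}{\lambda d\delta^2})$), and your easy direction via $L_\lambda(\hat\beta)\le L_\lambda(\beta^\ast)$ is fine. The gap is in the hard direction, exactly where you flag it: the discarded term $\lambda\lVert\Delta\rVert_2^2$ with $\Delta=\hat\beta-\beta^\ast$. Neither of your two proposed fixes works. The bound "$\lVert\Delta\rVert_2\le 2$ since $\hat\beta$ lies in a bounded region" is unjustified: $\hat\beta$ is the \emph{unconstrained} ridge minimizer and a priori $\lVert\hat\beta\rVert_2$ can be of order $\sqrt{n/\lambda}$. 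The crude bound $L_\lambda(\hat\beta)\le L_\lambda(0)=\sum_i y_i^2$ only yields $\lambda\lVert\hat\beta\rVert_2^2\le \sum_i y_i^2=\Theta(n)$, so $\lambda\lVert\Delta\rVert_2^2$ comes out polynomial in $n$, not $1/\lambda$; and the claim "$\lVert\Delta\rVert_2^2\le 1/\lambda$ from first-order optimality" is false in general, since $\lVert\Delta\rVert_2$ necessarily carries a noise contribution. A self-bounding attempt ($\lambda\lVert\hat\beta\rVert_2^2\le L(\beta^\ast)-L(\hat\beta)+\lambda$ plugged back in) also fails because the resulting coefficient on $L(\beta^\ast)-L(\hat\beta)$ exceeds one.

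The paper sidesteps this entirely by working with the closed form $\hat\beta=(X^\trans X+\lambda I)^{-1}X^\trans y$ and the identity $\hat\beta-\beta^\ast=(X^\trans X+\lambda I)^{-1}X^\trans\epsilon-\lambda(X^\trans X+\lambda I)^{-1}\beta^\ast$, which gives the \emph{exact} decomposition $\sum_i(y_i-x_i^\trans\beta^\ast)^2-\sum_i(y_i-x_i^\trans\hat\beta)^2=\lVert X^\trans\epsilon\rVert_{V^{-1}}^2+\lambda\lVert\hat\beta-\beta^\ast\rVert_2^2+\lambda^2\lVert\beta^\ast\rVert_{V^{-1}}^2$ with $V=X^\trans X+\lambda I$; the middle term is then bounded through the same identity by $\frac{1}{\lambda}\bigl(\lVert X^\trans\epsilon\rVert_{V^{-1}}^2+\lambda^2\lVert\beta^\ast\rVert_{V^{-1}}^2\bigr)$, which is where the coefficient $2$ on the self-normalized term and the additive $1/\lambda+\lambda$ come from. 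Your argument can be repaired by importing exactly this identity (e.g.\ $\lVert\Delta\rVert_2\le \lambda^{-1/2}\lVert X^\trans\epsilon\rVert_{V^{-1}}+1$, hence $\lambda\lVert\Delta\rVert_2^2\le 2\lVert X^\trans\epsilon\rVert_{V^{-1}}^2+2\lambda$, at the cost of a slightly larger constant), but as written the key step would fail, and the missing ingredient is precisely the closed-form/optimality identity the paper's proof is built on.
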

\begin{proof}
Denote $ y=(y_1,y_2,\ldots,y_n)^\trans$, $X=(X_1^\trans,X_2^\trans,\ldots,X_n^\trans)$ and $ \epsilon=(\epsilon_1,\epsilon_2,\ldots,\epsilon_n)^\trans$. Denote $\text{Var}^\ast=\sum_{i=1}^n (y_i- x_i^\trans \beta^\ast)^2$ and $\widehat{\text{Var}}=\sum_{i=1}^n (y_i- x_i^\trans \hat \beta)^2$. We have the following representation of $\hat \beta$, which is by direct calculation (check, e.g., \citep{kirschner2018information})
\begin{equation}\label{eq:beta hat}
    \hat \beta=(X^\trans X+\lambda I)^{-1}X^\trans  y.
\end{equation}

Furthermore, by \citet[Proof of Theorem 2]{abbasi2011improved}, we have
\begin{equation}\label{eq:beta hat minus beta}
    \hat \beta-\beta^\ast=(X^\trans X+\lambda I)^{-1}X^\trans  \epsilon-\lambda (X^\trans X+\lambda I)^{-1}\beta^\ast.
\end{equation}

Therefore, we can write
\begin{align*}
\text{Var}^\ast-\widehat{\text{Var}}&=( y-X\beta^\ast)^\trans( y-X\beta^\ast)-( y-X\hat \beta)^\trans( y-X\hat \beta)\\&=(\beta^\ast)^\trans X^\trans X\beta^\ast-(\beta^\ast)^\trans X^\trans  y- y^\trans X \beta^\ast-\hat \beta^\trans X^\trans X\hat \beta+\hat \beta^\trans X^\trans  y+ y^\trans X \hat \beta\\&=\hat \beta^\trans(X^\trans  y-X^\trans X\hat \beta)-(\beta^\ast)^\trans X^\trans( y-X\beta^\ast)+ y^\trans X (\hat \beta-\beta^\ast).
\end{align*}

By \Cref{eq:beta hat}, we have $X^\trans  y=(X^\trans X+\lambda I)\hat \beta$. So the first term is just $\lambda \hat \beta^\trans \hat \beta$. As $ y=X\beta^\ast+ \epsilon$, the second term is just $-(\beta^\ast)^\trans X^\trans  \epsilon$. By \Cref{eq:beta hat minus beta}, the last term becomes
\begin{equation*}
     y^\trans X(X^\trans X+\lambda I)^{-1}X^\trans  \epsilon-\lambda  y^\trans X(X^\trans X+\lambda I)^{-1}\beta^\ast.
\end{equation*}

For the sake of simplicity, we define $\langle  a, b\rangle_M= a^\trans (X^\trans X+\lambda I)^{-1} b$ (note that $\langle  a, b\rangle_M=\langle  b, a\rangle_M$ as $X^\trans X+\lambda I$ is symmetric) and denote the induced norm by $\lVert \cdot\rVert_M$. Therefore, we have
\begin{equation*}
    \text{Var}^\ast-\widehat{\text{Var}}=\lambda \hat \beta^\trans \hat \beta-(\beta^\ast)^\trans  X^\trans  \epsilon+\langle X^\trans  y,X^\trans  \epsilon\rangle_M-\lambda \langle X^\trans y,\beta^\ast\rangle_M.
\end{equation*}

Again by \Cref{eq:beta hat}, we have $\langle X^\trans  y,X^\trans  \epsilon\rangle_M=\hat \beta^\trans X^\trans  \epsilon$. Therefore, the second and third term together give
\begin{align*}
    &\quad -(\beta^\ast)^\trans  X^\trans  \epsilon+\langle X^\trans  y,X^\trans  \epsilon\rangle_M=(\hat \beta-\beta^\ast)^\trans  X^\trans  \epsilon\\&=\left ((X^\trans X+\lambda I)^{-1}X^\trans  \epsilon-\lambda (X^\trans X+\lambda I)^{-1}\beta^\ast\right )^\trans X^\trans  \epsilon\\&=\lVert X^\trans  \epsilon\rVert_M^2-\lambda \langle \beta^\ast,X^\trans  \epsilon\rangle=\lVert X^\trans  \epsilon\rVert_M^2-\lambda (\beta^\ast)^\trans (\hat \beta-\beta^\ast)+\lambda^2 \langle \beta^\ast,\beta^\ast\rangle_M,
\end{align*}

where the last step is yielded from using \Cref{eq:beta hat minus beta} reversely. Then note that $\langle X^\trans y,\beta^\ast\rangle=\hat \beta^\ast \beta^\ast$, so taking expectation on both sides gives
\begin{align*}
    \left \lvert\text{Var}^\ast-\widehat{\text{Var}}\right \rvert&=\lambda \hat \beta^\trans \hat \beta+\lVert X^\trans  \epsilon\rVert_M^2-\lambda (\beta^\ast)^\trans (\hat \beta-\beta^\ast)+\lambda^2 \langle \beta^\ast,\beta^\ast\rangle_M-\lambda \hat \beta^\trans \beta^\ast\\
    &=\lVert X^T\epsilon\rVert_M^2+\lambda \lVert \hat \beta-\beta^\ast\rVert_2^2+\lambda^2 \lVert \beta^\ast\rVert_M^2.
\end{align*}

By Cauchy-Schwartz inequality, we have
\begin{equation*}
    \lVert \hat \beta-\beta^\ast\rVert_2^2\le \lVert X^\trans  \epsilon\rVert_M^2\lVert (X^\trans X+\lambda I)^{-1/2}\rVert^2+\lVert \beta^\ast\rVert^2_M\lVert (X^\trans X+\lambda I)^{-1/2}\rVert^2,
\end{equation*}

where the matrix norms can further be bounded by $1/\lambda_{\min}(X^\trans X+\lambda I)\le \frac 1\lambda$. Similarly, we can conclude that $\lVert \beta^\ast\rVert_M^2\le 1/\lambda_{\min}(X^\trans X+\lambda I) \lVert \beta^\ast\rVert_2^2\le 1/\lambda$. Consequently, we have
\begin{equation*}
    \left \lvert\text{Var}^\ast-\widehat{\text{Var}}\right \rvert=2\lVert X^T \epsilon\rVert_M^2+\frac{1}{\lambda}+\lambda.
\end{equation*}

As proved by \citet[Theorem 1]{abbasi2011improved}, with probability $1-\delta$, we have
\begin{equation*}
    \lVert X^\trans  \epsilon\rVert_M^2\le 2\bar \sigma^2\ln \left (\frac 1\delta \frac{\sqrt{\det(X^\trans X+\lambda I)}}{\sqrt{\det(\lambda I)}}\right )=\bar \sigma^2\ln \left (\frac 1{\delta^2} \frac{(\lambda+n/s)^s}{\lambda}\right )\le d\bar \sigma^2\ln \left (\frac{1}{\delta^2}\left (1+\frac{n}{\lambda d}\right )\right ),
\end{equation*}

where $\bar \sigma^2$ is the (maximum) variance proxy and the second last step is due to the Determinant-Trace Inequality \citep[Lemma 10]{abbasi2011improved}. Hence, with probability $1-\delta$, we have
\begin{equation*}
   \left \lvert\text{Var}^\ast-\widehat{\text{Var}}\right \rvert\le 2d\bar \sigma^2\ln \frac{n}{\lambda d \delta^2}+\frac 1\lambda+\lambda,
\end{equation*}

as claimed.
\end{proof}


\section{Omitted Proof in \Cref{sec:analysis} (Analysis of Framework)}\label{sec:appendix analysis}

\subsection{Proof of Main Theorem}
In this section, we prove \Cref{thm:total regret of part b}, which is restated as follows for the ease of reading:
\begin{theorem}[Restatement of \Cref{thm:total regret of part b}]\label{thm:appendix total regret of part b}
Suppose that for any execution of $\mathcal F$ that last for $n$ steps, $\bar{\mathcal R_n^{\mathcal F}}\ge \mathcal R_n^{\mathcal F}$ holds with probability $1-\delta$, i.e., $\bar{\mathcal R_n^{\mathcal F}}$ is pessimistic. Then the total regret incurred by the second phase satisfies
\begin{align*}
    \mathcal R_T^b&= \Otil\left (s\sqrt d\sqrt{\sum_{t=1}^T \sigma_t^2}\log \frac 1\delta+s\log \frac 1\delta\right )\quad \text{with probability }1-\delta.
\end{align*}
\end{theorem}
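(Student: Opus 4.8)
Here is how I would go about proving \Cref{thm:total regret of part b}.

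The plan is to set up, for every gap threshold $\Delta\in\{2^{-2},2^{-3},\dots,\Delta_f\}$, two ``good events'': $\mathcal G_\Delta$, that the Phase-A estimate $\hat\theta$ satisfies $\sum_{i\in S}\theta_i^\ast(\theta_i^\ast-\hat\theta_i)<\Delta^2$, and $\mathcal H_\Delta$, that $S$ is exactly the set of coordinates with $|\theta_i^\ast|=\Omega(\Delta)$. I would get $\mathcal G_\Delta$ directly from the Phase-A stopping rule in \Cref{line:phase a terminiate}: conditioned on pessimism $\bar{\mathcal R_n^{\mathcal F}}\ge\mathcal R_n^{\mathcal F}$, the rule $\tfrac1n\bar{\mathcal R_n^{\mathcal F}}<\Delta^2$ forces $\tfrac1n\mathcal R_n^{\mathcal F}<\Delta^2$, and then \Cref{eq:regret-to-sample-complexity} gives $\langle\theta^\ast-\hat\theta,\theta^\ast\rangle<\Delta^2$, which is $\mathcal G_\Delta$. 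For $\mathcal H_\Delta$ I would induct on $\Delta$ in halving order: conditioned on $\mathcal G_\Delta\cap\mathcal H_{2\Delta}$ one shows $\big|\sum_{i\in S}(\hat\theta_i^2-(\theta_i^\ast)^2)\big|\le\O(\Delta^2)$ by direct calculation, so in the representation \Cref{eq:random projection Z_n^k} of a single \textsc{RandomProjection} return $r_{k,i}$ the first term equals $\theta_i^\ast$, the $S$-bias term is $\O(\Delta^2)$, and the last two terms form a conditionally symmetric zero-mean noise; the Empirical Bernstein Inequality (\Cref{thm:Bernstein with common mean}) together with the Phase-B stopping rule \Cref{eq:designing N in Option II} bound this noise by $\tfrac\Delta4$, and a union bound over the at most $s$ relevant coordinates gives $\Pr\{\mathcal H_\Delta\mid\mathcal G_\Delta,\mathcal H_{2\Delta}\}\ge1-s\delta$. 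Chaining over the stages makes all good events hold simultaneously with probability $1-\O((s+T)\delta)$.

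Conditioned on all good events, the per-step bound is routine: in any $t\in\mathcal T_\Delta^b$ the $S$-coordinates contribute $\le\sum_{i\in S}\theta_i^\ast(\theta_i^\ast-\hat\theta_i)<\Delta^2$ by $\mathcal G_\Delta$, while by $\mathcal H_{2\Delta}$ each of the $\le s$ nonzero coordinates outside $S$ has $|\theta_i^\ast|=\O(\Delta)$ and is queried with an independent zero-mean $\pm R/\sqrt K$ mass, hence contributes $\O(\Delta^2)$ in expectation (the residual martingale term being lower order); so each step in $\mathcal T_\Delta^b$ incurs regret $\O(s\Delta^2)$ and $\mathcal R_T^b\lesssim\sum_\Delta\O(s\,n_\Delta^b\Delta^2)$. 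I would then bound $n_\Delta^b$ by fixing a coordinate $i^\ast\notin S$ that is never added (it exists since without loss of generality $s<d$): since \Cref{eq:designing N in Option II} fails at step $n_\Delta^b-1$ and $\sum_k(r_{k,i^\ast}-\bar r_{i^\ast})^2\le\sum_k(r_{k,i^\ast}-\E[r_{k,i^\ast}])^2$ with high probability (\Cref{thm:variance concentration final}), one gets $(n_\Delta^b-1)\tfrac\Delta4\le 2\sqrt{2\sum_{t\in\mathcal T_\Delta^b}(r_{t,i^\ast}-\E[r_{t,i^\ast}])^2\ln\tfrac4\delta}$; plugging this into the per-stage regret turns $\mathcal R_T^b$ into $s\cdot\Otil\big(\sum_\Delta\Delta\sqrt{\sum_{t\in\mathcal T_\Delta^b}(r_{t,i^\ast}-\E[r_{t,i^\ast}])^2\ln\tfrac4\delta}\big)+\O(s)$, the $\O(s)$ absorbing the last stage $\Delta_f$ and the $+1$ from $n_\Delta^b-1$.

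The hard part — and where a naive argument loses a $\sqrt T$ — is converting this sum of square roots into $\Otil(s\sqrt{d\sum_t\sigma_t^2}+s)$: Cauchy--Schwarz would pull the $\Delta$-dependent factor $\tfrac{K}{R^2}\le\tfrac{4d}{\Delta^2}$ out of the root and leave a term $\sqrt{\sum_\Delta n_\Delta^b}=\sqrt T$. Instead I would keep the $\Delta^2$ inside the root: using the reverse direction of the stopping rule, namely the lower bound \Cref{eq:lower bound of Phase B main text} $n_\Delta^b\gtrsim\tfrac1\Delta\sqrt{\sum_{t\in\mathcal T_\Delta^b}(r_{t,i^\ast}-\E[r_{t,i^\ast}])^2\ln\tfrac4\delta}$ valid for all $\Delta\ne\Delta_f$, together with $\sum_\Delta n_\Delta^b\le T$, a ``summation-threshold'' split (bound the stages with $1/\Delta^2$ below a threshold by Cauchy--Schwarz and the stages above it by the budget $\sum n_\Delta^b\le T$) gives $\sum_{\Delta\ne\Delta_f}\Delta\sqrt{W_\Delta}=\Otil\big(\sqrt{\sum_\Delta\Delta^2W_\Delta}\big)$ with $W_\Delta:=\sum_{t\in\mathcal T_\Delta^b}(r_{t,i^\ast}-\E[r_{t,i^\ast}])^2\ln\tfrac4\delta$. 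Since $\E[(r_{t,i^\ast}-\E[r_{t,i^\ast}])^2]\le 1+\tfrac{K}{R^2}\sigma_t^2\le 1+\tfrac{4d}{\Delta^2}\sigma_t^2$ (using $K\le d$ and $R\ge\tfrac\Delta4$ from \Cref{line:skipping}), one more application of \Cref{thm:variance concentration final} yields $\Delta^2W_\Delta\lesssim\big(\Delta^2n_\Delta^b+d\sum_{t\in\mathcal T_\Delta^b}\sigma_t^2\big)\polylog$, so summing over $\Delta$ (the $\Delta_f$ stage handled by the same estimate plus a harmless $+1$) gives $\sum_\Delta\Delta^2W_\Delta\lesssim\big(\sum_\Delta\Delta^2n_\Delta^b+d\sum_{t=1}^T\sigma_t^2\big)\polylog$. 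Because $\sum_\Delta\Delta^2n_\Delta^b$ is, up to constants, exactly $\mathcal R_T^b/s$, writing $x:=\mathcal R_T^b/s$ we reach $x\lesssim\sqrt{x\,\polylog}+\sqrt{d\sum_t\sigma_t^2\,\polylog}$, and the self-bounding inequality $x\le b\sqrt x+c\Rightarrow x\le\O(b^2+c)$ closes the loop to give $\mathcal R_T^b=\Otil(s\sqrt{d\sum_{t=1}^T\sigma_t^2}\log\tfrac1\delta+s\log\tfrac1\delta)$; rescaling $\delta\leftarrow\delta/\mathrm{poly}(s,T)$ in the union bound costs only log factors. I expect the two coupled difficulties — (i) the $\sqrt T$-free summation via the two-sided stopping bound, and (ii) the circular self-bounding through $\sum_\Delta\Delta^2n_\Delta^b\approx\mathcal R_T^b/s$ — to be the crux, with everything else bookkeeping of constants and $\polylog$ factors.
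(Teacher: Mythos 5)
Your proposal is correct and follows essentially the same route as the paper's proof: the same good events $\mathcal G_\Delta,\mathcal H_\Delta$, the regret-to-sample-complexity conversion, the random-projection bias/concentration argument, the never-identified coordinate $i^\ast$ with the two-sided use of the stopping rule \Cref{eq:designing N in Option II}, the summation-threshold split, and the final self-bounding step on $\sum_\Delta \Delta^2 n_\Delta^b$. The only point you gloss over is the fluctuation term coming from the random $\pm R/\sqrt K$ signs on coordinates outside $S$ (the second term in \Cref{lem:appendix single step regret bound}), which you dismiss as lower order; the paper verifies this explicitly (it produces the fourth-root term in \Cref{eq:sum of phase B regret term 2}, shown to be dominated), and that verification is the only missing bookkeeping in your sketch.
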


\begin{proof}
As mentioned in the main body, we define the following good events for each $\Delta=2^{-2},2^{-2},\ldots,\Delta_f$ where $\Delta_f$ is the final value of $\Delta$:
\begin{align}
    \mathcal G_\Delta&:\sum_{i\in S}\theta_i^\ast (\theta_i^\ast-\hat \theta_i)<\Delta^2\text{ after }\mathcal T_\Delta^a,\nonumber\\
    \mathcal H_\Delta&:\left (\lvert \theta_i^\ast\rvert>3\Delta\to i\in S\right )\wedge \left (i\in S\to \lvert \theta_i^\ast\rvert>\frac 12\Delta\right )\text{ after }\mathcal T_\Delta^b.\label{eq:good events}
\end{align}

It is by definition to see that $\mathcal H_{\frac 12}$ indeed holds, as all $\lvert \theta_i^\ast\rvert < \frac 32$ and $S$ is initially empty. We can then use induction to prove that all good events hold with high probability.

Here, we list several technical lemmas we informally referred to in the main body, whose proofs are left to subsequent sections. The first one is about the regret-to-sample-complexity conversion.
\begin{lemma}[Regret-to-sample-complexity Conversion]\label{lem:appendix regret-to-sc}
If for any execution of $\mathcal F$ that lasts for $n$ steps, we have $\bar{\mathcal R_n^{\mathcal F}}\ge \mathcal R_n^{\mathcal F}$ with probability $1-\delta$, then we have $\Pr\{\mathcal G_\Delta\}\ge 1-\delta$.
\end{lemma}

The second term bounds the `bias' term, i.e., the second term of \Cref{eq:random projection Z_n^k} for random projection.
\begin{lemma}[Bias Term of the Random Projection]\label{lem:appendix bias term of random projection}
Conditioning on $\mathcal G_\Delta$ and $\mathcal H_{2\Delta}$, we have $\lvert \sum_{i\in S}(\hat \theta_i^2-(\theta_i^\ast)^2)\rvert<3\Delta^2$, which further gives $\lvert \sum_{i\in S}\hat \theta_i(\theta_i^\ast-\hat \theta_i)\rvert<4\Delta^2$.
\end{lemma}

Furthermore, we can bound the estimation error of the random projection process.
\begin{lemma}[Concentration of the Random Projection]\label{lem:appendix concentration of random projection}
For any given $i\in [K]$, we will have
\begin{equation*}
    \Pr\left \{\lvert \bar r_i-\theta_i^\ast\rvert>3\Delta^2+\frac \Delta 4\middle \vert \mathcal G_\Delta,\mathcal H_{2\Delta}\right \}\le \delta.
\end{equation*}
\end{lemma}

As long as the estimation errors are small, we can ensure, with high probability that the good event for $\Delta$, namely $\mathcal H_\Delta$ will also hold.
\begin{lemma}[Identification of Non-zero Coordinates]\label{lem:appendix identification of coordinates}
$\Pr\{\mathcal H_\Delta\mid \mathcal G_\Delta,H_{2\Delta}\}\ge 1-d\delta$.
\end{lemma}

Therefore, by combining all lemmas above, we can ensure that all good events, namely $\{\mathcal G_\Delta\wedge \mathcal H_\Delta\}_{\Delta=2^{-2},\ldots,\Delta_f}$, hold simultaneously with probability $1-dT\delta$.

At last, we bound the total regret incurred in Phase B for $\Delta$.
\begin{lemma}[Single-Phase Regret Bound]
\label{lem:appendix single step regret bound}
Conditioning on $\mathcal G_\Delta$ and $\mathcal H_{2\Delta}$, we have
\begin{equation*}
    \sum_{t\in \mathcal T_\Delta^b}\langle  \theta^\ast- x_t, \theta^\ast\rangle\le 36sn_\Delta^b \Delta^2+6s\Delta\frac{R}{\sqrt K}\sqrt{2n_\Delta^b \ln \frac 1\delta},\quad \text{with probability $1-s\delta$}.
\end{equation*}
\end{lemma}

Then we follow the analysis sketched in the main body. We assume, without loss of generality, that $s<d$. Then, conditioning on all $\mathcal G_\Delta$ and $\mathcal H_\Delta$, some coordinate $i^\ast$ must never be included into $S$ as $\mathcal H_\Delta$ holds for all $\Delta$.

Therefore, by property of the sample variances (\Cref{thm:variance concentration final}), for such $i^\ast$ and for each phase, with probability $1-\delta$, we have $\sum_{n=1}^N (r_{n,i^\ast}-\bar r_{i^\ast})^2\le \sum_{n=1}^N (r_{n,i^\ast}-\mathbb E[r_{n,i^\ast}])^2$. Together with \Cref{eq:designing N in Option II},
\begin{equation}\label{eq:upper bound of n_Delta^b}
    2\sqrt{2\sum_{t\in \tilde{\mathcal T_\Delta^b}} (r_{t,i^\ast}-\mathbb E[r_{t,i^\ast}])^2\ln \frac 4\delta}>(n_\Delta^b-1)\frac \Delta 4.
\end{equation}

By using \Cref{lem:appendix single step regret bound}, the total regret from Phase B is bounded by
\begin{equation*}
    \mathcal R_T^b\le \sum_{\Delta=2^0,2^{-2},\ldots,\Delta_f} \left (36sn_\Delta^b\Delta^2+6s\Delta\frac{R}{\sqrt K}\sqrt{2n_\Delta^b\ln \frac 1\delta}\right )
\end{equation*}

By writing $n_\Delta^b$ as $(n_\Delta^b-1)+1$, for any given $\Delta$, the total regret of Phase b is bounded by
\begin{align}
    \mathcal R_T^b&\le \sum_\Delta\sum_{t\in \mathcal T_\Delta^b}\langle  \theta^\ast- x_t, \theta^\ast\rangle\nonumber\\
    &\le \sum_\Delta36s\Delta^2 \cdot \left (\frac 4\Delta 2\sqrt{2\sum_{t\in \tilde{\mathcal T_\Delta^b}} (r_{t,i^\ast}-\mathbb E[r_{t,i^\ast}])^2\ln \frac 4\delta}+1\right )+\nonumber\\
    &\quad \sum_\Delta6s\Delta \frac{R}{\sqrt K}\sqrt{2\ln \frac 1\delta}\sqrt{4\Delta\cdot  2\sqrt{2\sum_{t\in \tilde{\mathcal T_\Delta^b}} (r_{t,i^\ast}-\mathbb E[r_{t,i^\ast}])^2\ln \frac 4\delta}+1}\nonumber\\
    &\le \underbrace{\sum_\Delta\left (288\sqrt 2 s\sqrt{\Delta^2\sum_{t\in \mathcal T_\Delta^b}(r_{t,i^\ast}-\mathbb E[r_{t,i^\ast}])^2\ln \frac 4\delta}+36s\Delta^2\right )}_{\text{Part (a)}}+\label{eq:sum of phase B regret term 1}\\
    &\quad \underbrace{\sum_\Delta\left (24s\sqrt{\ln \frac 1\delta}\Delta\sqrt[4]{\Delta^2\sum_{t\in \mathcal T_\Delta^b}(r_{t,i^\ast}-\E[r_{t,i^\ast}])^2\ln \frac 4\delta}+12s\Delta \sqrt{\ln \frac 1\delta}\right )}_{\text{Part (b)}}.\label{eq:sum of phase B regret term 2}
\end{align}


As mentioned in the main text, we make use of the following lower bound of $n_\Delta^b$ which again follows from \Cref{eq:designing N in Option II} and holds for all $\Delta$'s except $\Delta_f$:
\begin{equation*}
    n_\Delta^b \ge \frac 4 \Delta\cdot  2\sqrt{2\sum_{t\in \mathcal T_\Delta^b}(r_{t,i^\ast}-\bar r_{i^\ast})^2\ln \frac 4\delta}\ge \frac{8}{\Delta}\sqrt{2\sum_{t\in \mathcal T_\Delta^b}(r_{t,i^\ast}-\E[r_{t,i^\ast}])^2\ln \frac 4\delta},
\end{equation*}

where the last step is due to \Cref{thm:variance concentration final}. For simplicity, define $S_\Delta=\Delta^2\sum_{t\in \mathcal T_\Delta^b} (r_{n,i}-\E[r_{n,i}])^2$ and therefore $n_\Delta^b\ge \frac{C}{\Delta^2}\sqrt{S_\Delta}$ where $C=8\sqrt{2\ln \frac 4\delta}$, a constant if we regard $\delta$ as a constant. Therefore,
\begin{equation}\label{eq:upper bound of sum n}
    \sum_{\Delta\ne \Delta_f}\frac{C}{\Delta^2}\sqrt{S_\Delta}\le \sum_{\Delta\ne \Delta_f}n_\Delta^b\le T
\end{equation}

and our goal is to upper bound $\sum_{\Delta}\sqrt{S_\Delta}$. Define a threshold $X=T/\left (C\sqrt{\sum_{\Delta\ne \Delta_f}S_\Delta}\right )$ and denote $\Delta_X=2^{-\lceil \log_4 X\rceil}$ so that $\Delta_X^{2}\le \frac 1X$. We will have
\begin{align}
    \sum_{\Delta\ne \Delta_f}\sqrt{S_\Delta}&=\sum_{\Delta=2^{-2},\ldots,\Delta_X}\sqrt{S_\Delta} + \sum_{\Delta=\Delta_X/2,\ldots,2\Delta_f}\sqrt{S_\Delta}\nonumber\\
    &\overset{(a)}{\le} \sqrt{\log_4X}\sqrt{\sum_{\Delta=2^{-2},\ldots,\Delta_X}S_\Delta}+\sum_{\Delta_X>\Delta>\Delta_f}\frac{\Delta_X^2}{\Delta_X^{2}}\sqrt{S_\Delta}\nonumber\\
    &\overset{(b)}{\le} \sqrt{\log_4X} \sqrt{\sum_{\frac 14\ge \Delta\ge \Delta_X}S_\Delta}+\frac 1X\sum_{\Delta_X>\Delta>\Delta_f}\frac{\sqrt{S_\Delta}}{\Delta_X^2}\nonumber\\
    &\overset{(c)}{\le} \sqrt{\log_4X} \sqrt{\sum_{\frac 14\ge \Delta\ge \Delta_X}S_\Delta}+\frac 1X\frac TC=(\sqrt{\log_4X}+1)\sqrt{\sum_{\Delta\ne \Delta_f}S_\Delta},\label{eq:summation technique}
\end{align}

where (a) applied Cauchy-Schwartz to the first summation, (b) applied the fact that $\Delta_X^2\le \frac 1X$ and (c) used \Cref{eq:upper bound of sum n}. So Part (a) of the regret, namely \Cref{eq:sum of phase B regret term 1}, can be bounded by
\begin{align}
    &\quad \sum_\Delta\left (288\sqrt 2 s\sqrt{\Delta^2\sum_{t\in \mathcal T_\Delta^b}(r_{t,i}-\mathbb E[r_{t,i}])^2\ln \frac 4\delta}+36s\Delta^2\right )\nonumber\\
    &\overset{(a)}{\le} 288\sqrt 2s \sum_\Delta \sqrt{S_\Delta \ln \frac 4\delta}+72s\overset{(b)}{\le}288\sqrt 2s\sqrt{\ln \frac 4\delta}\left (\sqrt{\log_4(4T)\sum_{\Delta\ne \Delta_f} S_\Delta}+\sqrt{S_{\Delta_f}}\right )+72s\nonumber\\
    &\overset{(c)}{\le}576 s \sqrt{\sum_{\Delta=2^{-2},\ldots,\Delta_f}\Delta^2 \sum_{t\in \mathcal T_\Delta^b}(r_{t,i^\ast}-\E[r_{t,i^\ast}])^2\ln \frac 4\delta\log_4(4T)}+72s\nonumber\\
    &\overset{(d)}{\le}576 s \sqrt{\sum_{\Delta=2^{-2},\ldots,\Delta_f}8\sum_{t\in \mathcal T_\Delta^b}\left (\Delta^2+d\sigma_t^2\right )\ln \frac 4\delta\log_4(4T)}+72s\label{eq:Phase b eq d}\\
    &\le 1152\sqrt 2 s\sqrt d \sqrt{\sum_{t=1}^T \sigma_t^2 \ln \frac 4\delta\log_4(4T)}+1152\sqrt 2 s \sqrt{\sum_{\Delta} n_\Delta^b \Delta^2\ln \frac 1\delta\log_4(4T)} +72s\nonumber.
\end{align}

where (a) used the fact that $\sum \Delta^2\le 2$ as $\Delta=2^{-i}$, (b) used \Cref{eq:summation technique}, (c) again used Cauchy-Schwartz inequality and (d) used \Cref{thm:variance concentration final}(3), the variance concentration result, \Cref{eq:variance of feedback}, the magnitude of the variance, and the facts that $R\ge \Delta$, $K\le d$. Therefore, we can conclude that
\begin{equation*}
    \O(s)\cdot \sum_\Delta n_\Delta^b \Delta^2\le \Otil\left (s\sqrt d\sqrt{\sum_{t=1}^T\sigma_t^2 \log \frac 4\delta}+s\right )+\Otil\left (s\sqrt{n_\Delta^b \Delta^2 \ln \frac 1\delta}\right ).
\end{equation*}

Notice that the left-handed-side and the right-handed-side has a common term (the RHS one is inside the square-root sign). Hence, by the self-bounding property \Cref{lem:self-bounding property from x<a+bsqrt(x)}, we can conclude that (note that we divided $s$ on both sides)
\begin{equation}
    \sum_\Delta n_\Delta^b \Delta^2\le \Otil\left (\sqrt d\sqrt{\sum_{t=1}^T\sigma_t^2 \log \frac 4\delta}+1\right )+\Otil\left (\ln \frac 1\delta\right ),\label{eq:Phase b after self-bounding}
\end{equation}

which means that Part (a) (\Cref{eq:sum of phase B regret term 1}), or equivalently $\O(s\sum_\Delta n_\Delta^b \Delta^2)$, is bounded by
\begin{equation*}
    \Otil\left (s\sqrt d\sqrt{\sum_{t=1}^T \sigma_t^2\log \frac 4\delta}+s\log \frac 4\delta\right ).
\end{equation*}

Now consider Part (b) of the regret, namely \Cref{eq:sum of phase B regret term 2}. The second term in each summand will sum up to $\O(s\sqrt{\log \frac 1\delta})$. For the first term, using the notation $S_\Delta$, we want to bound
\begin{align*}
    &\quad 24s \left (\ln \frac 1\delta\right )^{\nicefrac 34} \sum_\Delta \Delta \sqrt[4]{S_\Delta}\le 24s \ln \frac 1\delta \sum_\Delta \Delta \sqrt[4]{\sum_{\Delta}S_\Delta}\le 48s \ln \frac 1\delta\sqrt[4]{\sum_{\Delta}S_\Delta}.
\end{align*}

Conditioning on the same events as in \Cref{eq:Phase b eq d}, we will have
\begin{equation*}
    \sum_\Delta S_\Delta\le \O\left (\sum_\Delta n_\Delta^b \Delta^2+\sum_{t=1}^T d\sigma_t^2\right )\le \Otil\left (\sum_{t=1}^T d\sigma_t^2+\sqrt{\sum_{t=1}^T d\sigma_t^2\log \frac 1\delta}+\log \frac 1\delta\right )\le \Otil\left (\sum_{t=1}^T d\sigma_t^2\log \frac 1\delta\right ),
\end{equation*}

where the second step comes from \Cref{eq:Phase b after self-bounding}. Therefore,

\begin{align*}
    24s \left (\ln \frac 1\delta\right )^{\nicefrac 34} \sum_\Delta \Delta \sqrt[4]{S_\Delta}\le \Otil\left (s\sqrt{\sqrt d\sqrt{\sum_{t=1}^T \sigma_t^2\log \frac 1\delta}}\left (\log \frac 1\delta\right )^{\nicefrac 34}\right )=\Otil\left (s\sqrt[4]{d\sum_{t=1}^T \sigma_t^2}\log \frac 1\delta\right ),
\end{align*}

which gives (by combining the bound of \Cref{eq:sum of phase B regret term 1} and \Cref{eq:sum of phase B regret term 2} together):
\begin{equation*}
    \mathcal R_T^b\le \Otil\left (s\sqrt d\sqrt{\sum_{t=1}^T \sigma_t^2\log \frac 4\delta}+s\sqrt[4]{d\sum_{t=1}^T \sigma_t^2}\log \frac 1\delta+s\log \frac 4\delta\right ).
\end{equation*}

Further notice that, if $d\sum_{t=1}^T \sigma_t^2\ge 1$, then the square-root is larger than the 4th-root. When $d\sum_{t=1}^T \sigma_t^2\le 1$, then either root is bounded by $s$. Hence, in either case, the 4th-root will be hidden by other factors. Henceforth, we indeed have the following conclusion with probability $1-(s+3T)\delta$:
\begin{equation*}
    \mathcal R_T^b\le \Otil\left (s\sqrt{d\sum_{t=1}^T \sigma_t^2}\log \frac 1\delta+s\log \frac 1\delta\right ).
\end{equation*}

By setting the actual $\delta$ as $(s+3T)\delta$, we will still have the same regret bound as $\O(\log \frac c\delta)=\O(\log \frac 1\delta+\log c)=\Otil(\log \frac 1\delta)$ as all logarithmic factors will be hidden by $\Otil$.
\end{proof}

\subsection{Regret-to-Sample-Complexity Conversion}
\begin{proof}[Proof of \Cref{lem:appendix regret-to-sc}]
By Fatou's lemma and the fact that $n_\Delta^b$ is finite a.s. (as it is truncated according to $T$), the probability that $\bar{\mathcal R_n^{\mathcal F}}$ is a pessimistic estimation for all $n=1,2,\ldots,n_\Delta^b$ is bounded by $1-\delta$. Conditioning on this, by definition, we will have
\begin{equation*}
    \sum_{n=1}^{n_\Delta^b} \langle  \theta_i^\ast, \theta_i^\ast- x_n\rangle\le \mathcal R_{n_\Delta^b}^{\mathcal F}\le \bar{\mathcal R_{n_\Delta^b}^{\mathcal F}},
\end{equation*}

which means our stopping criterion (\Cref{line:phase a terminiate}) will ensure
\begin{equation*}
    \sum_{n=1}^{n_\Delta^b} \langle  \theta_i^\ast, \theta_i^\ast-\hat \theta\rangle\le \frac{\bar{\mathcal R_{n_\Delta^b}^{\mathcal F}}}{n_\Delta^b}<\Delta^2
\end{equation*}

and thus $\mathcal G_\Delta$ is ensured.
\end{proof}

\subsection{Random Projection}
\begin{proof}[Proof of \Cref{lem:appendix bias term of random projection}]
By $\mathcal H_{2\Delta}$, we have $\lvert \theta_i^\ast\rvert>\Delta$, $\forall i\in S$, which gives $\lvert \theta_i^\ast-\hat \theta_i\rvert<\Delta$ by $\mathcal G_\Delta$. So we can bound $\lvert \sum_{i\in S}(\hat \theta_i^2 -(\theta_i^\ast)^2)\rvert$ by $2\lvert \sum_{i\in S}\theta_i^\ast(\theta_i^\ast-\hat \theta_i)\rvert+\lvert \sum_{i\in S}(\theta_i^\ast-\hat \theta_i)^2\rvert<2\Delta^2+\Delta^2=3\Delta^2$. The second claim is then straightforward as $\hat \theta_i(\theta_i^\ast-\hat \theta_i)=-\theta_i^\ast(\theta_i^\ast-\hat \theta_i)-\left ((\hat \theta_i^2-(\theta_i^\ast)^2)\right )$.
\end{proof}

\begin{proof}[Proof of \Cref{lem:appendix concentration of random projection}]
In the random projection procedure, let $r_i^{(n)}$ be the random variable defined as (which is the $i$-th coordinate of $r_n$ in the algorithm)
\begin{align}
    r_i^{(n)}&=\frac{K}{R^2}y_i^{(n)}\left (\langle  x_n, \theta^\ast\rangle+\eta_n-\sum_{j\in S}\hat \theta_j^2\right )\nonumber\\
    &=\frac{K}{R^2}\left (y_i^{(n)}\right )^2\theta_i^\ast+\sum_{j\in S}\hat \theta_j(\theta_j^\ast-\hat \theta_j)+\sum_{j\notin S,j\ne i}\left (\frac{K}{R^2}y_i^{(n)}y_j^{(n)}\right )\theta_j^\ast+\left (\frac{K}{R^2}y_i^{(n)}\right )\eta_n.\label{eq:Z_i^n}
\end{align}

Then $\bar r_i$ is just the sample mean estimator of $\{r_i^{(n)}\}_{n=1}^{n_\Delta^b}$.

Firstly, observe that $y_i^2$ is always $\frac{R^2}{K}$, so the first term of \Cref{eq:Z_i^n} is just $\theta_i^\ast$, which is exactly the magnitude we want to estimate. Moreover, by \Cref{lem:appendix bias term of random projection}, the second term is a small (deterministic) bias added to $\theta_i^\ast$ and is bounded by $3\Delta^2$. For the third term, as each $y_j$ is independent, $\frac{K}{R^2}y_iy_j$ is an i.i.d. Rademacher random variable, denoted by $z_j^{(n)}$. Hence, the last two terms sum to
\begin{equation*}
    \sum_{j\notin S,j\ne i}z_j^{(n)}\theta_j^\ast+\left (\frac{K}{R^2}y_i^{(n)}\right )\eta_n.
\end{equation*}

By definition of Rademacher random variables, they are all zero-mean and symmetric. Henceforth they can be viewed as noises with variances at most 
\begin{equation}
    \E\left [\left (\sum_{j\not\in S,j\ne i}z_j^{(n)}\theta_j^\ast+\frac{K}{R^2}y_i^{(n)}\eta_n\right )^2\right ]\overset{(a)}{=}\E\left [\sum_{j\not \in S}\left (z_j^{(n)}\theta_j^\ast\right )^2+\left (\frac{K}{R^2}y_i^{(n)}\right )^2\eta_n^2\right ]\le 1+\frac{K}{R^2}\sigma_n^2,\label{eq:variance of feedback}
\end{equation}

where (a) is due to the mutual independence between $z_j^{(n)}$ and $y_i^{(n)}$.

Moreover, as each $y_i^{(n)}$ is also redrawn for every $n$ and $i$, we know that all $\{z_j^{(n)}\}_{j\not \in S,1\le n\le n_\Delta^b}$ and $\{y_i^{(n)}\}_{1\le n\le n_\Delta^b}$ are all mutually independent. Because the first two terms \Cref{eq:Z_i^n} is not random, all $\{r_i^{(n)}\}_{1\le n\le n_\Delta^b}$ are independent, symmetric and sub-Gaussian (as $\theta_j^\ast$ is bounded and $\eta_n$ is Gaussian) random variables.
Therefore, as $n_\Delta^b$ is indeed a stopping time finite a.s., we shall apply Empirical Bernstein Inequality (\Cref{thm:Bernstein with common mean}) to $\{r_i^{(n)}\}_{n}$ where $\text{Var}(r_i^{(n)})$ is characterized by \Cref{eq:variance of feedback}, giving
\begin{align*}
    &\Pr\left \{ \left\lvert \sum_{n=1}^{n_\Delta^b}  \left (\sum_{j\notin S,j\ne i}z_j^{(n)}\theta_j^\ast+\left (\frac{K}{R^2}y_i^{(n)}\right )\eta_n\right )\right \rvert\ge 2\sqrt{2\sum_{n=1}^{n_\Delta^b} (r_{n,i}-\bar r_i)^2\ln \frac 4\delta}\right \}\le \delta.
\end{align*}

In other words, our choice of $n_\Delta^b$ in \Cref{eq:designing N in Option II} will ensure the average noise is bounded by $\frac{\Delta}{2}$. By \Cref{lem:appendix bias term of random projection}, we conclude that $\Pr\{\lvert \bar r_i-\theta_i^\ast\rvert>3\Delta^2+\frac \Delta 4\mid \mathcal G_\Delta,\mathcal H_{2\Delta}\}\le \delta$.
\end{proof}

\begin{proof}[Proof of \Cref{lem:appendix identification of coordinates}]
If we skipped due to $\sum_{i\in S}\hat \theta_i^2>1-\Delta^2$, which is \Cref{line:skipping} of \Cref{alg:framework}, then by \Cref{lem:appendix bias term of random projection}, we will have
\begin{equation*}
    \sum_{i\in S}(\theta_i^\ast)^2>1-5\Delta^2\quad \text{conditioning on }\mathcal G_\Delta\wedge \mathcal H_{2\Delta},
\end{equation*}

and thus all remaining coordinates are smaller than $3\Delta$. Moreover, by $\mathcal H_{2\Delta}$, all discovered coordinates are with magnitude at least $\frac{2\Delta}{2}>\frac \Delta 2$. Hence $\mathcal H_\Delta$ automatically holds in this case.

Otherwise, suppose that the conclusion of \Cref{lem:appendix concentration of random projection} holds for all $i\in [K]$ (which happens with probability $1-K\delta$ conditioning on $\mathcal G_\Delta$ and $\mathcal H_{2\Delta}$). As we only pick those coordinates with $\lvert \bar r_i\rvert>\Delta$, all coordinates with magnitude at last $\Delta+\frac \Delta 4+3\Delta^2< 3\Delta$ will be picked as $\Delta\le \frac 14$, so the first condition of $\mathcal H_\Delta$ indeed holds. Moreover, all picked coordinates will have magnitudes at least $\Delta- (\frac \Delta 4+3\Delta^2)$. But when $\Delta=\frac 14$, there will be no coordinates in $S$ as it is initially empty. And after that, we will have $\Delta^2\le \frac \Delta 8$. Hence, all coordinates with magnitude $\frac \Delta 2$ will surely be identified, which means the second condition of $\mathcal H_\Delta$ also holds.

We then have $\Pr\{\mathcal H_\Delta\mid \mathcal G_\Delta,\mathcal H_{2\Delta}\}\ge 1-K\delta$ by putting these two cases together.
\end{proof}

\subsection{Single-Phase Regret}
\begin{proof}[Proof of \Cref{lem:appendix single step regret bound}]
Conditioning on $\mathcal G_\Delta$, as we are playing $x_{t,i}=\hat \theta_i$ for all $i\in S$ and $t\in \mathcal T_\Delta^b$, we will have
\begin{equation*}
    \sum_{t\in \mathcal T_\Delta^b}\sum_{i\in S}(\theta_i^\ast-x_{t,i})\theta_i^\ast=\sum_{t\in \mathcal T_\Delta^b}\sum_{i\in S}(\theta_i^\ast-\hat \theta_i)\theta_i^\ast<n_\Delta^b\cdot \Delta^2.
\end{equation*}

Now consider a single coordinate not in $S$. For each $t\in \mathcal T_\Delta^b$, we will equiprobably play $\pm \frac{R}{\sqrt K}$ for this coordinate. Hence, the total regret will become
\begin{equation*}
    \sum_{t\in \mathcal T_\Delta^b}\left (\theta_i^\ast \pm \frac{R}{\sqrt K}\right )\theta_i^\ast=n_\Delta^b\cdot \left (\theta_i^\ast\right )^2+\theta_i^\ast \sum_{t\in \mathcal T_\Delta^b}\left (\pm \frac{R}{\sqrt K}\right ).
\end{equation*}

By $\mathcal H_{2\Delta}$, the first term is bounded by $36n_\Delta^b \Delta^2$. By Chernoff bound, the absolute value of the summation in the second term will be bounded by $\frac{R}{\sqrt K}\sqrt{2n_\Delta^b \ln \frac 1\delta}$ with probability $1-\delta$. As there are at most $s$ non-zero coordinates by sparsity, from a union bound, we can conclude that
\begin{equation*}
    \sum_{t\in \mathcal T_\Delta^b} \langle  \theta^\ast- x_t, \theta^\ast\rangle\le 36sn_\Delta^b \Delta^2+6s\Delta\frac{R}{\sqrt K}\sqrt{2n_\Delta^b \ln \frac 1\delta}
\end{equation*}

with probability $1-s\delta$.
\end{proof}

\section{Omitted Proof in \Cref{sec:zhang et al} (Analysis of \texttt{VOFUL2})}\label{sec:appendix zhang et al}

\subsection{Proof of Main Theorem}\label{sec:appendix zhang et al unknown}
In this section, we prove \Cref{thm:regret with VOFUL2 and unknown variances}, which is restated as \Cref{thm:appendix VOFUL2 unknown}. We first assume that \Cref{prop:VOFUL2 regret} is indeed correct, whose discussion is left to \Cref{sec:VOFUL2 regret upper bound}.

\begin{theorem}[Regret of \Cref{alg:framework} with \texttt{VOFUL2} in Unkown-Variance Case]\label{thm:appendix VOFUL2 unknown}
Consider \Cref{alg:framework} with $\mathcal F$ as \texttt{VOFUL2} \citep{kim2021improved} and $\bar{\mathcal R_n^{\mathcal F}}$ as
\begin{equation}\label{eq:regret estimation with unknown variance}
    \bar{\mathcal R_n^{\mathcal F}}=C\left (s^{1.5}\sqrt{\sum_{k=1}^n (r_k-\langle  x_k, \hat \beta\rangle)^2\ln \frac 1\delta}+s^2\sqrt{2\ln \frac{n}{s\delta^2}\ln \frac 1\delta}+s^{1.5}\sqrt{2\ln \frac 1\delta}+s^2\ln \frac 1\delta\right ),
\end{equation}

where $C=\Otil(1)$ is the constant hidden in the $\Otil$ notation of \Cref{prop:VOFUL2 regret}, $ x_1, x_2,\ldots, x_n$ are the actions made by the agent, $r_1,r_2,\ldots,r_n$ are the corresponding (noisy) feedback, and $\hat \beta$ is defined as
\begin{equation*}
    \hat \beta=\operatornamewithlimits{argmin}_{ \beta\in \mathbb R^s}\left (\sum_{k=1}^n(r_k-\langle x_k, \beta\rangle)^2+\lVert  \beta\rVert_2\right ).
\end{equation*}

The algorithm ensures the following regret bound with probability $1-\delta$:
\begin{equation*}
    \mathcal R_T= \Otil\left ((s^{2.5}+s\sqrt d)\sqrt{\sum_{t=1}^T \sigma_t^2 \log \frac 1\delta}+s^3\log \frac 1\delta\right ).
\end{equation*}
\end{theorem}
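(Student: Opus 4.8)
The plan is to split $\mathcal R_T=\mathcal R_T^a+\mathcal R_T^b$ and treat the two pieces separately. The part $\mathcal R_T^b$ is immediate once we check that the estimator $\bar{\mathcal R_n^{\mathcal F}}$ in \Cref{eq:regret estimation with unknown variance} is \emph{pessimistic}, i.e.\ $\bar{\mathcal R_n^{\mathcal F}}\ge \mathcal R_n^{\mathcal F}$ with probability $1-\delta$ for every execution of $\mathcal F$; then \Cref{thm:total regret of part b} directly gives $\mathcal R_T^b=\Otil(s\sqrt d\sqrt{\sum_t\sigma_t^2}\log\frac1\delta+s\log\frac1\delta)$. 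To verify pessimism I would argue as follows: on the good events $\mathcal H_\Delta$ of \Cref{eq:good events} (which force every $i\in S$ to satisfy $|\theta_i^\ast|>\Delta/2>0$, hence $|S|\le s$), \Cref{prop:VOFUL2 regret} bounds $\mathcal R_n^{\mathcal F}$ by $C(s^{1.5}\sqrt{\sum_{k}\eta_k^2\ln\frac1\delta}+s^2\ln\frac4\delta)$ where $\sum_k\eta_k^2=\sum_k(r_k-\langle x_k,\beta^\ast\rangle)^2$; then \Cref{lem:ridge error bound} with $\lambda=1$ and variance proxy $\le1$ replaces $\sum_k(r_k-\langle x_k,\beta^\ast\rangle)^2$ by the ridge empirical variance $\sum_k(r_k-\langle x_k,\hat\beta\rangle)^2$ up to an additive $2s\ln\frac{n}{s\delta^2}+2$; finally $\sqrt{a+b+c}\le\sqrt a+\sqrt b+\sqrt c$ rearranges this into exactly the form \Cref{eq:regret estimation with unknown variance}. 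A union bound over the $\Otil(1)$ many phases and over $n$ keeps the aggregate failure probability at $\Otil(\delta)$.

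For $\mathcal R_T^a$ I would fix a threshold $\Delta$ and, for each $t\in\mathcal T_\Delta^a$, split $\langle\theta^\ast-x_t,\theta^\ast\rangle$ into the contribution of the coordinates in $S$ (on which $\mathcal F$ acts) and of those outside $S$ (where the agent plays $0$ during Phase~A). Summed over $\mathcal T_\Delta^a$, the first contribution is at most the regret $\mathcal R_{n_\Delta^a}^{\mathcal F}$ of $\mathcal F$, bounded by \Cref{prop:VOFUL2 regret} as $\O(s^{1.5}\sqrt{\sum_{t\in\mathcal T_\Delta^a}\eta_t^2\ln\frac1\delta}+s^2\ln\frac1\delta)$; the second contribution is $\sum_{t\in\mathcal T_\Delta^a}\sum_{i\notin S}(\theta_i^\ast)^2$, which by the good event $\mathcal H_{2\Delta}$ (established inside the proof of \Cref{thm:total regret of part b}) and sparsity is $\O(sn_\Delta^a\Delta^2)$. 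The termination rule at \Cref{line:phase a terminiate} of \Cref{alg:framework} together with the form \Cref{eq:regret estimation with unknown variance} yields the upper bound \Cref{eq:Phase B upper bound unknown main text} on $n_\Delta^a$, so $sn_\Delta^a\Delta^2=\O(s^{2.5}\sqrt{\sum_{t\in\mathcal T_\Delta^a}\eta_t^2\ln\frac1\delta}+s^3\ln\frac1\delta+s\Delta^2)$, and adding the two contributions gives $\mathcal R_T^a\le\sum_{\Delta}\O(s^{2.5}\sqrt{\sum_{t\in\mathcal T_\Delta^a}\eta_t^2\log\frac1\delta}+s^3\log\frac1\delta+s\Delta^2)$.

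The remaining step is the summation over $\Delta$, carried out exactly as for $\mathcal R_T^b$. The same termination rule also gives, for every $\Delta\ne\Delta_f$, a matching lower bound $n_\Delta^a\gtrsim\Delta^{-2}s^{1.5}\sqrt{\sum_{t\in\mathcal T_\Delta^a}\eta_t^2\ln\frac1\delta}$ (the subtracted ridge correction being absorbed into the $s^2$-type terms of $\bar{\mathcal R_n^{\mathcal F}}$); since $\sum_\Delta n_\Delta^a\le T$, the summation-threshold computation of \Cref{eq:summation technique} moves the outer sum inside the square root, giving $\sum_\Delta\sqrt{\sum_{t\in\mathcal T_\Delta^a}\eta_t^2}=\Otil(\sqrt{\sum_{t=1}^T\eta_t^2})$ (the $\Delta=\Delta_f$ term being dominated by $\sqrt{\sum_{t=1}^T\eta_t^2}$). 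The $s^3\log\frac1\delta$ terms contribute a factor equal to the $\Otil(1)$ number of phases, $\sum_\Delta s\Delta^2=\O(s)$, and \Cref{thm:variance concentration final} applied to the zero-mean, conditionally symmetric sub-Gaussian sequence $\{\eta_t\}$ over the fixed index set $[T]$ replaces $\sum_{t=1}^T\eta_t^2$ by $\O(\sum_{t=1}^T\sigma_t^2\log\frac1\delta)$; a final appeal to the self-bounding property \Cref{lem:self-bounding property from x<a+bsqrt(x)} where needed yields $\mathcal R_T^a=\Otil(s^{2.5}\sqrt{\sum_t\sigma_t^2}\log\frac1\delta+s^3\log\frac1\delta)$. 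Adding $\mathcal R_T^a$ and $\mathcal R_T^b$ and rescaling $\delta$ by the $\Otil(1)$ union-bound factors gives the claim.

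I expect the main obstacle to be making the pessimism of $\bar{\mathcal R_n^{\mathcal F}}$ genuinely rigorous rather than heuristic, since it is entangled with the inductive construction of the good events: the bound $|S|\le s$ (needed to extract $s^{1.5},s^2$ rather than $d^{1.5},d^2$ from \Cref{prop:VOFUL2 regret}) relies on $\mathcal H_\Delta$, whose proof (\Cref{lem:appendix identification of coordinates}) in turn uses $\mathcal G_\Delta$, which uses pessimism of $\bar{\mathcal R_n^{\mathcal F}}$. Thus the induction over the halving sequence of $\Delta$ must be organised so that within each stage the regret-estimate bound, the ridge-regression error bound, and the good-event bound are invoked in the right order; once this is set up, the rest is a routine transcription of the $\mathcal R_T^b$ argument with $d\sigma_t^2$ replaced by the $\mathcal F$-specific quantities $s^{1.5}\sqrt{\eta_t^2}$ and $s^2\log\frac1\delta$.
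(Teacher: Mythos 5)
Your proposal is correct and follows essentially the same route as the paper: the same pessimism argument for $\bar{\mathcal R_n^{\mathcal F}}$ via \Cref{prop:VOFUL2 regret} and the ridge-regression bound \Cref{lem:ridge error bound}, the same appeal to \Cref{thm:total regret of part b} for $\mathcal R_T^b$, the same in-$S$/outside-$S$ decomposition of $\mathcal R_T^a$ with the termination rule bounding $n_\Delta^a$ from both sides, the summation-threshold trick of \Cref{eq:summation technique}, and variance concentration to pass from $\sum_t\eta_t^2$ to $\sum_t\sigma_t^2$. The entanglement you flag (needing $\lvert S\rvert\le s$, hence $\mathcal H_{2\Delta}$, before invoking pessimism in the current phase) is resolved exactly by the phase-wise induction you describe, which is also how the paper's analysis is organized, so it is not a gap.
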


\begin{proof}
By \Cref{prop:VOFUL2 regret}, the total regret incurred by algorithm $\mathcal F$ satisfies
\begin{equation*}
    \mathcal R_n^{\mathcal F}\le C\left (s^{1.5}\sqrt{\sum_{k=1}^n \eta_k^2 \ln \frac 1\delta}+s^2\ln \frac 1\delta\right ),
\end{equation*}

where $C=\Otil(1)$ is a constant (with some logarithmic factors) and $\eta_k\sim \mathcal N(0,\sigma_k^2)$ is the noise for the $k$-th round executing $\mathcal F$. We now consider our regret estimator $\bar{\mathcal R_n^{\mathcal F}}$. We show that, with high probability, it is a pessimistic estimation.

\begin{lemma}\label{lem:Zhang unknown is overestimation}
For any given $\Delta$, with probability $1-\delta$, $\bar{\mathcal R_n^{\mathcal F}}\ge \mathcal R_n^{\mathcal F}$.
\end{lemma}

Therefore, for the ``explore'' phase, we can make use of \Cref{thm:appendix total regret of part b} which only requires $\bar{\mathcal R_n^{\mathcal F}}$ to be an over-estimate of $\mathcal R_n^{\mathcal F}$ w.h.p., giving
\begin{equation*}
    \mathcal R_T^b\le \Otil\left (s\sqrt d\sqrt{\sum_{t=1}^T \sigma_t^2}\log \frac 1\delta+s\log \frac 1\delta\right ).
\end{equation*}

So we only need to bound the regret for the ``commit'' phase, namely $\mathcal R_T^a$. As mentioned in the main text, we will consider the regret contributed from inside and outside $S$ seprately. Formally, we will write $\mathcal R_T^a$ as
\begin{equation*}
    \mathcal R_T^a=\sum_{\Delta=2^{-2},\ldots,\Delta_f}\sum_{t\in \mathcal T_\Delta^a} \left (\sum_{i\in S}\theta_i^\ast(\theta_i^\ast-x_{t,i})+\sum_{i\notin S}(\theta_i^\ast)^2\right ),
\end{equation*}

where the equality is because we will not put any mass on those $ i\notin S$ during the ``commit'' phase. We still assume the `good events' $\mathcal G_\Delta,\mathcal H_\Delta$ hold for all $\Delta$ (defined in \Cref{eq:good events}). By $\mathcal H_{2\Delta}$, for a given $\Delta$, $\sum_{i\not \in S}(\theta_i^\ast)^2\le 36s\Delta^2$. Moreover, by \Cref{lem:Zhang unknown is overestimation}, we will have
\begin{equation*}
    \mathcal R_T^a\le \sum_{\Delta=2^{-2},\ldots,\Delta_f}\bar{\mathcal R_{n_\Delta^a}^{\mathcal F}}+36s\sum_{\Delta=2^{-2},\ldots,\Delta_f} n_\Delta^a\Delta^2.
\end{equation*}

By the terminating criterion $\frac 1n \bar{\mathcal R_n^{\mathcal F}}<\Delta^2$ (\Cref{line:phase a terminiate} of \Cref{alg:framework}), for all $\Delta$, we will have $n_\Delta^a-1\le \frac{C}{\Delta^2}\bar{\mathcal R_{n_\Delta^a}^{\mathcal F}}$, which means
\begin{equation*}
    n_\Delta^a\le \frac{C}{\Delta^2}\left (s^{1.5}\sqrt{\sum_{t\in \tilde{\mathcal T_\Delta^a}} (r_t-\langle  x_t, \hat \beta\rangle)^2\ln \frac 1\delta}+s^2\sqrt{2\ln \frac{n_\Delta^a}{s\delta^2}\ln \frac 1\delta}+s^{1.5}\sqrt{2\ln \frac 1\delta}+s^2\ln \frac 1\delta\right )+1.
\end{equation*}

Therefore, plugging back into the expression of $\mathcal R_T^a$ gives (the term related with $\bar{\mathcal R_{n_\Delta^a}^{\mathcal F}}$ is dominated by the second term, as intuitively explained in the main text):
\begin{equation*}
    \mathcal R_T^a\le \sum_{\Delta=2^{-2},\ldots, \Delta_f}\Otil\left (s^{2.5}\sqrt{\sum_{t\in \tilde{\mathcal T_\Delta^a}} (r_t-\langle  x_t, \hat \beta\rangle)^2\ln \frac 1\delta}+s^3\left (\sqrt{\ln \frac 1\delta}+\ln \frac 1\delta\right )+s\Delta^2\right ).
\end{equation*}

The last term is simply bounded by $\O(s)$ after summing up over $\Delta$. Let us focus on the first term, where we need to upper bound the magnitude of $\bar{\mathcal R_n^{\mathcal F}}$. Applying \Cref{lem:ridge error bound} shows that the following with probability $1-\delta$:
\begin{equation*}
    \sum_{k=1}^n \left (r_k-\langle  x_k,\hat \beta\rangle\right )^2\le \sum_{k=1}^n \eta_k^2 + 2s \ln \frac{n}{s\delta^2}+2.
\end{equation*}

Therefore, we can write the regret (conditioning on this good event holds for all $\Delta$, which is with probability at least $1-dT\delta$ according to \Cref{thm:appendix total regret of part b}) as
\begin{equation}\label{eq:phase A VOFUL2 unknown}
    \mathcal R_T^a\le \sum_{\Delta=2^{-2},\ldots,\Delta_f}\Otil\left (s^{2.5}\sqrt{\sum_{t\in \mathcal T_\Delta^a}\eta_t^2\log \frac 1\delta}+s^3\left (\sqrt{\log \frac 1\delta}+\log \frac 1\delta\right )+s\Delta^2\right ).
\end{equation}

Again by the technique we used for \Cref{sec:appendix analysis}, we will need the following lower bound for all  $n_\Delta^a$ except for the last one, which follows from the fact that $\bar{\mathcal R_n^{\mathcal F}}\ge \mathcal R_n^{\mathcal F}$ (\Cref{lem:Zhang unknown is overestimation}):
\begin{equation*}
    n_\Delta^a\ge \frac{Cs^{1.5}}{\Delta^2}\sqrt{\sum_{t\in \tilde{\mathcal T_\Delta^a}} \eta_t^2\ln \frac 1\delta}.
\end{equation*}

By the summation technique that we used in \Cref{sec:appendix analysis}, more preciously, by the derivation of \Cref{eq:summation technique}, we will have the following derivation:
\begin{equation*}
    \sum_{\Delta\ne \Delta_f}Cs^{1.5}\sqrt{\sum_{t\in \mathcal T_\Delta^a}\eta_t^2 \ln \frac 1\delta}\le \sqrt{\log_4 X}\sqrt{\sum_{\frac 14\ge \Delta\ge \Delta_X}\sum_{t\in \mathcal T_\Delta^a}C^2s^3\eta_t^2 \ln \frac 1\delta}+\frac 1X \sum_{\Delta_X>\Delta>\Delta_f} \frac{Cs^{1.5}}{\Delta^2}\sqrt{\sum_{t\in \mathcal T_\Delta^a}\eta_t^2 \ln \frac 1\delta}
\end{equation*}

where $X$ is defined as
\begin{equation*}
    X=\left . T\middle /\sqrt{\sum_{\Delta\ne \Delta_f}\sum_{t\in \mathcal T_\Delta^a}C^2s^3\eta_t^2 \ln \frac 1\delta}\right .
\end{equation*}

and $\Delta_X=2^{-\lceil \log_4X\rceil}$, which means $\Delta_X^2\le \frac 1X$. Hence, we will have (the second summation will be bounded by $\frac TX$ as $\sum_\Delta n_\Delta^a\le T$)
\begin{equation*}
    \sum_{\Delta\ne \Delta_f}Cs^{1.5}\sqrt{\sum_{t\in \mathcal T_\Delta^a}\sigma_t^2 \ln \frac 1\delta}\le \Otil\left (\sqrt{\sum_{\Delta\ne \Delta_f}\sum_{t\in \mathcal T_\Delta^a}C^2s^3\eta_t^2 \log \frac 1\delta}\right )=\Otil\left (s^{1.5}\sqrt{\sum_{t=1}^T \eta_t^2\log \frac 1\delta}\right ).
\end{equation*}

In other words, the first term of \Cref{eq:phase A VOFUL2 unknown} will be bounded by
\begin{equation*}
    s^{2.5}\sum_{\Delta=2^{-2},\ldots,\Delta_f}\sqrt{\sum_{t\in \mathcal T_\Delta^a}\eta_t^2\ln \frac 1\delta}\le \Otil\left (s^{2.5}\sqrt{\sum_{t=1}^T \eta_t^2\log \frac 1\delta}\right ).
\end{equation*}

So we are done with the first and the last term of \Cref{eq:phase A VOFUL2 unknown}. Now consider the second term, which is equivalent to bounding $\sum_\Delta 1$. Use the following property guaranteed by (again) \Cref{line:phase a terminiate} of \Cref{alg:framework}:
\begin{equation*}
    \sum_{\Delta\ne \Delta_f}\Delta^{-2} C s^2 \ln \frac 4\delta\le \sum_{\Delta\ne \Delta_f}n_\Delta^a\le T,
\end{equation*}

which means $\sum_\Delta 1\le \log_4(T/s^2)=\Otil(1)$. Combining them together gives
\begin{equation*}
    \mathcal R_T^a\le \Otil\left (s^{2.5}\sqrt{\sum_{t=1}^T \eta_t^2\log \frac 1\delta}+s^3\left (\sqrt{\log \frac 1\delta}+\log \frac 1\delta\right )\right ),
\end{equation*}

At last, due to \Cref{thm:variance concentration raw}, with probability $1-\delta$,
\begin{equation*}
    \sum_{i=1}^n \left (r_i-\langle  x_i,\beta^\ast\rangle\right )^2=\sum_{k=1}^n \eta_k^2\overset{(a)}{\le} \sum_{k=1}^n \E[\eta_k^2\mid \mathcal F_{k-1}]+4\sqrt 2\sum_{k=1}^n \sigma_k^2\ln \frac 2\delta\le 8\sum_{k=1}^n \sigma_k^2 \ln \frac 2\delta,
\end{equation*}

where (a) is due to \Cref{thm:variance concentration raw} and (b) is due to $\E[\eta_k^2\mid \mathcal F_{k-1}]\le \sigma_k^2$. Hence,
\begin{equation*}
    \mathcal R_T^a\le \Otil\left (s^{2.5}\sqrt{\sum_{t=1}^T \sigma_t^2}\log \frac 1\delta+s^3\left (\sqrt{\log \frac 1\delta}+\log \frac 1\delta\right )\right ).
\end{equation*}

Combining this with the regret for $\mathcal R_T^b$ (\Cref{thm:appendix total regret of part b}) gives
\begin{align*}
    \mathcal R_T&\le \O\left ((s^{2.5}+s\sqrt d)\sqrt{\sum_{t=1}^T \sigma_t^t}\log \frac 1\delta+s^3\left (\sqrt{\log \frac 1\delta}+\log \frac 1\delta\right )+s\log \frac 1\delta\right )\\
    &=\O\left ((s^{2.5}+s\sqrt d)\sqrt{\sum_{t=1}^T \sigma_t^t}\log \frac 1\delta+s^3\log \frac 1\delta\right ),
\end{align*}

as claimed.
\end{proof}

\subsection{Regret Over-Estimation}\label{sec:VOFUL2 regret upper bound}
In this section, we first discuss why the strengthened \Cref{prop:VOFUL2 regret} holds. After that, we argue that our pessimistic estimation $\bar{\mathcal R_n^{\mathcal F}}$ is indeed an over-estimation of $\mathcal R_n^{\mathcal F}$. We state the \texttt{VOFUL2} algorithm in \Cref{alg:VOFUL2} and also restate \Cref{prop:VOFUL2 regret} as \Cref{prop:VOFUL2 regret appendix} for the ease of presentation.

\begin{algorithm}[!t]
\caption{\texttt{VOFUL2} Algorithm \citep{kim2021improved}}\label{alg:VOFUL2}
\begin{algorithmic}[1]
\For{$t=1,2,\ldots,T$}
\State Compute the action for the $t$-th round as
\begin{equation}\label{eq:VOFUL2 update}
    x_t=\operatornamewithlimits{argmax}_{x\in \mathcal X}\max_{\theta\in \bigcap_{s=1}^{t-1}\Theta_s}\langle x,\theta\rangle,
\end{equation}

where $\Theta_s$ is defined in \Cref{eq:VOFUL2 confidence set}.
\State Observe reward $r_t=\langle x_t,\theta^\ast\rangle+\eta_t$.
\EndFor
\end{algorithmic}
\end{algorithm}

\begin{proposition}\label{prop:VOFUL2 regret appendix}
\texttt{VOFUL2} on $d$ dimensions guarantees, with probability at least $1-\delta$,
\begin{equation}
    \mathcal R_T^{\mathcal F}= \tilde{\mathcal O}\left (d^{1.5}\sqrt{\sum_{t=1}^T\eta_t^2\log \frac 1\delta}+d^2\log \frac 1\delta\right )=\tilde{\mathcal O}\left (d^{1.5}\sqrt{\sum_{t=1}^T\sigma_t^2}\log \frac 1\delta+d^2\log \frac 1\delta\right ),
\end{equation}

where $T$ is a stopping time finite a.s. and $\sigma_1^2,\sigma_2^2,\ldots,\sigma_T^2$ are the variances of $\eta_1,\eta_2,\ldots,\eta_T$.
\end{proposition}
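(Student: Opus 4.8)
The claim has two halves: (i) the \texttt{VOFUL2} regret guarantee survives the replacement of the bounded-noise hypothesis of \citet{kim2021improved} by \Cref{assump:cond symmetric,assump:sub-Gaussian}; and (ii) the guarantee can be put in the ``self-bounding'' form involving $\sum_{t}\eta_t^2$, which then converts to the $\sum_t\sigma_t^2$ form. For (i), recall that the analysis of \texttt{VOFUL2} (as laid out in \Cref{alg:VOFUL2}, \Cref{eq:VOFUL2 update}) splits into a \emph{probabilistic} part --- the event $\mathcal E^\star$ that $\theta^\ast\in\bigcap_{s}\Theta_s$ for the layered, variance-weighted confidence sets $\Theta_s$ --- and a \emph{deterministic} part that, \emph{on} $\mathcal E^\star$, bounds $\mathcal R_T^{\mathcal F}$ by an elliptical-potential / layer-counting argument whose only random input is the realized noise energy. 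The deterministic part never refers to the noise distribution and carries over verbatim; the whole task is to re-establish $\Pr\{\mathcal E^\star\}\ge 1-\delta$, with the confidence widths unchanged up to $\polylog$ factors, under our assumptions.

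The first step is to reduce to bounded noise by truncation: put $L\triangleq c\sqrt{\ln(T/\delta)}$ for a suitable absolute constant $c$, so that by \Cref{assump:sub-Gaussian} and a union bound over $t\in[T]$ the event $\mathcal E_0\triangleq\{\lvert\eta_t\rvert\le L\ \forall t\in[T]\}$ has probability at least $1-\delta$; on $\mathcal E_0$ the noise is $L$-bounded and agrees with the corresponding symmetric truncation, so every Freedman/Bernstein step in \citet{kim2021improved} applies with their range parameter replaced by $L=\Otil(1)$. Crucially, in those steps the factor multiplying the range is a conditional second moment $\E[\eta_\tau^2\mid\mathcal F_{\tau-1}]\le\sigma_\tau^2$, so $L$ enters only the lower-order additive piece of each confidence width --- which remains $\Otil(d^2)$ --- and not the leading $\sqrt{\sum_\tau(\text{weighted variances})}$ piece. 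Wherever the original proof needs a two-sided tail bound, the conditional symmetry of \Cref{assump:cond symmetric} supplies the de~la~Pe\~na--type input of \Cref{thm:de la pena} directly; and trading population second moments for the empirical energy $\sum_\tau\eta_\tau^2$ and back, up to $\polylog$ factors and an additive $\Otil(1)$, is precisely what the Empirical Bernstein inequality (\Cref{thm:Bernstein with common mean}) and the variance-concentration bounds (\Cref{thm:variance concentration raw,thm:variance concentration final}) deliver. The (possibly infinite) family $\{\Theta_s\}_s$ is handled, as in the original, by a union bound with geometrically decaying failure probabilities. This yields $\Pr\{\mathcal E^\star\}\ge 1-\delta$ with confidence widths matching the bounded-noise case up to $\polylog(T,1/\delta)$.

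Combining the deterministic regret bound with $\mathcal E^\star$ (and rescaling $\delta$ by a constant to absorb the union bounds) then gives, with probability $1-\delta$,
\begin{equation*}
\mathcal R_T^{\mathcal F}\le C\left(d^{1.5}\sqrt{\textstyle\sum_{t=1}^T\eta_t^2\,\ln\tfrac1\delta}+d^2\ln\tfrac4\delta\right),\qquad C=\Otil(1),
\end{equation*}
which is the first inequality of \Cref{eq:regret of VOFUL2}. For the second, \Cref{thm:variance concentration raw} gives, with probability $1-\delta$, $\sum_{t=1}^T\eta_t^2\le\sum_{t=1}^T\E[\eta_t^2\mid\mathcal F_{t-1}]+4\sqrt2\sum_{t=1}^T\sigma_t^2\ln\tfrac2\delta\le 8\sum_{t=1}^T\sigma_t^2\ln\tfrac2\delta$, using $\E[\eta_t^2\mid\mathcal F_{t-1}]\le\sigma_t^2$; substituting and pulling the logarithmic factor out of the square root yields $\mathcal R_T^{\mathcal F}=\Otil(d^{1.5}\sqrt{\sum_{t=1}^T\sigma_t^2}\log\tfrac1\delta+d^2\log\tfrac1\delta)$, as claimed.

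\textbf{Main obstacle.} The delicate point is auditing \citet{kim2021improved}'s proof to confirm that boundedness is invoked \emph{only} through steps that degrade by at most $\polylog$ under truncation --- in particular, that the threshold $L$ never contaminates the leading $d^{1.5}\sqrt{\sum\eta_t^2}$ term but only the additive $\Otil(d^2)$ term, and that the layered/peeling union bound still closes with $L=\Otil(1)$. A secondary check is that each place their argument uses boundedness for a \emph{two-sided} concentration can instead be fed the conditionally-symmetric self-normalized bound of \Cref{thm:de la pena}; if some step genuinely resists this, one simply reuses the $\mathcal E_0$ truncation there too, which is always available at the cost of a further $\polylog$ factor.
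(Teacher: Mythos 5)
Your route (truncate at $L=\Otil(1)$, then rerun the bounded-noise analysis of \citet{kim2021improved} on the truncation event) is genuinely different from the paper's, which never truncates: the paper swaps out the concentration toolbox, proving a range-free empirical Bernstein inequality for conditionally symmetric martingales via de la Pe\~na (\Cref{thm:Bernstein with common mean}, used as \Cref{lem:thm 4 from Zhang}), a sub-exponential Freedman bound for the variance concentration (\Cref{lem:lemma 17 from Zhang}, \Cref{thm:variance concentration final}), and then re-derives the key geometric lemma (\Cref{lem:Lemma 4 from Kim}) so that the bound is stated in terms of the realized noise energy $A_t=\sum_s\eta_s^2$. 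As written, your plan has two concrete gaps. First, the confidence sets of \texttt{VOFUL2} as used here (\Cref{eq:VOFUL2 confidence set}) contain \emph{no} additive range term: membership of $\theta^\ast$ is exactly the statement $\lvert\sum_s \bar{(x_s^\trans\mu)}_\ell\,\eta_s\rvert\le\sqrt{\sum_s \bar{(x_s^\trans\mu)}_\ell^2\eta_s^2\,\iota}$. A bounded-range empirical Bernstein applied to the truncated noise yields an extra additive $\Otil(2^{-\ell}L\log\tfrac1\delta)$ term, so it does not establish coverage of \emph{these} sets; you would either have to widen the sets (i.e.\ change the algorithm and then re-propagate the wider $\iota$ through the peeling argument) or invoke precisely the symmetric self-normalized bound of \Cref{thm:de la pena}/\Cref{thm:Bernstein with common mean} --- at which point the truncation buys nothing, since that bound needs no boundedness at all. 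Your "fallback" sentence gestures at this, but then the truncation is not the proof; the symmetry-based inequality is, which is the paper's actual new ingredient.

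Second, the statement you are proving is the \emph{variant} of Kim et al.'s Theorem 2 whose leading term is $d^{1.5}\sqrt{\sum_t\eta_t^2\log\tfrac1\delta}$, and this self-bounding form is not obtained by reusing their deterministic argument "verbatim": in the original, the analogue of \Cref{lem:Lemma 4 from Kim} is stated in terms of $\sum_s\sigma_s^2$ after an internal variance-concentration step, so your claim that the deterministic part's "only random input is the realized noise energy" is exactly the point that must be re-proved. The paper does this by rerunning the proof of \Cref{lem:Lemma 4 from Kim} with $\epsilon_s^2(\theta^\ast)=\eta_s^2$ kept empirical (plus the self-bounding step \Cref{lem:self-bounding property from x<a+bsqrt(x)}), and only afterwards converts $\sum\eta_t^2\lesssim\sum\sigma_t^2\log\tfrac1\delta$ via \Cref{thm:variance concentration final}; your final conversion step is the same as the paper's and is fine, but the $\eta^2$-form of the intermediate lemma is asserted rather than established. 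A minor additional point worth making explicit if you keep the truncation: conditional symmetry is what guarantees the truncated noise remains conditionally zero-mean, so even your reduction silently relies on \Cref{assump:cond symmetric}.
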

\begin{proof}
We will follow the proof of \citet{kim2021improved} and highlight the different steps. We first argue that an analog to their Empirical Bernstein Inequality \citep[Theorem 4]{zhang2021improved} still holds.
\begin{lemma}[{Analog of Theorem 4 from \citet{zhang2021improved}}]\label{lem:thm 4 from Zhang}
Let $\{X_i\}_{i=1}^n$ be a sequence of zero-mean Gaussian random variables such that $n$ is a stopping time finite a.s. Then for all $n\ge 8$ and any $\delta\in (0,1)$,
\begin{equation*}
    \Pr\left \{\left \lvert \sum_{i=1}^n X_i \right \rvert\le 8\sqrt{\sum_{i=1}^n X_i^2\ln \frac 4\delta}\right \}\ge 1-\delta.
\end{equation*}
\end{lemma}
\begin{proof}
    This is a direct corollary of \Cref{thm:Bernstein with common mean}.
\end{proof}

Thanks to \Cref{thm:variance concentration final}, the following analog of Lemma 17 from \citet{zhang2021improved} also holds:
\begin{lemma}[{Analog of Lemma 17 from \citet{zhang2021improved}}]\label{lem:lemma 17 from Zhang}
Let $\{X_i\}_{i=1}^n$ be a sequence of zero-mean Gaussian random variables such that $n$ is a stopping time finite a.s. Then for all $\delta\in (0,1)$,
\begin{equation*}
\Pr\left \{\sum_{i=1}^n X_i^2\ge 8\sum_{i=1}^n \sigma_i^2\ln \frac 2\delta\right \}\le \delta,
\end{equation*}

where $\sigma_i^2$ is the variance of $X_i$.
\end{lemma}

Then consider their Elliptical Potential Counting Lemma \citep[Lemma 5]{kim2021improved}, which holds as long as $\lVert x_t\rVert_2\le 1$. In our setting, we indeed have this property as $x_t\in \mathbb B^d$ (only the noises can be unbounded, instead of actions). Hence, this lemma still holds.
\begin{proposition}[{\citet[Lemma 5]{kim2021improved}}]
Let $x_1,x_2,\ldots,x_k\in \mathbb R^d$ be such that $\lVert x_i\rVert_2\le 1$ for all $s\in [k]$. Let $V_k=\lambda I+\sum_{i=1}^k x_ix_i^\trans$. Let $J=\{i\in [k]\mid \lVert x_i\rVert_{V_{i-1}^{-1}}^2\ge q\}$, then
\begin{equation*}
\lvert J\rvert\le \frac{2d}{\ln (1+q)} \ln \left (1+\frac{2/e}{\ln (1+q)}\frac 1\lambda\right ).
\end{equation*}
\end{proposition}

Then consider the confidence set construction:
\begin{equation}\label{eq:VOFUL2 confidence set}
\Theta_t=\bigcap_{\ell=1}^L\left \{\theta\in \mathbb B^d\middle \vert \left \lvert \sum_{s=1}^t \bar{(x_s^\trans \mu)}_\ell\epsilon_s(\theta) \right \rvert\le \sqrt{\sum_{s=1}^t\bar{(x_s^\trans \mu)}_\ell^2\epsilon_s^2(\theta)\iota},\forall \mu\in \mathbb B^d\right \},
\end{equation}

where $\epsilon_s(\theta)=r_s-x_s^\trans\theta$, $\epsilon_s^2(\theta)=(\epsilon_s(\theta))^2$, $\iota=128\ln ((12K2^L)^{d+2}/\delta)=\O(\sqrt d)$, $L=\max\{1,\lfloor \log_2(1+\frac Td)\rfloor\}$ and $\bar{(x)}_\ell=\min\{\lvert x\rvert,2^{-\ell}\}\frac{x}{\lvert x\rvert}$. By using \Cref{lem:thm 4 from Zhang} together with their original $\epsilon$-net coverage argument, we have $\theta^\ast\in \Theta_t$ for all $t\in [T]$ with high probability:
\begin{lemma}[Analog of Lemma 1 from {\citet{kim2021improved}}]\label{lem:Lemma 1 from Kim}
The good event $\mathcal E_1=\{\forall t\in [T],\theta^\ast\in \Theta_t\}$ happens with probability $1-\delta$.
\end{lemma}

Similar to \citet{kim2021improved}, we define $\theta_t$ be the maximizer of \Cref{eq:VOFUL2 update} in the $t$-th round and define $\mu_t=\theta_t-\theta^\ast$. We also consider the following good event
\begin{equation*}
    \mathcal E_2:\forall t\in [T],\sum_{s=1}^t \epsilon_s^2(\theta^\ast)\le 8\sum_{s=1}^t \sigma_s^2\ln \frac{8T}{\delta},
\end{equation*}

which happens with probability $1-\delta$ due to \Cref{lem:lemma 17 from Zhang}. Define
\begin{equation*}
W_{\ell,t-1}(\mu)=2^{-\ell}\lambda I+\sum_{s=1}^{t-1}\left (1\wedge \frac{2^{-\ell}}{\lvert x_s^\trans \mu\rvert}\right )x_sx_s^\trans.
\end{equation*}

Abbreviate $W_{\ell,t-1}(\mu_t)$ as $W_{\ell,t-1}$, then we have the following lemma, which is slightly different from the original one, whose proof will be presented later.
\begin{lemma}[{Analog of Lemma 4 from \citet{kim2021improved}}]\label{lem:Lemma 4 from Kim}
Conditioning on $\mathcal E_1$ and setting $\lambda=1$, we have
\begin{enumerate}[leftmargin=*]
    \item $\lVert \mu_t\rVert_{W_{\ell,t-1}}^2\le C_12^{-\ell}(\sqrt{A_{t-1}\iota}+\iota)$ for some absolute constant $C_1$, where $A_t\triangleq \sum_{s=1}^t \eta_s^2$.
    \item For all $s\le t$, we have $\lVert \mu_t\rVert_{W_{\ell,s-1}}^2\le C_12^{-\ell}(\sqrt{A_{s-1}\iota}+\iota)$.
    \item There exists absolute constant $C_2$ such that $x_t\mu_t\le C_2\lVert x_t^\trans\rVert_{W_{\ell,t-1}^{-1}}^2 (\sqrt{A_{t-1}\iota}+\iota)$.
\end{enumerate}
\end{lemma}

Therefore, as we have the same $\mathcal E_1$, Lemma 5 and a similar Lemma 4 (which uses $\eta_s$ instead of $\sigma_s$), we can conclude that
\begin{align*}
\mathcal R_T^{\mathcal F}&\le C\left (\sqrt{\sum_{s=1}^{T-1}\eta_s^2\iota}+\iota\right )d\ln^2 \left (1+C\left (\sqrt{\sum_{s=1}^{T-1}\eta_s^2\iota}+\iota\right )\left (1+\frac Td\right )^2\right )\\
&=\Otil\left (d^{1.5}\sqrt{\sum_{t=1}^T \eta_t^2\log \frac 1\delta}+d^2\log \frac 1\delta\right ),
\end{align*}

as in \citet[Theorem 2]{kim2021improved} (recall that $\iota=\O(\sqrt d)$). Conditioning on $\mathcal E_2$, we then further have
\begin{equation*}
    \mathcal R_T^{\mathcal F}\le \Otil\left (d^{1.5}\sqrt{\sum_{t=1}^T \sigma_t^2}\log \frac 1\delta+d^2\log \frac 1\delta\right ),
\end{equation*}

as claimed.
\end{proof}

\begin{proof}[Proof of \Cref{lem:Lemma 4 from Kim}]
By definition and abbreviating $\bar{(\cdot)}_\ell$ as $\bar{(\cdot)}$, we have
\begin{align*}
\lVert \mu_t\rVert_{W_{\ell,t-1}-2^{-\ell}\lambda I}^2
&=\sum_{s=1}^{t-1}\bar{(x_s^\trans \mu_t)} (x_s^\trans \mu_t)=\sum_{s=1}^{t-1}\bar{(x_s^\trans \mu_s)}(x_s\theta_t-r_t+r_t-x_s\theta^\ast)\\
&=\sum_{s=1}^{t-1}\bar{(x_s^\trans \mu_t)}(-\epsilon_s(\theta_t)+\epsilon_s(\theta^\ast))\\
&\overset{(\mathcal E_1)}{\le} \sqrt{\sum_{s=1}^{t-1}(x_s^\trans \mu_t)^2\epsilon_s^2 (\theta_t)\iota}+\sqrt{\sum_{s=1}^{t-1}\bar{(x_s^\trans \mu_t)}^2\epsilon_s^2 (\theta^\ast)\iota}\\
&\overset{(a)}{\le} \sqrt{\sum_{s=1}^{t-1}\bar{(x_s^\trans \mu_t)}^22(x_s^\trans \mu_t)^2\iota}+2\sqrt{\sum_{s=1}^{t-1}\bar{(x_s^\trans \mu_t)}^22\epsilon_s^2 (\theta^\ast)\iota}\\
&\overset{(b)}{\le} \sqrt{2^{-\ell}\sum_{s=1}^{t-1}\bar{(x_s^\trans \mu_t)}^22(x_s^\trans \mu_t)^2\iota}+2^{-\ell}16\sqrt{\sum_{s=1}^{t-1}\eta_s^2\iota}\\
&\le 2\sqrt{2\sum_{s=1}^{t-1}\bar{(x_s^\trans \mu_t)}(x_s^\trans \mu_t)\iota}+2^{-\ell}16\sqrt{\sum_{s=1}^{t-1}\eta_s^2\iota}\\
&=\sqrt{2^{-\ell}8\lVert \mu_t\rVert_{V_{t-1}-\lambda I}^2\iota}+2^{-\ell}16\sqrt{\sum_{s=1}^{t-1}\eta_s^2\iota},
\end{align*}

where (a) used $\epsilon_s^2(\theta_t)=(r_s-x_s\theta_t)^2=(x_s^\trans (\theta^\ast-\theta_t)+\epsilon_s^2(\theta^\ast))\le 2(x_s^\trans \mu_t)^2+2\epsilon_s^2$ and (b) used $\epsilon_s^2(\theta^\ast)=\eta_s^2$. By the self-bounding property \Cref{lem:self-bounding property from x<a+bsqrt(x)}, we have
\begin{equation*}
\lVert \mu_t\rVert_{V_{t-1}-\lambda I}^2\le 16\sqrt{\sum_{s=1}^{t-1}\sigma_s^2}+8\iota,
\end{equation*}

which means $\lVert \mu_t\rVert_{V_{t-1}}^2\le 4\lambda+16\sqrt{\sum_{s=1}^{t-1}\sigma_s^2\iota}+8\iota$. Setting $\lambda=1$ gives the first conclusion. Based on this, the second and third conclusion directly follow according to \citet{kim2021improved}.
\end{proof}

\subsection{Extension to Unknown Variance Cases}
Based on \Cref{prop:VOFUL2 regret}, we then show that, our regret estimation $\bar{\mathcal R_n^{\mathcal F}}$ \Cref{eq:regret estimation with unknown variance} is indeed pessimistic (i.e., \Cref{lem:Zhang unknown is overestimation}).
\begin{proof}[Proof of \Cref{lem:Zhang unknown is overestimation}]
From \Cref{lem:ridge error bound} with $\lambda=1$, with probability $1-\delta$, we will have (recall the assumption that $\sigma_t^2\le 1$ for all $t\in [T]$)
\begin{equation*}
    \sum_{k=1}^n (r_k- \langle  x_k, \beta^\ast\rangle)^2 =\sum_{k=1}^n\eta_k^2\le \sum_{k=1}^n (r_k-\langle  x_k, \hat \beta\rangle)^2+2s^2\ln \frac{n}{s \delta^2}+2.
\end{equation*}

Therefore we have
\begin{align*}
    \mathcal R_n^{\mathcal F}&\le C\left (s^{1.5}\sqrt{\sum_{k=1}^n \eta_k^2\ln \frac 1\delta}+s^2\ln \frac 1\delta\right )\\
    &\le C\left (s^{1.5}\sqrt{\left (\sum_{k=1}^n (r_k-\langle  x_k, \hat \beta\rangle)^2+2s\ln\frac{n}{s \delta^2}+2\right )\ln \frac 1\delta}+s^2\ln \frac 1\delta\right )\\
    &\le C\left (s^{1.5}\sqrt{\sum_{k=1}^n (r_k-\langle  x_k, \hat \beta\rangle)^2\ln \frac 1\delta}+s^2\sqrt{2\ln \frac{n}{s\delta^2}\ln \frac 1\delta}+s^{1.5}\sqrt{2\ln \frac 1\delta}+s^2\ln \frac 1\delta\right )=\bar{\mathcal R_n^{\mathcal F}}.
\end{align*}

In other words, our $\bar{\mathcal R_n^{\mathcal F}}$ is an over-estimation of $\mathcal R_n^{\mathcal F}$ with probability $1-\delta$.
\end{proof}

\section{Omitted Proof in \Cref{sec:zhou et al} (Analysis of \texttt{Weighted OFUL})}\label{sec:appendix zhou et al}

\subsection{Proof of Main Theorem}
Similar to the \texttt{VOFUL2} algorithm, we still assume \Cref{prop:Weighted OFUL regret} indeed holds and defer the discussions to the next section. We restate the regret guarantee of \texttt{Weighted OFUL} \citep{zhou2021nearly}, namely \Cref{thm:regret with Weighted OUFL}, for the ease of reading, as follows:

\begin{theorem}[Regret of \Cref{alg:framework} with \texttt{Weighted OFUL} in Known-Variance Case]\label{thm:appendix weighted OFUL known}
Consider \Cref{alg:framework} with $\mathcal F$ as \texttt{Weighted OFUL} \citep{zhou2021nearly} and $\bar{\mathcal R_n^{\mathcal F}}$ as
\begin{equation*}
    \bar{\mathcal R_n^{\mathcal F}}\triangleq C\left (\sqrt{sn\ln \frac 1\delta}+s\sqrt{\sum_{k=1}^n \sigma_k^2 \ln \frac 1\delta}\right ).
\end{equation*}

The algorithm ensures the following regret bound with probability $1-\delta$:
\begin{equation*}
    \mathcal R_T= \Otil\left ((s^2+s\sqrt d)\sqrt{\sum_{t=1}^T \sigma_t^2}\log \frac 1\delta+s^{1.5}\sqrt{T}\log \frac 1\delta\right ).
\end{equation*}
\end{theorem}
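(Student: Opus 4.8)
The plan is to mirror the proof of \Cref{thm:regret with VOFUL2 and unknown variances} (i.e., \Cref{thm:appendix VOFUL2 unknown}), splitting the total regret as $\mathcal R_T = \mathcal R_T^a + \mathcal R_T^b$ and invoking \Cref{thm:total regret of part b} directly for the Phase-B regret, which gives $\mathcal R_T^b = \Otil(s\sqrt{d\sum_{t=1}^T\sigma_t^2}\log\frac1\delta + s\log\frac1\delta)$ — this part only needs that the chosen $\bar{\mathcal R_n^{\mathcal F}}$ is a valid pessimistic over-estimate of $\mathcal R_n^{\mathcal F}$, which is immediate here since \texttt{Weighted OFUL} has \emph{known} variances and \Cref{prop:Weighted OFUL regret} literally states $\mathcal R_n^{\mathcal F}\le C(\sqrt{sn\log\frac1\delta} + s\sqrt{\sum_{k=1}^n\sigma_k^2\log\frac1\delta})$ (with $d$ replaced by $s=\lvert S\rvert$). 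So the real work is bounding $\mathcal R_T^a$.

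For $\mathcal R_T^a$, I would write, as in \Cref{thm:appendix VOFUL2 unknown}, $\mathcal R_T^a = \sum_{\Delta}\sum_{t\in\mathcal T_\Delta^a}(\sum_{i\in S}\theta_i^\ast(\theta_i^\ast - x_{t,i}) + \sum_{i\notin S}(\theta_i^\ast)^2)$. Conditioning on the good events $\mathcal G_\Delta,\mathcal H_\Delta$ from \Cref{thm:total regret of part b}, the first inner sum is $\le n_\Delta^a\Delta^2$ and the second is $\le 36 s n_\Delta^a\Delta^2$ by $\mathcal H_{2\Delta}$; also the first term is bounded overall by $\bar{\mathcal R_{n_\Delta^a}^{\mathcal F}}$, which is dominated. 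So $\mathcal R_T^a \lesssim \sum_\Delta (\bar{\mathcal R_{n_\Delta^a}^{\mathcal F}} + s n_\Delta^a\Delta^2)$. The stopping rule $\frac1n\bar{\mathcal R_n^{\mathcal F}}<\Delta^2$ (\Cref{line:phase a terminiate}) gives both a lower bound $n_\Delta^a \ge \Delta^{-2}C(\sqrt{s n_\Delta^a\log\frac1\delta} + s\sqrt{\sum_{t\in\mathcal T_\Delta^a}\sigma_t^2\log\frac1\delta})$ (for $\Delta\ne\Delta_f$) and an upper bound $n_\Delta^a - 1 \le \Delta^{-2}C(\sqrt{s(n_\Delta^a-1)\log\frac1\delta} + s\sqrt{\sum_{t\in\tilde{\mathcal T_\Delta^a}}\sigma_t^2\log\frac1\delta})$. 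By the self-bounding property \Cref{lem:self-bounding property from x<a+bsqrt(x)} ($x\le a+b\sqrt x \Rightarrow x=\O(a+b^2)$), the upper bound yields $n_\Delta^a - 1 \le \O(\frac{s}{\Delta^4}\log\frac1\delta + \frac{s}{\Delta^2}\sqrt{\sum_{t\in\tilde{\mathcal T_\Delta^a}}\sigma_t^2\log\frac1\delta})$, hence $s n_\Delta^a\Delta^2 \le \O(\frac{s^2}{\Delta^2}\log\frac1\delta + s^2\sqrt{\sum_{t\in\mathcal T_\Delta^a}\sigma_t^2\log\frac1\delta} + s\Delta^2)$.

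Now the summation over $\Delta$ has three pieces. The $\sum_\Delta s\Delta^2$ piece is $\O(s)$ since $\Delta$ halves. The $\sum_\Delta s^2\sqrt{\sum_{t\in\mathcal T_\Delta^a}\sigma_t^2\log\frac1\delta}$ piece is handled by exactly the ``summation threshold'' argument of \Cref{eq:summation technique}: from the lower bound $n_\Delta^a \ge \Delta^{-2}Cs\sqrt{\sum_{t\in\mathcal T_\Delta^a}\sigma_t^2\log\frac1\delta}$ together with $\sum_\Delta n_\Delta^a\le T$, Cauchy–Schwarz across $\Delta$ with the split point gives $\sum_\Delta\sqrt{\sum_{t\in\mathcal T_\Delta^a}\sigma_t^2} = \Otil(\sqrt{\sum_{t=1}^T\sigma_t^2})$, so this piece is $\Otil(s^2\sqrt{\sum_{t=1}^T\sigma_t^2}\log\frac1\delta)$. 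The genuinely different piece — and the one I expect to be the main obstacle — is $\sum_\Delta \frac{s^2}{\Delta^2}\log\frac1\delta$, which naively looks like $\Omega(T)$ because $\Delta_f$ can be as small as $\Theta(1/\sqrt T)$. The trick, as flagged in the main text, is that the lower bound also gives $n_\Delta^a \ge \Delta^{-2}\sqrt{s n_\Delta^a\log\frac1\delta}$, i.e., $n_\Delta^a = \Omega(\frac{s}{\Delta^4}\log\frac1\delta)$; combined with $n_\Delta^a\le T$ this forces $\frac{1}{\Delta^2}\le\sqrt{T/(s\log\frac1\delta)}$ for every $\Delta$ that is actually used, so $\Delta_f^{-2} = \O(\sqrt{T/s})$ and, since $\sum_\Delta\Delta^{-2}=\O(\Delta_f^{-2})$ (geometric), $\sum_\Delta\frac{s^2}{\Delta^2}\log\frac1\delta = \O(s^2\sqrt{T/s}\log\frac1\delta) = \O(s^{1.5}\sqrt T\log\frac1\delta)$. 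Putting the three pieces together gives $\mathcal R_T^a = \Otil(s^2\sqrt{\sum_{t=1}^T\sigma_t^2}\log\frac1\delta + s^{1.5}\sqrt T\log\frac1\delta)$, and adding $\mathcal R_T^b$ from \Cref{thm:total regret of part b} yields the claimed $\mathcal R_T = \Otil((s^2+s\sqrt d)\sqrt{\sum_{t=1}^T\sigma_t^2}\log\frac1\delta + s^{1.5}\sqrt T\log\frac1\delta)$. Finally, I would rescale $\delta$ by the polynomial union-bound factor (absorbed into $\Otil$) and note that \Cref{prop:Weighted OFUL regret} holds under \Cref{assump:cond symmetric,assump:sub-Gaussian} — the one routine verification I would defer to the next subsection.
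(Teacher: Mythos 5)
Your proposal is correct and follows essentially the same route as the paper's proof: the same $\mathcal R_T^a+\mathcal R_T^b$ split with \Cref{thm:total regret of part b} for Phase B, the same inside/outside-$S$ decomposition with the stopping rule giving matching upper and lower bounds on $n_\Delta^a$, the self-bounding lemma, the summation-threshold argument for the variance term, and the key observation $n_\Delta^a=\Omega(s\Delta^{-4}\log\frac1\delta)$ forcing $\Delta_f^{-2}=\O(\sqrt{T/s})$ to control $\sum_\Delta\Delta^{-2}$. No gaps beyond the routine verification of \Cref{prop:Weighted OFUL regret}, which the paper also defers.
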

\begin{proof}
Firstly, from \Cref{prop:Weighted OFUL regret}, the condition of applying \Cref{thm:appendix total regret of part b} holds. Therefore, we have
\begin{equation*}
    \mathcal R_T^b\le \Otil\left (s\sqrt d\sqrt{\sum_{t=1}^T \sigma_t^2}\log \frac 1\delta+s\log \frac 1\delta\right ).
\end{equation*}

For $\mathcal R_T^a$, similar to \Cref{sec:appendix zhou et al}, we decompose it into two parts: those from $S$ and from outside of $S$. The former case is bounded by \Cref{prop:Weighted OFUL regret}, as
\begin{equation*}
    \text{Regret from }S\text{ with gap threshold }\Delta\le C\left (\sqrt{sn_\Delta^a\ln \frac 1\delta}+s\sqrt{\sum_{t\in \mathcal T_\Delta^a} \sigma_t^2 \ln \frac 1\delta}\right ).
\end{equation*}

For those outside $S$, we will bound it as $\O(sn_\Delta^a\Delta^2)$, where we only need to bound $n_\Delta^a$. From \Cref{line:phase a terminiate} of \Cref{alg:framework}, we have
\begin{align*}
    n_\Delta^a-1&\le \frac{C}{\Delta^2}\left (\sqrt{s(n_\Delta^a-1)\ln \frac 1\delta}+s\sqrt{\sum_{t\in \tilde{\mathcal T_\Delta^a}} \sigma_t^2 \ln \frac 1\delta}\right ).
\end{align*}

By the ``self-bounding'' property that $x\le a+b\sqrt x$ implies $x\le \O(a+b^2)$ (\Cref{lem:self-bounding property from x<a+bsqrt(x)}), we have
\begin{equation*}
    n_\Delta^a-1\le \frac{1}{\Delta^4} s\ln \frac 1\delta+\frac{s}{\Delta^2} \sqrt{\sum_{t\in \tilde{\mathcal T_\Delta^a}}\sigma_t^2\ln \frac 1\delta}.
\end{equation*}

Therefore, we can conclude that (the regret from $S$ is dominated)
\begin{equation*}
    \mathcal R_T^a\le \sum_{\Delta=2^{-2},\ldots} \O \left (\frac{s^2}{\Delta^2}\log \frac 1\delta+s^2\sqrt{\sum_{t\in \tilde{\mathcal T_\Delta^a}}\sigma_t^2\log \frac 1\delta}+s\Delta^2\right ),
\end{equation*}

where the last term is simply bounded by $\O(s)$. Again from \Cref{line:phase a terminiate} of \Cref{alg:framework}, we will have the following property for all $\Delta\ne \Delta_f$:
\begin{equation}\label{eq:lower bound Zhou}
    n_\Delta^a> \frac{C}{\Delta^2}\left (\sqrt{sn_\Delta^a\ln \frac 1\delta}+s\sqrt{\sum_{t\in {\mathcal T_\Delta^a}} \sigma_t^2 \ln \frac 1\delta}\right ).
\end{equation}

We first bound the second term, which basically follow the summation technique (\Cref{eq:summation technique}) that we used in \Cref{sec:appendix analysis,sec:appendix zhang et al unknown}:
\begin{equation*}
    \sum_{\Delta\ne \Delta_f}Cs\sqrt{\sum_{t\in \mathcal T_\Delta^a}\sigma_t^2 \ln \frac 1\delta}\le \sqrt{\log_4 X}\sqrt{\sum_{\frac 14\ge \Delta\ge \Delta_X}\sum_{t\in \mathcal T_\Delta^a}C^2s^2\sigma_t^2 \ln \frac 1\delta}+\frac 1X \sum_{\Delta_X>\Delta>\Delta_f} \frac{Cs}{\Delta^2}\sqrt{\sum_{t\in \mathcal T_\Delta^a}\sigma_t^2 \ln \frac 1\delta}
\end{equation*}

where $X$ is defined as
\begin{equation*}
    X=\left . T\middle /\sqrt{\sum_{\Delta\ne \Delta_f}\sum_{t\in \mathcal T_\Delta^a}C^2s^2\sigma_t^2 \ln \frac 1\delta}\right .
\end{equation*}

and $\Delta_X=2^{-\lceil \log_4X\rceil}$, which means $\Delta_X^2\le \frac 1X$. Hence, we will have (the second summation will be bounded by $\frac TX$ as $\sum_\Delta n_\Delta^a\le T$)
\begin{equation*}
    \sum_{\Delta\ne \Delta_f}Cs\sqrt{\sum_{t\in \mathcal T_\Delta^a}\sigma_t^2 \ln \frac 1\delta}\le \Otil\left (\sqrt{\sum_{\Delta\ne \Delta_f}\sum_{t\in \mathcal T_\Delta^a}C^2s^2\sigma_t^2 \log \frac 1\delta}\right )=\Otil\left (s\sqrt{\sum_{t=1}^T \sigma_t^2\log \frac 1\delta}\right ).
\end{equation*}

Hence, for the second term, we have
\begin{equation*}
    \O\left (s^2\sqrt{\sum_{t=1}^T \sigma_t^2 \log \frac 1\delta\log T}\right )=\Otil\left (s^2\sqrt{\sum_{t=1}^T \sigma_t^2 \log \frac 1\delta}\right ).
\end{equation*}

At last, we consider the first term. From the same lower bound of $n_\Delta^a$ (\Cref{eq:lower bound Zhou}), we will have
\begin{equation*}
    n_\Delta^a>\frac{C}{\Delta^2}\sqrt{sn_\Delta^a \ln \frac 1\delta}\Longrightarrow n_\Delta^a>\frac{C^2}{\Delta^4} s\ln \frac 1\delta.
\end{equation*}

By the fact that $\sum_{\Delta\ne \Delta_f}n_\Delta^a\le T$, we will have
\begin{equation*}
    T\ge C^2 s\ln \frac 1\delta \sum_{\Delta\ne \Delta_f} \Delta^{-4}=\O(1) C^2 s\ln \frac 1\delta (2\Delta_f)^{-4}.
\end{equation*}

Henceforth,
\begin{equation*}
    \O\left (s^2\log \frac 1\delta\cdot \sum_{\Delta=2^{-2},\ldots,\Delta_f}\Delta^{-2}\right )=\O\left (s^2\log \frac 1\delta\Delta_f^{-2}\right )\le \O\left (s^2\log \frac 1\delta\sqrt{\frac{T}{s\ln \frac 1\delta}}\right )=\O\left (s^{1.5}\sqrt{T\log \frac 1\delta}\right ).
\end{equation*}

Combining all above together gives
\begin{align*}
    \mathcal R_T=\mathcal R_T^a+\mathcal R_T^b&\le \O\left (s^{1.5}\sqrt{T\log \frac 1\delta}+s^2\sqrt{\sum_{t=1}^T \sigma_t^2\log \frac 1\delta}+s\right )+\Otil\left (s\sqrt d\sqrt{\sum_{t=1}^T \sigma_t^2}\log \frac 1\delta+s\log \frac 1\delta\right )\\
    &\le \Otil\left ((s^2+s\sqrt d)\sqrt{\sum_{t=1}^T \sigma_t^2}\log \frac 1\delta+s^{1.5}\sqrt{T}\log \frac 1\delta\right ),
\end{align*}

as claimed.
\end{proof}

\subsection{Regret Over-estimation}
We again briefly argue that \Cref{prop:Weighted OFUL regret} holds under our noise model. We present their algorithm in \Cref{alg:Weighted OFUL}.

\begin{algorithm}[!t]
\caption{\texttt{Weighted OFUL} Algorithm \citep{zhou2021nearly}}\label{alg:Weighted OFUL}
\begin{algorithmic}[1]
\State Intialize $A_0\gets \lambda I$, $c_0\gets 0$, $\hat \theta_0\gets A_0^{-1}c_0$, $\hat \beta_0=0$ and $\Theta_0\gets \{\theta\mid \lVert \theta-\hat \theta_0\rVert_{A_0}\le \hat \beta_0+\sqrt \lambda B\}$.
\For{$t=1,2,\ldots,T$}
\State Compute the action for the $t$-th round as
\begin{equation}\label{eq:Weighted OFUL update}
    x_t=\operatornamewithlimits{argmax}_{x\in \mathcal X}\max_{\theta\in \Theta_{t-1}}\langle x,\theta\rangle.
\end{equation}

\State Observe reward $r_t=\langle x_t,\theta^\ast\rangle+\eta_t$ and variance information $\sigma_t^2$, set $\bar \sigma_t=\max\{1/\sqrt d,\sigma_t\}$, set confidence radius $\hat \beta_t$ as
\begin{equation}\label{eq:Weighted OFUL radius}
    \hat \beta_t=8\sqrt{d\ln \left (1+\frac{t}{d\lambda \bar \sigma_{\min,t}^2}\right )\ln \frac{4t^2}{\delta}},
\end{equation}

where $\bar \sigma_{\min,t}\triangleq \min_{s=1}^{t} \bar \sigma_s$.

\State Calculate $A_t\gets A_{t-1}+x_tx_t^\trans / \bar \sigma_t^2$, $c_t\gets c_{t-1}+r_tx_t/\bar \sigma_t^2$, $\hat \theta_t\gets A_t^{-1}c_t$ and $\Theta_t\gets \{\theta\mid \lVert \theta-\hat \theta_t\rVert_{A_t}\le \hat \beta_t+\sqrt \lambda B\}$.
\EndFor
\end{algorithmic}
\end{algorithm}

\begin{proposition}
With probability at least $1-\delta$, \texttt{Weighted OFUL} executed for $T$ steps on $d$ dimensions guarantees
\begin{equation*}
\mathcal R_T^{\mathcal F}\le C\left (\sqrt{dT\log \frac 1\delta}+d\sqrt{\sum_{t=1}^T \sigma_t^2\log \frac 1\delta}\right )
\end{equation*}

where $C=\Otil(1)$, $T$ is a stopping time finite a.s., and $\sigma_1^2,\sigma_2^2,\ldots,\sigma_T^2$ are the variances of $\eta_1,\eta_2,\ldots,\eta_T$.
\end{proposition}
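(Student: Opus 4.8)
The plan is to prove \Cref{prop:Weighted OFUL regret} holds under our conditionally symmetric sub-Gaussian noise assumption (\Cref{assump:cond symmetric,assump:sub-Gaussian}) rather than the bounded-noise assumption of \citet{zhou2021nearly}, by revisiting their analysis step-by-step and replacing the two places where boundedness is invoked. The two ingredients that need patching are (i) the concentration bound used to show $\theta^\ast \in \Theta_t$ for all $t$ with high probability (their Bernstein-type/self-normalized bound relies on $|\eta_t|\le R$ or similar), and (ii) the bound on the ``sum of variances'' term $\sum_{t=1}^T \bar\sigma_t^2$ that appears in the elliptical-potential argument, which uses $\sigma_t^2 \le R^2$. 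For (i) I would substitute our Empirical Bernstein Inequality \Cref{thm:Bernstein with common mean} (equivalently the de la Pe\~na-type bound \Cref{thm:de la pena}) exactly as was done for \texttt{VOFUL2} in \Cref{sec:VOFUL2 regret upper bound}; this gives a self-normalized tail bound that only needs conditional symmetry, so the union bound over the $\epsilon$-net construction of the confidence ellipsoid goes through verbatim with $\sigma_t^2$ in place of the bounded-noise proxy. For (ii) I would invoke \Cref{thm:variance concentration final}(3), which shows $\sum_{t=1}^T \eta_t^2 \le 8\sum_{t=1}^T \sigma_t^2 \ln\frac 2\delta$ with high probability, exactly paralleling the good event $\mathcal E_2$ in the \texttt{VOFUL2} proof.

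Concretely, the first step is to state the algorithm (already done in \Cref{alg:Weighted OFUL}) and recall from \citet{zhou2021nearly} that, on the good event that $\theta^\ast$ lies in every confidence set $\Theta_t$, the instantaneous regret at step $t$ is bounded by $2\hat\beta_{t-1}\lVert x_t\rVert_{A_{t-1}^{-1}}$, and that summing via Cauchy--Schwarz and the weighted elliptical potential lemma yields
\begin{equation*}
\mathcal R_T^{\mathcal F} \le C\,\hat\beta_T\sqrt{d T \log\tfrac 1\delta} \quad\text{if one uses the crude }\lVert x_t\rVert_{A_{t-1}^{-1}}\le 1/\bar\sigma_t,
\end{equation*}
but more carefully, splitting the potential sum according to whether $\bar\sigma_t = \sigma_t$ or $\bar\sigma_t = 1/\sqrt d$, gives the refined bound $\mathcal R_T^{\mathcal F} = \Otil(\sqrt{dT\log\frac1\delta} + d\sqrt{\sum_t \sigma_t^2 \log\frac1\delta})$. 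The second step is to verify the confidence-set validity: I would reprove their Lemma analogous to \citet[Lemma 1]{kim2021improved} / \citet[Theorem 4.1]{zhou2021nearly} by expanding $\hat\theta_t - \theta^\ast = A_t^{-1}\sum_{s\le t} x_s\eta_s/\bar\sigma_s^2 - \lambda A_t^{-1}\theta^\ast$ and controlling $\lVert \sum_{s} x_s \eta_s/\bar\sigma_s^2\rVert_{A_t^{-1}}$ by a self-normalized concentration inequality that, under \Cref{assump:cond symmetric,assump:sub-Gaussian}, holds with the same form since $\eta_s/\bar\sigma_s$ is conditionally symmetric and $1$-sub-Gaussian after the weighting (one only needs $\mathbb E[\exp(\lambda \eta_s/\bar\sigma_s)\mid\mathcal F_{s-1}]\le\exp(\lambda^2\sigma_s^2/(2\bar\sigma_s^2))\le\exp(\lambda^2/2)$). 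The third step is to condition additionally on the event from \Cref{thm:variance concentration final}(3) so that wherever the original proof writes $\sum_t \sigma_t^2$ it can equivalently be read as (a constant times) the true variance-proxy sum, and to collect the two high-probability events with a union bound, renaming $3\delta \to \delta$ so that all logarithmic blow-ups are absorbed into $\Otil$.

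The main obstacle I anticipate is the confidence-radius $\hat\beta_t$ in \Cref{eq:Weighted OFUL radius}: in \citet{zhou2021nearly} the factor $8\sqrt{d\ln(\cdot)\ln(4t^2/\delta)}$ is derived from a Freedman-type bound tailored to bounded increments, and one must check that the conditionally-symmetric sub-Gaussian version (obtained via \Cref{thm:Bernstein with common mean} or a Bernstein-for-sub-Gaussian martingale argument) produces a radius of the \emph{same asymptotic order} $\Otil(\sqrt d)$ rather than something that scales with $\sqrt{\sum_t \sigma_t^2}$ inside the per-step bound, which would degrade the regret. The resolution is exactly the weighting: because each increment is divided by $\bar\sigma_t^2 \ge \sigma_t^2$, the relevant ``variance budget'' in the martingale bound is $\sum_t \lVert x_t\rVert^2_{A_{t-1}^{-1}}\sigma_t^2/\bar\sigma_t^4 \le \sum_t \lVert x_t\rVert_{A_{t-1}^{-1}}^2/\bar\sigma_t^2$, which the weighted elliptical potential lemma bounds by $\Otil(d)$, so $\hat\beta_t = \Otil(\sqrt d)$ as desired. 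A secondary, more routine annoyance is bookkeeping the clipping $\bar\sigma_t = \max\{1/\sqrt d, \sigma_t\}$ when passing from $\sum_t \bar\sigma_t^2$ back to $\sum_t \sigma_t^2 + d\cdot(1/d)\cdot T$-type terms; but since the extra $1/\sqrt d$ contributions aggregate to the $\sqrt{dT}$ term that is already present in the statement, this is harmless. Once both good events are in force the remainder is a direct transcription of the \citet{zhou2021nearly} regret computation, so I would keep that part terse.
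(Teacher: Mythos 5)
Your high-level plan (keep the Zhou et al.\ regret computation intact and only re-derive the concentration steps under \Cref{assump:cond symmetric,assump:sub-Gaussian}) matches the paper, and your observation that the weighted noise $\eta_s/\bar\sigma_s$ is conditionally $1$-sub-Gaussian, with the variance budget controlled by the weighted elliptical potential lemma, is exactly the right reason the radius stays $\Otil(\sqrt d)$. However, the concrete tools you name for the crucial confidence-set step are the wrong ones. \texttt{Weighted OFUL} has no $\epsilon$-net and no empirical residuals in its confidence construction, so ``substitute \Cref{thm:Bernstein with common mean} and run the union bound over the $\epsilon$-net verbatim'' describes the \texttt{VOFUL2} mechanism, not this one; the empirical Bernstein bound (common mean, deviation controlled by $\sum_i(X_i-\bar X)^2$) does not produce the deterministic radius $\hat\beta_t$ of \Cref{eq:Weighted OFUL radius}. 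What actually has to be redone is Zhou et al.'s self-normalized bound (their Theorem 4.1, i.e.\ \Cref{lem:thm 4.1 from Zhou}), whose proof controls two martingales: the cross term $\sum_s \eta_s x_s^\trans Z_{s-1}^{-1}d_{s-1}/(1+w_s^2)$ and the \emph{squared-noise} term $\sum_s \eta_s^2 w_s^2/(1+w_s^2)$. The first is handled by the Freedman-type inequality for sub-Gaussian martingales (\Cref{thm:generalized Freedman}); the second is the step your proposal never addresses, and it is the one place where sub-Gaussianity of $\eta_s$ alone is not enough --- one needs the sub-exponential concentration of $\eta_s^2$ (\Cref{thm:sub-exponential}) fed into \Cref{thm:generalized Freedman}, as in the paper's analogs of Zhou et al.'s Lemmas B.3 and B.4. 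Without that, the claim that the radius ``holds with the same form'' is asserted rather than proved.

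A smaller point: your second patched ingredient, conditioning on $\sum_t \eta_t^2 \le 8\sum_t\sigma_t^2\ln\frac 2\delta$ via \Cref{thm:variance concentration final}, is unnecessary here. Unlike \texttt{VOFUL2}, the \texttt{Weighted OFUL} bound never passes through the realized noises: the regret comes out directly in terms of the known weights $\bar\sigma_t$, and $\bar\sigma_t^2\le \frac1d+\sigma_t^2$ deterministically, which is how the paper obtains the $\sqrt{dT}+d\sqrt{\sum_t\sigma_t^2}$ split with no extra good event. Including it is harmless but signals a conflation of the two plug-in analyses.
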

\begin{proof}[Proof Sketch]
We mainly follow the original proof by \citet{zhou2021nearly} and highlight the differences. We first highlight their Bernstein Inequality for vector-valued martingales also holds under our assumptions, as:
\begin{lemma}[{Analog of Theorem 4.1 from \citet{zhou2021nearly}}]\label{lem:thm 4.1 from Zhou}
Let $\{x_i\}_{i=1}^n$ be sequence of $d$-dimensional random vectoes such that $\lVert x_t\rVert_2\le L$. Let $\{\eta_i\}_{i=1}^n$ be a sequence of independent, symmetric and $\{\sigma_i^2\}_{i=1}^n$-sub-Gaussian random variables.
Let $r_t=\langle \theta^\ast,x_t\rangle+\eta_t$ for all $t\in [n]$. Set $Z_t=\lambda I+\sum_{s=1}^t x_sx_s^\trans$, $b_t=\sum_{s=1}^t r_sx_s$ and $\theta_t=Z_t^{-1}b_t$. Let $n$ be a stopping time finite a.s. Then, $\forall \delta\in (0,1)$,
\begin{equation*}
\Pr\left \{\left \lVert \sum_{s=1}^t x_s\eta_s\right \rVert_{Z_t^{-1}}\le \beta_t,\lVert \theta_t-\theta^\ast\rVert_{Z_t}\le \beta_t+\sqrt \lambda \lVert \theta^\ast\rVert_2,\forall t\in [n]\right \}\ge 1-\delta,
\end{equation*}

where $\beta_t=8\sigma\sqrt{d\ln (1+\frac{tL^2}{d\lambda})\ln \frac{8t^2}{\delta}}$.
\end{lemma}

The proof, which will be presented later, mainly follows from the idea of their proof of Theorem 4.1, except that we are using \Cref{thm:generalized Freedman}. Check the proof below for more details about this.

With this theorem, we can consequently conclude that the confidence construction is indeed valid by applying \Cref{lem:thm 4.1 from Zhou} to the sequence $\{\eta_t/\bar \sigma_t\}_{t\in [T]}$, which gives
\begin{equation*}
\lVert \hat \theta_t-\theta^\ast\rVert_{A_t}\le \hat \beta_t+\sqrt \lambda \lVert \theta^\ast\rVert_2\le \hat \beta_t+\sqrt \lambda,\quad \forall t\in [T],\quad \text{with probability }1-\delta,
\end{equation*}

where $\hat \beta_t=8\sqrt{d\ln (1+\frac{t}{d\lambda \bar \sigma_{\min,t}^2})\ln \frac{4t^2}{\delta}}$, as defined in \Cref{eq:Weighted OFUL radius}. Therefore, we can conclude their (B.19) from exactly the same argument, namely
\begin{equation*}
\mathcal R_T^{\mathcal F}\le 2\sum_{t=1}^T \min\left \{1,\bar \sigma_t(\hat \beta_{t-1}+\sqrt \lambda)\lVert x_t/\bar \sigma_t\rVert_{A_{t-1}^{-1}}\right \}.
\end{equation*}

Similar to their proof, define $\mathcal I_1=\{t\in [T]\mid \lVert x_t/\bar \sigma_t\rVert_{A_{t-1}^{-1}}\ge 1\}$ and $\mathcal I_2=[T]\setminus \mathcal I_1$, then we have (where $\bar \sigma_{\min}$ is the abbreviation of $\bar \sigma_{\min,T}=\min_{s=1}^T \bar \sigma_s$)
\begin{equation*}
\lvert \mathcal I_1\rvert=\sum_{t\in \mathcal I_1}\min\{1,\lVert x_t/\bar \sigma_t\rVert_{A_{t-1}^{-1}}^2\}
\le \sum_{t=1}^T\min\{1,\lVert x_t/\bar \sigma_t\rVert_{A_{t-1}^{-1}}^2\}\le 2d \ln \left (1+\frac{T}{d\lambda \bar \sigma_{\min}^2}\right ),
\end{equation*}

where the last step uses \Cref{lem:lem 11 from abbasi} and the fact that $\lVert x_t/\bar \sigma_t\rVert_2\le \bar \sigma_{\min}^{-1}$. Therefore, we are having the same Eq. (B.21) as theirs, which gives
\begin{equation*}
\mathcal R_T^{\mathcal F}\le 2\sqrt{2d\ln \left (1+\frac{T}{d\lambda \bar \sigma_{\min}^2}\right )}\sqrt{\sum_{t=1}^T (\hat \beta_{t-1}+\sqrt \lambda)^2\bar \sigma_t^2}+4d\ln \left (1+\frac{T}{d\lambda \bar \sigma_{\min}^2}\right ).
\end{equation*}

By the choice of $\bar \sigma_t=\max\{1/\sqrt d,\sigma_t\}$ and $\lambda=1$, we have
\begin{equation*}
\ln \left (1+\frac{T}{d\lambda \bar \sigma_{\min}^2}\right )\le \ln \left (1+\frac{T}{d}\right )=\Otil(1),
\end{equation*}

and that
\begin{equation*}
\hat \beta_t+\sqrt \lambda=\O\left (\sqrt{d\log T\log \frac T\delta}+1\right )=\Otil\left (\sqrt{d\log \frac 1\delta}\right ),
\end{equation*}

which gives
\begin{align*}
\mathcal R_T^{\mathcal F}
&=\Otil\left (d\sqrt{\sum_{t=1}^T \bar \sigma_t^2\log \frac 1\delta}\right )=\Otil\left (d\sqrt{\sum_{t=1}^T \left (\frac 1d+\sigma_t^2\right )\log \frac 1\delta}\right )\\
&=\Otil\left (\sqrt{dT\log \frac 1\delta}+d\sqrt{\sum_{t=1}^T \sigma_t^2\log \frac 1\delta}\right ),
\end{align*}

as claimed, while the second step uses $\bar \sigma_t^2=\min\{\frac 1d,\sigma_t^2\}\le \frac 1d+\sigma_t^2$.
\end{proof}

\begin{proof}[Proof of \Cref{lem:thm 4.1 from Zhou}]
Their original proof mainly use the following two auxiliary results: The first one is the well-known Freedman inequality \citep{freedman1975tail}, which is originally for bounded martingale difference sequences, while the second one is Lemma 11 from \citet{abbasi2011improved}. For the former one, from its variant for sub-Gaussian random variables (\Cref{thm:generalized Freedman}), we have:
\begin{corollary}\label{eq:Freedman from Zhou}
Suppose that $\{\xi_i\}_{i=1}^n$ is a sequence of zero-mean random variables where $\xi_i\sim \text{subG}(\sigma_i^2)$ for some sequence $\{\sigma_i\}_{i=1}^n$. Let $n$ be a stopping time finite a.s. Then for all $x,v>0$ and $\lambda>0$,
\begin{equation*}
    \Pr\left \{\exists 1\le k\le n:\sum_{i=1}^k\xi_i\ge x\wedge \sum_{i=1}^k V_i\le v^2\right \}\le \exp\left (-\frac{x^2}{2v^2}\right ).
\end{equation*}

Moreover, for any $\delta\in (0,1)$, with probability $1-\delta$, we have
\begin{equation*}
\sum_{i=1}^n \xi_i\le 2\sqrt{\sum_{i=1}^n\sigma_i^2\ln \frac 2\delta}.
\end{equation*}
\end{corollary}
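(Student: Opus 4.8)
The plan is to derive the statement as a direct specialization of the generalized Freedman inequality \Cref{thm:generalized Freedman} to sub-Gaussian increments. The approach has three short steps: (i) read off the moment-generating-function control parameters that the sub-Gaussian hypothesis supplies; (ii) substitute them into \Cref{thm:generalized Freedman} and optimize the free parameter $\lambda$ to obtain the first displayed bound; and (iii) specialize the radius $v$ to deduce the self-normalized ``moreover'' bound. To the extent there is any difficulty, it is that $\sum_i\sigma_i^2$ is a predictable, not deterministic, quantity, which is handled by a routine dyadic peeling argument.

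For steps (i)--(ii): since $\E[\xi_i\mid\mathcal F_{i-1}]=0$ and $\xi_i\mid\mathcal F_{i-1}\sim\text{subG}(\sigma_i^2)$, the definition of a conditionally sub-Gaussian variable (\Cref{assump:sub-Gaussian}) gives $\E[\exp(\lambda\xi_i)\mid\mathcal F_{i-1}]\le\exp(\tfrac12\lambda^2\sigma_i^2)$ for every $\lambda>0$. Hence \Cref{thm:generalized Freedman} applies with $f(\lambda)=\tfrac12\lambda^2$ and $V_i=\sigma_i^2$, yielding, for all $x,v,\lambda>0$,
\[
\Pr\Bigl\{\exists\,1\le k\le n:\ \sum_{i=1}^k\xi_i\ge x\ \wedge\ \sum_{i=1}^k\sigma_i^2\le v^2\Bigr\}\le\exp\bigl(-\lambda x+\tfrac12\lambda^2 v^2\bigr).
\]
The exponent $-\lambda x+\tfrac12\lambda^2 v^2$ is convex in $\lambda$ and minimized at $\lambda^\star=x/v^2>0$, where it equals $-x^2/(2v^2)$; choosing $\lambda=\lambda^\star$ removes the $\lambda$-dependence and gives exactly the first inequality of the corollary (so the ``for all $\lambda>0$'' there is vacuous once one optimizes).

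For step (iii): take $v^2=\sum_{i=1}^n\sigma_i^2$; since $\sigma_i^2\ge0$, the event $\sum_{i=1}^k\sigma_i^2\le v^2$ automatically holds for all $k\le n$, so the first inequality collapses to $\Pr\{\sum_{i=1}^n\xi_i\ge x\}\le\exp(-x^2/(2v^2))$, and the choice $x=2\sqrt{v^2\ln\frac2\delta}$ makes the right-hand side $\exp(-2\ln\frac2\delta)=\delta^2/4\le\delta$. When the $\sigma_i$ are deterministic this is a direct substitution; in general one must instead apply the first inequality along a dyadic grid $v^2\in\{2^{-j}V\}_{j\ge0}$ for a deterministic almost-sure bound $V\ge\sum_i\sigma_i^2$ (such a bound is available in the paper's applications, e.g.\ after the normalization $\eta_t\mapsto\eta_t/\bar\sigma_t$ used by \texttt{Weighted OFUL} every proxy is at most $1$, so $\sum_i\sigma_i^2\le n$), union-bound over the $O(\log V)$ grid points with failure probability $\delta/O(\log V)$ each, and on the good event pick the grid radius sandwiching the realized $\sum_i\sigma_i^2$. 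The resulting $\log$/$\log\log$ overhead is absorbed into $\Otil(\cdot)$, leaving the stated form up to constants. I expect this peeling to be the only step that is not a purely mechanical calculation.
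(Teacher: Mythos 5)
Your proof is correct and takes essentially the same route as the paper: it also applies \Cref{thm:generalized Freedman} with $f(\lambda)=\tfrac12\lambda^2$, $V_i=\sigma_i^2$ and the optimal $\lambda=x/v^2$, then gets the second bound by taking $v^2=\sum_{i=1}^n\sigma_i^2$ and $x=2v\sqrt{\ln\frac2\delta}$. Your additional dyadic-peeling discussion for predictable $\sigma_i^2$ is extra care the paper does not take (it substitutes the sum directly, and in its applications a deterministic almost-sure bound on $\sum_i\sigma_i^2$ is what is actually plugged in), so it is a reasonable refinement but not part of, nor needed for, the paper's argument.
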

\begin{proof}
The first conclusion is done by applying \Cref{thm:generalized Freedman} optimally with $f(\lambda)=\frac 12 \lambda^2$, $V_i=\sigma_i^2$ and $\lambda=\frac{x}{v^2}$. The second conclusion is consequently proved by taking $v^2=\sum_{i=1}^n \sigma_i^2$ and $x=2v\sqrt{\ln \frac 2\delta}$.
\end{proof}

For their second auxiliary lemma \citep[Lemma 11]{abbasi2011improved}, one can see that the original lemma indeed holds for sub-Gaussian random variables. Therefore, we still have the following lemma:
\begin{proposition}[{\citet[Lemma 11]{abbasi2011improved}}]\label{lem:lem 11 from abbasi}
Let $\{x_t\}_{t=1}^T$ be a sequence in $\mathbb R^d$ and define $V_t=\lambda I+\sum_{s=1}^t x_sx_s^\trans$ for some $\lambda>0$. Then, if we have $\lVert x_t\rVert_2\le L$ for all $t\in [T]$, then
\begin{equation*}
\sum_{t=1}^T \min\left \{1,\lVert x_t\rVert_{V_{t-1}^{-1}}^2\right \}\le 2 \log \frac{\det(V_t)}{\det(\lambda I)}\le 2d\ln \frac{d\lambda+TL^2}{d\lambda}.
\end{equation*}
\end{proposition}

Recall the definition that $Z_t=\lambda I+\sum_{s=1}^t x_sx_s^\trans$. Further define $d_t=\sum_{s=1}^t x_i\eta_i$, $w_t=\lVert x_t\rVert_{Z_{t-1}^{-1}}$ and let $\mathcal E_t$ be the event that $\lVert d_s\rVert_{Z_{s-1}^{-1}}\le \beta_s$ for all $s\le t$. They proved the following lemma, which still applies to our case:
\begin{lemma}[{Analog of Lemma B.3 from \citet{zhou2021nearly}}]\label{lem:lem B.3 from Zhou}
With probability $1-\frac \delta 2$, with the definitions of $x_t$ and $\eta_t$ in \Cref{lem:thm 4.1 from Zhou}, the following inequality holds for all $t\ge 1$:
\begin{equation*}
    \sum_{s=1}^t \frac{2\eta_s x_s^\trans Z_{s-1}^{-1}d_{s-1}}{1+w_s^2}\mathbbm 1[\mathcal E_{s-1}]\le \frac 34\beta_t^2.
\end{equation*}
\end{lemma}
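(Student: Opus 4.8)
The plan is to recognize the summands as a martingale difference sequence and apply the sub-Gaussian Freedman inequality (\Cref{eq:Freedman from Zhou}), after controlling the predictable quadratic variation deterministically via the elliptical-potential lemma (\Cref{lem:lem 11 from abbasi}).

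First I would fix $t$ and set $\xi_s \triangleq \frac{2\eta_s x_s^\trans Z_{s-1}^{-1}d_{s-1}}{1+w_s^2}\mathbbm 1[\mathcal E_{s-1}]$, writing $\xi_s = c_s\eta_s$ with $c_s \triangleq \frac{2 x_s^\trans Z_{s-1}^{-1}d_{s-1}}{1+w_s^2}\mathbbm 1[\mathcal E_{s-1}]$. The factor $c_s$ is $\mathcal F_{s-1}$-measurable (the action $x_s$ is predictable, and $Z_{s-1}$, $d_{s-1}$, $w_s$, $\mathcal E_{s-1}$ are all determined by history through round $s-1$), so $\E[\xi_s\mid\mathcal F_{s-1}] = c_s\E[\eta_s\mid\mathcal F_{s-1}] = 0$; moreover, since a deterministic rescaling by $c_s$ of a $\sigma^2$-sub-Gaussian variable is $c_s^2\sigma^2$-sub-Gaussian, we get $\xi_s\mid\mathcal F_{s-1}\sim\text{subG}(V_s)$ with $V_s = c_s^2\sigma^2$. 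This is exactly the hypothesis needed for \Cref{eq:Freedman from Zhou}.

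The main work is bounding $\sum_{s\le t}V_s$. On $\mathcal E_{s-1}$ I have $\lVert d_{s-1}\rVert_{Z_{s-1}^{-1}}\le\beta_{s-1}\le\beta_t$ (using that $\beta_r$ is nondecreasing in $r$), so Cauchy--Schwarz gives $\lvert x_s^\trans Z_{s-1}^{-1}d_{s-1}\rvert\le w_s\lVert d_{s-1}\rVert_{Z_{s-1}^{-1}}\le w_s\beta_t$, hence $V_s\le 4\sigma^2\beta_t^2\,\frac{w_s^2}{(1+w_s^2)^2}\le 4\sigma^2\beta_t^2\min\{1,w_s^2\}$. Summing and applying \Cref{lem:lem 11 from abbasi} (with $\lVert x_s\rVert_2\le L$) yields $\sum_{s\le t}V_s\le 8\sigma^2\beta_t^2 d\ln(1+tL^2/(d\lambda))=:v_t^2$, a bound that holds surely. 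Because $\beta_t^2 = 64\sigma^2 d\ln(1+tL^2/(d\lambda))\ln(8t^2/\delta)$, this is equivalent to $v_t^2 = \beta_t^4/(8\ln(8t^2/\delta))$, so \Cref{eq:Freedman from Zhou} applied with deviation $x = \tfrac34\beta_t^2$ gives $\Pr\{\sum_{s\le t}\xi_s\ge\tfrac34\beta_t^2\}\le\exp(-(3\beta_t^2/4)^2/(2v_t^2)) = (8t^2/\delta)^{-9/4}$. Since $\delta\le1$ these probabilities are summable — bounded by $\delta\,t^{-9/2}8^{-9/4}$ with $8^{-9/4}\zeta(9/2)<\tfrac12$ — so a union bound over all $t\ge1$ yields the claimed uniform bound with probability $1-\delta/2$.

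I expect the main obstacle to be the uniform-in-$t$ control rather than any single estimate: both the target $\tfrac34\beta_t^2$ and the variance proxy $v_t^2$ grow with $t$, so one cannot simply apply a single concentration bound once. The resolution is that $\beta_t$ is calibrated so that $x^2/(2v_t^2)$ equals $\tfrac94\ln(8t^2/\delta)$, a deviation level large enough to make the per-$t$ failure probabilities summable; and the indicator $\mathbbm 1[\mathcal E_{s-1}]$ is precisely what lets me convert the circular dependence of $d_{s-1}$ on the noise into the clean deterministic bound $\lvert x_s^\trans Z_{s-1}^{-1}d_{s-1}\rvert\le w_s\beta_t$. The one place to be careful is that the sub-Gaussian Freedman corollary only bounds the event intersected with $\{\sum V_s\le v^2\}$, so I must make sure the elliptical-potential bound on $\sum_{s\le t}V_s$ is unconditional (which it is, since the indicator forces $V_s=0$ off $\mathcal E_{s-1}$).
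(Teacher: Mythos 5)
Your proposal is correct and follows essentially the same route as the paper: the summand is a martingale difference whose $\mathcal F_{s-1}$-measurable multiplier of $\eta_s$ is bounded on $\mathcal E_{s-1}$ by $2w_s\beta_t/(1+w_s^2)$, the resulting sub-Gaussian variance proxies are summed via \Cref{lem:lem 11 from abbasi} to $8\sigma^2\beta_t^2 d\ln(1+tL^2/(d\lambda))$, and \Cref{eq:Freedman from Zhou} plus a union bound over $t$ gives the $\frac34\beta_t^2$ bound with probability $1-\delta/2$. The only (immaterial) difference is that you invoke the tail form of the Freedman corollary directly at deviation $\frac34\beta_t^2$ and sum $(8t^2/\delta)^{-9/4}$, whereas the paper uses the fixed-confidence form with per-$t$ failure probability $\delta/(4t^2)$ followed by an AM--GM step; both calibrations land on the same conclusion.
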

\begin{proof}
We only need to verify whether we can apply our Freedman's inequality to $\ell_s\triangleq \frac{2\eta_sx_s^\trans Z_{s-1}^{-1} d_{s-1}}{1+w_s^2}\mathbbm 1[\mathcal E_{s-1}]$. It is obvious that $\E[\ell_s\mid \mathcal F_{s-1}]=0$. Moreover, from the following inequality (which is their Eq. (B.3))
\begin{equation*}
\lvert \ell_s\rvert\le \frac{2\lVert x_s\rVert_{Z_{s-1}^{-1}}}{1+w_s^2}\lVert d_{s-1}\rVert_{Z_{s-1}^{-1}}\mathbbm 1[\mathcal E_{s-1}]\le \frac{2w_i}{1+w_i^2}\beta_{s-1}\le \min\{1,2w_i\}\beta_{i-1},
\end{equation*}

and the fact that $\eta_s\mid \mathcal F_{s-1}\sim \text{subG}(\sigma^2)$, we have $\ell_s\mid \mathcal F_{s-1}\sim \text{subG}((\sigma\beta_{s-1}\min\{1,2w_s\})^2)$. Denote the sub-Gaussian parameter as $\tilde \sigma_s$ for simplicity. We have
\begin{equation*}
\sum_{s=1}^t \tilde \sigma_s^2\le \sigma^2 \beta_t^2\sum_{s=1}^t (\min\{1,2w_s\})^2\le 4\sigma^2 \beta_t^2\sum_{s=1}^t \min\{1,w_s^2\}\le 8\sigma^2 \beta_t^2d\ln \left (1+\frac{tL^2}{d\lambda}\right ),
\end{equation*}

where the first inequality is due to the non-decreasing property of $\{\beta_s\}$ and the last one is due to \Cref{lem:lem 11 from abbasi}. Therefore, from our Freedman's inequality (\Cref{eq:Freedman from Zhou}), we can conclude that with probability $1-\delta/(4t^2)$,
\begin{align*}
\sum_{s=1}^t \ell_s&\le 2\sqrt{\sum_{s=1}^t8\sigma^2\beta_t^2d\ln \left (1+\frac{tL^2}{d\lambda}\right )\ln \frac{8t^2}{\delta}}\\
&\le \frac{\beta_t^2}{2}+16\sigma^2 d\ln \left (1+\frac{tL^2}{d\lambda}\right )\ln \frac{8t^2}{\delta}=\frac 34 \beta_t^2.
\end{align*}

Taking a union bound over $t$ and make use of the fact that $\sum_{t=1}^\infty t^{-2}<2$ completes the proof.
\end{proof}

We also have the following lemma:
\begin{lemma}[{Analog of Lemma B.4 from \citet{zhou2021nearly}}]
Under the same conditions as the previous lemma, with probability $1-\frac \delta 2$, we will have the following for all $\ge 1$ simultaneously:
\begin{equation*}
\sum_{s=1}^t \frac{\eta_s^2w_s^2}{1+w_s^2}\le \frac 14\beta_t^2.
\end{equation*}
\end{lemma}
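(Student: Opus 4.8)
Here is a plan for proving the final lemma (the analog of Lemma~B.4 of \citet{zhou2021nearly}). The plan is to follow \citeauthor{zhou2021nearly}'s original argument essentially verbatim, changing only the single step that exploits boundedness of the noise. First I would reduce to a variance-type sum: since $\frac{w_s^2}{1+w_s^2}\le c_s\triangleq\min\{1,w_s^2\}=\min\{1,\lVert x_s\rVert_{Z_{s-1}^{-1}}^2\}$, it suffices to bound $\sum_{s=1}^t c_s\eta_s^2$, where each $c_s$ is $\mathcal F_{s-1}$-measurable and lies in $[0,1]$. Setting $X_s\triangleq\sqrt{c_s}\,\eta_s$, one has $\E[X_s\mid\mathcal F_{s-1}]=0$, $X_s\mid\mathcal F_{s-1}\sim\text{subG}(c_s\sigma^2)$, and $\E[X_s^2\mid\mathcal F_{s-1}]=c_s\E[\eta_s^2\mid\mathcal F_{s-1}]\le c_s\sigma^2$. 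Thus $\sum_{s\le t}c_s\eta_s^2=\sum_{s\le t}X_s^2$ is exactly the quantity controlled by our sub-Gaussian variance concentration \Cref{thm:variance concentration raw} (which itself is assembled from \Cref{thm:sub-exponential,thm:generalized Freedman}). Applying it with a per-$t$ failure probability $\delta/(8t^2)$ and a union bound over $t\ge1$ (using $\sum_{t\ge1}t^{-2}<2$) would give, simultaneously for all $t$, $\sum_{s\le t}c_s\eta_s^2\le\bigl(1+\O(\log\tfrac t\delta)\bigr)\sigma^2\sum_{s\le t}c_s$ with probability at least $1-\delta/2$.

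Next I would bound $\sum_{s\le t}c_s=\sum_{s\le t}\min\{1,\lVert x_s\rVert_{Z_{s-1}^{-1}}^2\}\le 2d\ln(1+\tfrac{tL^2}{d\lambda})$ by the elliptical-potential lemma \Cref{lem:lem 11 from abbasi}, which holds verbatim for conditionally sub-Gaussian data. Combining the two bounds yields $\sum_{s\le t}c_s\eta_s^2=\O\bigl(\sigma^2 d\ln(1+\tfrac{tL^2}{d\lambda})\log\tfrac t\delta\bigr)$, which has exactly the same form as $\tfrac14\beta_t^2=16\,\sigma^2 d\ln(1+\tfrac{tL^2}{d\lambda})\ln\tfrac{8t^2}{\delta}$ from \Cref{lem:thm 4.1 from Zhou}. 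The remaining work is purely numerical: one checks that the constant produced above, after dividing out $\sigma^2 d\ln(1+\tfrac{tL^2}{d\lambda})\ln\tfrac{8t^2}{\delta}$, stays below $16$, using that $\ln\tfrac{8t^2}{\delta}$ is bounded below; the constant $8$ built into $\beta_t$ (equivalently the $64$ in $\beta_t^2$) was chosen precisely to leave this slack.

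The one genuinely new ingredient — and the step I expect to be the main obstacle — is handling the \emph{sub-exponential} tail of $\eta_s^2$: in \citeauthor{zhou2021nearly}'s setting $\eta_s$ is bounded, so $\eta_s^2\le R^2$ deterministically and an ordinary bounded-difference Bernstein/Freedman bound applies directly to $\sum c_s\eta_s^2$, whereas under \Cref{assump:sub-Gaussian} one must route through the sub-exponential moment bound \Cref{thm:sub-exponential} and the self-bounding two-regime optimization inside \Cref{thm:generalized Freedman} (packaged as \Cref{thm:variance concentration raw}) and then re-verify that the final constants still fit under $\tfrac14\beta_t^2$. Once this lemma and the previous one are in place, the rest of the derivation — the martingale decomposition of $\langle x_t,\mu_t\rangle$, the split of $[T]$ into $\mathcal I_1$ and $\mathcal I_2$, and the elliptical-potential summation — carries over from the bounded-noise proof without change.
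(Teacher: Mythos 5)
Your proposal is correct and follows essentially the same route as the paper's proof: bound the weight by $\min\{1,w_s^2\}$, control the conditional-mean part via the elliptical potential lemma (\Cref{lem:lem 11 from abbasi}), handle the sub-exponential fluctuation of $\eta_s^2$ via the Freedman/Honorio--Jaakkola machinery, and union bound over $t$ using $\sum_t t^{-2}<2$. The only (cosmetic) difference is that you invoke the packaged \Cref{thm:variance concentration raw} applied to $\sqrt{c_s}\,\eta_s$, whereas the paper applies \Cref{thm:generalized Freedman} directly to the centered weighted sequence $\ell_s$ — the same ingredients either way, and your constant check is the same marginal-but-fine calculation the paper performs.
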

\begin{proof}
We still need to apply our Freedman's inequality (\Cref{eq:Freedman from Zhou}) to
\begin{equation*}
\ell_s=\E\left [\frac{\eta_s^2w_s^2}{1+w_s^2}\middle \vert \mathcal F_{s-1}\right ]-\frac{\eta_s^2w_s^2}{1+w_s^2}.
\end{equation*}

As $\eta_s\mid \mathcal F_{s-1}\sim \text{subG}(\sigma^2)$, $\eta_s^2\mid \mathcal F_{s-1}$ is a sub-exponential random variable (\Cref{thm:sub-exponential}) such that
\begin{equation*}
\E\left [\exp(\lambda(\eta_s^2-\E[\eta_s^2]))\middle \vert \mathcal F_{s-1}\right ]\le \exp(16\lambda^2 \sigma^4),\quad \forall \lvert \lambda\rvert\le \frac{1}{4\sigma^2},
\end{equation*}

which consequently means
\begin{equation*}
\E\left [\exp(\lambda\ell_s)\middle \vert \mathcal F_{s-1}\right ]\le \exp\left (16\lambda^2\sigma^4(\min\{1,w_s^2\})^2\right ),\quad \forall \lvert \lambda\rvert\le \frac{1}{4\sigma^2}.
\end{equation*}

where the second step used the fact that $\frac{w_s^2}{1+w_s^2}\le \min\{1,w_s^2\}$ as we used in the proof of \Cref{lem:lem B.3 from Zhou}. Again by \Cref{lem:lem 11 from abbasi}, we can conclude that
\begin{equation*}
\sum_{s=1}^t \sigma^4(\min\{1,w_s^2\})^2\le 2\sigma^2d\ln \left (1+\frac{tL^2}{d\lambda}\right ).
\end{equation*}

Then, as we did in the proof of \Cref{thm:variance concentration raw}, we will apply \Cref{thm:generalized Freedman} to the martingale difference sequence $\{\ell_s\}_{s=1}^t$ with $V_s=\sigma^4(\min\{1,w_s^2\})^2$, $f(\lambda)=16\lambda^2$ for $\lambda < \frac{1}{4\sigma^2}$ and $f(\lambda)=\infty$ otherwise. Then for all $x,v>0$ and $\lambda\in (0,\frac{1}{\sigma^2})$, we have
\begin{equation*}
    \Pr\left \{\sum_{s=1}^t \ell_s>x\wedge \sqrt{\sum_{i=1}^n \sigma_i^4 (\min\{1,w_s^2\})^2}\le v\right \}\le \exp \left (-\lambda x+16\lambda^2 v^2\right ).
\end{equation*}

Picking $v^2=\sum_{s=1}^t \sigma_i^4 (\min\{1,w_s^2\})^2$ and $x=4\sqrt 2 v\sqrt{\ln \frac 2\delta}$ gives
\begin{align*}
    \Pr\left \{\sum_{s=1}^t \ell_s>4\sqrt{2\sum_{s=1}^t \sigma_i^4(\min\{1,w_s^2\})^2\ln \frac 2\delta}\right \}\le \exp \left (-\frac{x^2}{32v^2}\right )=\frac \delta 2,
\end{align*}

where $\lambda$ is set to $\frac{x}{32v^2}<\frac{1}{\sigma^2}$. Hence, with probability $1-\frac{\delta}{4t^2}$, we indeed have
\begin{equation*}
\sum_{s=1}^t \ell_s\le 4\sqrt{2\sum_{s=1}^t \sigma_i^4(\min\{1,w_s^2\})^2\ln \frac 2\delta}\le 8\sqrt{2\sigma^2d\ln \left (1+\frac{tL^2}{d\lambda}\right )\ln \frac{8t^2}{\delta}}.
\end{equation*}

Moreover, due to \Cref{lem:lem 11 from abbasi}, we have
\begin{equation*}
\sum_{s=1}^t \E\left [\frac{\eta_s^2w_s^2}{1+w_s^2}\middle \vert \mathcal F_{s-1}\right ]\le \sigma^2\sum_{s=1}^t \frac{w_s^2}{1+w_s}\le 2\sigma^2 d \ln \left (1+\frac{tL^2}{d\lambda}\right ),
\end{equation*}

which means, as $\frac{\eta_s^2w_s^2}{1+w_s^2}\le \ell_s+\E[\frac{\eta_s^2w_s^2}{1+w_s^2}]$,
\begin{equation*}
\sum_{s=1}^t \frac{\eta_s^2w_s^2}{1+w_s^2}\le 2\sigma^2 d \ln \left (1+\frac{tL^2}{d\lambda}\right )+8\sqrt{2\sigma^2d\ln \left (1+\frac{tL^2}{d\lambda}\right )\ln \frac{8t^2}{\delta}}\le \frac 14 \beta_t^2.
\end{equation*}

Taking a union bound over all $t$ and again making use of the fact that $\sum_{t=1}^\infty t^{-2}<2$ gives our conclusion.
\end{proof}

Therefore, as long as their Lemmas B.3 and B.4 still hold, we can conclude exactly the same conclusion from their derivation. One may refer to their proof for the details.
\end{proof}

\section{Auxilliary Lemmas}

\begin{lemma}[Self Bounding Inequality, {\citet[Lemma 38]{efroni2020exploration}}]\label{lem:self-bounding property from x<a+bsqrt(x)}
Let $0\le x\le a+b\sqrt x$ where $a,b,x\ge 0$, then we have
\begin{equation*}
    x\le 4a+2b^2.
\end{equation*}
\end{lemma}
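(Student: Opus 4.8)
The plan is a short self-contained computation, since the inequality is purely elementary. First I would introduce $u \triangleq \sqrt{x}$, which is well-defined because $x \ge 0$; the hypothesis $x \le a + b\sqrt{x}$ then becomes the quadratic inequality $u^2 - b u - a \le 0$. Since $u \mapsto u^2 - bu - a$ is an upward-opening parabola with discriminant $b^2 + 4a \ge 0$, its non-positivity set is the interval between the two roots $\tfrac{b \pm \sqrt{b^2+4a}}{2}$; in particular $u \le \tfrac{b + \sqrt{b^2 + 4a}}{2}$, and only this upper root is needed.

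Next I would square this bound and estimate crudely. Using $(p+q)^2 \le 2p^2 + 2q^2$ with $p = b$ and $q = \sqrt{b^2+4a}$,
\begin{equation*}
x \;=\; u^2 \;\le\; \frac{\bigl(b + \sqrt{b^2+4a}\bigr)^2}{4} \;\le\; \frac{2b^2 + 2(b^2+4a)}{4} \;=\; b^2 + 2a \;\le\; 2b^2 + 4a,
\end{equation*}
which is exactly the claimed bound. An even shorter route, which I would probably use instead so as to avoid mentioning roots at all, is to apply AM--GM directly in the form $b\sqrt{x} \le \tfrac12 b^2 + \tfrac12 x$ (equivalently $(\sqrt x - b)^2 \ge 0$), substitute into the hypothesis to get $x \le a + \tfrac12 b^2 + \tfrac12 x$, and rearrange to $x \le 2a + b^2 \le 4a + 2b^2$.

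There is no genuine obstacle here; the only points worth a moment's attention are that the assumptions $x \ge 0$ and $b \ge 0$ are actually used (to make $\sqrt x$ meaningful and to select the larger root as the operative one), and that the constants $4$ and $2$ in the conclusion carry deliberate slack so the statement reads cleanly rather than tightly — the computation above in fact yields the sharper $x \le 2a + b^2$.
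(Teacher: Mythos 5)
Your first argument is exactly the paper's proof: view the hypothesis as a quadratic inequality in $\sqrt{x}$, bound $\sqrt{x}$ by the upper root, and square using $(p+q)^2\le 2p^2+2q^2$; it is correct, and your version even avoids the paper's harmless inflation of $a$ to $4a$ inside the root, giving the sharper intermediate bound $x\le 2a+b^2$. The AM--GM shortcut $b\sqrt{x}\le \tfrac12 b^2+\tfrac12 x$ you mention is also valid and slightly cleaner, but it is a cosmetic variant of the same elementary estimate rather than a genuinely different approach.
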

\begin{proof}
As $x-b\sqrt x-a\le 0$, we have
\begin{equation*}
    \sqrt x\le \frac b2+\sqrt{\frac 14 b^2+4a}\le \frac b2+\sqrt{\frac{b^2}{4}}+\sqrt{4a}=b+2\sqrt a
\end{equation*}

from the fact that $\sqrt{a+b}\le \sqrt a+\sqrt b$. As $\sqrt x\ge 0$, we have
\begin{equation*}
    x\le (b+2\sqrt a)^2\le 2b^2+4a
\end{equation*}

due to the relation that $(a+b)^2\le 2a^2+2b^2$.
\end{proof}

\end{document}